\theoremstyle{plain}
\newtheorem{theorem}{Theorem}[section]
\newtheorem{lemma}[theorem]{Lemma}
\newtheorem{corollary}[theorem]{Corollary}
\theoremstyle{definition}
\newtheorem{assumption}[theorem]{Assumption}
\theoremstyle{remark}
\newtheorem{finding}[theorem]{Finding}
\definecolor{Gray}{gray}{0.9}
\newtcolorbox{mycolorbox}{
            colback=Cerulean!30,  %
            boxrule=0.5pt,      %
}
\DeclareRobustCommand\onedot{\futurelet\@let@token\@onedot}
\def\@onedot{\ifx\@let@token.\else.\null\fi\xspace}
\def\eg{\emph{e.g}\onedot} 
\def\ie{\emph{i.e}\onedot}
\def\wrt{w.r.t\onedot} 
 \def\SOUL@hlpreamble{%
 \setul{}{3.2ex}%
 \let\SOUL@stcolor\SOUL@hlcolor
 \SOUL@stpreamble
 }
\definecolor{difcolor}{RGB}{204, 0, 204}
\DeclareRobustCommand{\fedavg}
{\textsc{FedAvg}}
\DeclareRobustCommand{\fedavgm}
{\textsc{FedAvgM}}
\DeclareRobustCommand{\fedcm}
{\textsc{FedCM}}
\DeclareRobustCommand{\cifar}[1]
{\textsc{Cifar-#1}}
\DeclareRobustCommand{\resnet}
{\textsc{ResNet-20}}
\DeclareRobustCommand{\lenet}
{\textsc{CNN}}
\DeclareRobustCommand{\ghb}
{\textsc{GHBM}}
\DeclareRobustCommand{\S}[1]
{\mathcal{S}^{#1}}
\DeclareRobustCommand{\normsq}[1]
{\left\|#1\right\|^2}
\crefname{equation}{Eq.}{Eq.}
\Crefname{equation}{Equation}{Equations}
\crefname{figure}{Fig.}{Figs.}
\Crefname{figure}{Figure}{Figures}
\crefname{table}{Tab.}{Tabs.}
\Crefname{table}{Table}{Tables}
\crefname{chapter}{Chap.}{Chaps.}
\Crefname{chapter}{Chapter}{Chapters}
\crefname{section}{Sec.}{Secs.}
\Crefname{section}{Section}{Sections}
\crefname{subsection}{Sec.}{Secs.}
\Crefname{subsection}{Section}{Sections}
\crefname{subsubsection}{Sec.}{Secs.}
\Crefname{subsubsection}{Section}{Sections}
\crefname{paragraph}{Par.}{Pars.}
\Crefname{paragraph}{Paragraph}{Paragraphs}
\crefname{prob}{Prob.}{Probs.}
\Crefname{prob}{Problem}{Problems}
\crefname{property}{Prop.}{Props.}
\Crefname{property}{Property}{Properties}
\crefname{constr}{Constraint}{Constraints} %
\Crefname{constr}{Constraint}{Constraints}
\crefname{algocf}{Algorithm}{Algorithms}
\Crefname{algocf}{Algorithm}{Algorithms}
\crefname{algorithm}{Algorithm}{Algorithms}
\Crefname{algorithm}{Algorithm}{Algorithms}
\crefname{hyp}{Hyp.}{Hyps.}
\Crefname{hyp}{Hypothesis}{Hypothesis}
\crefname{assumption}{Assumption}{Assumptions}
\Crefname{assumption}{Assumption}{Assumptions}
\crefname{lem}{Lem.}{Lems.}
\Crefname{lem}{Lemma}{Lemma}
\crefname{prop}{Prop.}{Props.}
\Crefname{prop}{Proposition}{Propositions}
\crefname{theorem}{Thm.}{Thms.}
\Crefname{theorem}{Thm.}{Thms.}
\begin{document}

\title{On the Limits of Momentum in \\ Decentralized and Federated Optimization}

\author{\IEEEauthorblockN{Riccardo Zaccone\IEEEauthorrefmark{1}\thanks{\IEEEauthorrefmark{1} Corresponding author},
Sai Praneeth Karimireddy \IEEEauthorrefmark{2} and Carlo Masone \IEEEauthorrefmark{3}}\\
\IEEEauthorblockA{\IEEEauthorrefmark{1}\IEEEauthorrefmark{2}Department of Computer and Control Engineering,
Polytechnic of Turin\\
\IEEEauthorrefmark{3}Thomas Lord Department of Computer Science, USC Viterbi School of Engineering\\
Email: \IEEEauthorrefmark{1}riccardo.zaccone@polito.it,
\IEEEauthorrefmark{2}karimire@usc.edu,
\IEEEauthorrefmark{3}carlo.masone@polito.it
}}

\maketitle

\begin{abstract}
Recent works have explored the use of momentum in local methods to enhance distributed SGD.
This is particularly appealing in Federated Learning (FL), where momentum intuitively appears as a solution to mitigate the effects of statistical heterogeneity.
Despite recent progress in this direction, it is still unclear if momentum can guarantee convergence under unbounded heterogeneity in decentralized scenarios, where only some workers participate at each round.
In this work we analyze momentum under cyclic client participation, and theoretically prove that it remains inevitably affected by statistical heterogeneity. Similarly to SGD, we prove that decreasing step-sizes do not help either: in fact, any schedule decreasing faster than $\Theta\left(1/t\right)$ leads to convergence to a constant value that depends on the initialization and the heterogeneity bound. 
Numerical results corroborate the theory, and deep learning experiments confirm its relevance for realistic settings.
\end{abstract}
\begin{IEEEkeywords}
Federated Learning, Distributed Learning, Momentum
\end{IEEEkeywords}

\section{Introduction}

Modern deep learning applications demand intensive training on large amount of data, often distributed across decentralized silos or user personal devices.
To address such system constraints and comply with data regulations, learning algorithms have evolved towards more advanced and flexible systems that enable decentralized training at a global scale.
In such systems, not all workers participate at each training step, due to local faults, network issues or simply temporary unavailability. Moreover, they cannot usually exchange their data, either because of efficiency or privacy concerns. These are the main premises of Federated Learning (FL), a paradigm focused on privacy-preserving training from decentralized data.
Algorithms of this kind usually consist of an iterative two-step process involving 1) local training at client-side, each on its own private data, and 2) global optimization at the server, using aggregated local updates.
While this scheme promotes efficiency by looser synchronization, \textit{statistical heterogeneity} among clients' data and \textit{partial client participation} expose the optimization to \textit{client drift} and biased server updates.

Aiming for an effective solution to these problems, research has recently shifted towards extending momentum \cite{polyak1964heavyball} to distributed algorithms.
For example, a plethora of momentum-based FL algorithms have been proposed to overcome the adverse effects of data heterogeneity \cite{hsu2019measuring,reddi2020FedOpt,xu2021fedcm,Ozfatura2021FedADC,liu2023enhance,kim2024communication,cheng2024momentum,zaccone2025ghbm}.
Similarly, momentum is appealing in distributed learning to reduce the overall communication overhead \cite{wang2020SlowMo}, and recently has been scaled up to more decentralized environments \cite{douillard2024diloco}.
However, on a theoretical level, we only have a partial understanding of how momentum affects convergence in a decentralized regimen. \cite{cheng2024momentum} proved that momentum can converge under unbounded heterogeneity when all clients participate at each round (\textit{full participation}).
\cite{zaccone2025ghbm} went a step further, proposing a novel Generalized Heavy-Ball Momentum (\ghb{}) formulation that achieves the same convergence guarantees but with a more general \textit{cyclic partial participation} assumption. Yet, it is unclear whether the same result can be further extended to classical momentum under the same cyclic partial participation assumption and without bounded heterogeneity.

This work provides a clear answer to this question: \textit{can (classical) momentum enable convergence under unbounded heterogeneity in decentralized settings with partial participation?}

The answer is negative: even with (classical) momentum, the convergence rate relies on the heterogeneity bound.
This further confirms that \ghb{} \cite{zaccone2025ghbm} is, to the best of our knowledge, the only momentum-based distributed algorithm circumventing this limitation.

\noindent \textbf{Contributions.} We summarize our main results below. 
\begin{itemize}[noitemsep,nolistsep,leftmargin=*]
    \item We formally prove that, under cyclic sampling of clients, momentum does not eliminate the effect of data heterogeneity - a well recognized problem in decentralized and federated learning. 
    \item We further consider decreasing step-sizes, revealing that any schedule decaying faster than $\Theta(1/t)$ leads to convergence to a constant depending on the initialization and on the heterogeneity bound. 
    \item We validate the theory with numerical results on our theoretical problem, and extend the experimentation to deep learning problems, showing the relevance of our findings for realistic scenarios.
\end{itemize}
\IEEEpubidadjcol{}
\section{Related Works}
\label{sec:related}
Gradient Descent (GD) and its variants  have long been objective of study in the context of finite-sum optimization problems.
Restricting the gradient calculation to single function components (\ie{} a small subset of data) at each iteration, those methods trade off noisy updates for computational efficiency.
Most of the analyses address SGD or shuffling gradient methods \cite{safran2020howgood, koloskova2024on,liu2024onthelastiterate}.
\cite{Jentzen2020lower} provides sharp lower bounds on SGD for decreasing step-sizes, while \cite{nhuyen2019tight} prove dimension-independent lower bounds over all possible sequences of diminishing step-sizes. The recent work of \cite{kim2025incremental} studies the convergence rate of IGD at small iteration count.

While in all cases an heterogeneity bound is necessary, the above works consider algorithms \textit{without} momentum.
Since it has been proved that momentum has a variance reduction effect \cite{liu2020SGDM}, it is not clear i) if the fundamental reliance on the heterogeneity remains even with momentum, and ii) if decreasing step-sizes play a role.
In this work we analyze the simplest setting in which momentum could intuitively bring an advantage \wrt{} heterogeneous objectives: as we show, this corresponds to an instance of the IGD algorithm \textit{with} momentum.

\section{The Effect of Heterogeneity on Momentum}
We study the effect of momentum in heterogeneous settings by considering a minimal setup with two heterogeneous clients.
Our analysis is based on modeling the algorithm dynamics as a discrete-time linear system, and it reveals a clear decomposition: the \textit{zero-input response} captures objectives shared by all clients, while the \textit{zero-state response} isolates heterogeneous ones.
This formulation unveils the source of convergence limitations and the role of heterogeneity in the system’s behavior.
\subsection{Preliminaries}
\paragraph{Notation}
We use $T \in \mathbb{N}$ to denote total number of iterations of the algorithms, with $[T]$ representing the set $\{1,2,...,T\}$ and $t\in [T]$ the $t$-th iteration.
We denote as $f(\theta)$ the objective function parametrized by model parameters $\theta \in \mathbb{R}^d$, where $d$ is the dimensionality of the model.
We indicate with $\S{}$ the set of all clients and with $\S{t} \subset \S{}$ the ones active at $t$-th iteration. 
Throughout the paper, to express the asymptotic growth rate of the convergence rates, we use $\mathcal{O}, \Theta$ and $\Omega$ to respectively indicate an upper, exact bound and a lower bound, with symbols hiding constant factors.
\paragraph{Setting.}

We consider a distributed learning system where a set $\S{}$ of clients collaboratively solve a learning problem.
This can be formalized
as a finite-sum optimization problem, where an objective function $f(\theta)$ is expressed in terms of function components $f_i(\theta)$, with each client optimizing a different component.
Formally, the objective of the algorithm is finding:
\begin{equation}
    \label{eq:obj_fn}
    \theta^* =\arg\min_{\theta \in \mathbb{R}^d}
    \left[f(\theta):=\frac{1}{|\S{}|}\sum_{i\in \mathcal{S}} f_i(\theta) \right]
\end{equation}

\paragraph{Gradient-Based Methods with Momentum}
In modern deep learning applications, permutation-based variants of gradient descent (GD) are the most common algorithms. 
They reduce the computational burden by sampling and calculating a gradient over a function component $f_i$ at each step, mainly differing by the strategy used to select the component.
Among those, Stochastic Gradient Descent (SGD) and Incremental Gradient Descent (IGD) are most popular:
SGD samples $f_i$ uniformly and randomly, while IGD fixes any permutation of function components and samples cyclically from it. 
In this context, momentum has been used as a mechanism to reduce the impact of \textit{noise} introduced by sampling, and improve convergence.
Momentum consists in a moving average of past gradients, and it is often regarded as a way to reduce the variance of model updates \cite{liu2020SGDM}. Formally, the update rule of GD variants with momentum in its \textit{heavy-ball} form can be written  as:
\begin{align}
    \label{eq:mom_hb}
    \begin{alignedat}{2}
        {m}^t &\leftarrow (\theta^{t-1}-\theta^{t-2}), \\
        \theta^{t} &\leftarrow \theta^{t-1} - \eta(1-\beta) \nabla f^t(\theta^{t-1}) + \beta{m}^t
    \end{alignedat}
\end{align}    
where $\eta$ is the step-size, $\beta \in [0,1)$ is the momentum factor and $f^t(\theta)$ is the function objective component at time $t$.

\paragraph{From Centralized to Decentralized Algorithms}
In the context of decentralized and federated learning, clients often represent function components. This analogy is rooted in the fact that data among clients are expected to differ.
At each round $t \in [T]$, a fraction of $C \in (0,1]$ clients $\S{t}$ is selected for training. These clients may take gradients over mini-batch of data or additionally run an optimization algorithm locally over multiple local steps, and send back aggregated updates.

There are two main strategies to extend momentum to decentralized algorithms that use local steps. The first one, implemented by \fedavgm{} \cite{hsu2019measuring}, uses GD or SGD at the client side during local steps, and employs momentum at the server side, treating client's updates as a pseudo-gradient \cite{reddi2020FedOpt}.
The second, adopted by \fedcm{} \cite{xu2021fedcm} moves the same momentum term to the local optimization, and employs simple SGD at the server side. 
The rationale behind this choice is to provide better client drift correction during local optimization by employing a momentum calculated with server statistics, \ie{} averaged updates of clients sampled at each round, and frozen across the local steps.
As is, this is different than just momentum as local optimizer, since in that case it would be calculated and updated locally, failing at capturing other clients' contributions.

In this work we formalize the behavior of both algorithms assuming clients take \textit{full-batch gradients} (\ie{} no additional intra-client variance due to mini-batch sampling). 
Upon showing that the addition of local steps does not change the fundamental property of the algorithms, we then analyze the convergence behavior for the case of one local step.
\vspace{-3mm}

\subsection{Assumptions}
\label{sec:assumptions}
We assume objective functions are $\mu$-strongly convex, with clients sampled in a cyclic order (\cref{assum:str_convex,assum:cyclic_part}). Heterogeneity is captured by a bound on gradient dissimilarity between local and global objectives (\cref{assum:bounded_gd}), and we study how the convergence rate depends on it.
\vspace{1mm}
\begin{assumption}[Strong Convexity]
Let it be a constant $\mu>0$, then for any $i$, $\theta_1$, $\theta_2$ the following holds:
    \label{assum:str_convex}
    \begin{align*}
    f_i(\theta_2) &\geq f_i(\theta_1) + \langle\nabla f_i(\theta_1), \theta_2-\theta_1\rangle + \frac{\mu}{2}\normsq{\theta_2-\theta_1}
    \end{align*}
    Moreover, this implies that $f(\theta)$ is also $\mu$-strongly convex.
\end{assumption}

\begin{assumption}[Bounded Gradient Dissimilarity]
There exist a constant $G\geq0$ such that, $\forall i,\, \theta$: %
    \label{assum:bounded_gd}
    \begin{equation*}
        \frac{1}{|\S{}|} \sum_{i=1}^{|\S{}|} \left \| \nabla f_i(\theta) - \nabla f(\theta) \right \| \leq G
    \end{equation*}
\end{assumption}

\begin{assumption}[Cyclic Participation]
    \label{assum:cyclic_part}
    Let $\S{t}$ be the set of clients sampled at any round $t$. A sampling strategy is \textit{``cyclic``} with period $p=\nicefrac{1}{C}$ if:
    \begin{align*}
     \S{t} &\equiv \S{t-p} &\forall\; t>p \quad\land\quad
     \S{k} \cap \S{t} &= \varnothing &\forall\; k \in (t-p, t)
    \end{align*}
\end{assumption}

\subsection{Learning Problem Construction}
\label{section:lb_construction}
The intuition suggesting the use of momentum in a decentralized setting is that, being a moving average of past gradients, momentum achieves variance reduction effects \cite{liu2020SGDM}.
We construct a learning problem which should be favorable to momentum under partial client participation.
The global objective function is composed by only two objectives selected cyclically, one each round.
Both the minimum number of objective function components and the cyclic sampling are supposed to represent the easiest scenario for momentum, since we can guarantee that we observe the global objective every two rounds, ensuring momentum does not get biased towards either of the components.\\

\begin{lemma}[Behavior of \fedavgm{} and \fedcm{} on two one-dimensional clients]
\label{lemma:lb_construction}
    For any positive constants $G, \mu$, define $\mu$-strongly convex functions $f_1(\theta) := \frac{\mu}{2} \theta^2 + G\theta$ and $f_2(\theta):=\frac{\mu}{2} \theta^2 - G\theta$ satisfying assumption \ref{assum:bounded_gd} and such that $f(\theta)=\frac{1}{2}\left(f_1(\theta) + f_2(\theta)\right)$. Under cyclic participation (assumption \ref{assum:cyclic_part}) with $C=0.5$, for any $t \ge 1$ the evolution of \fedavgm{} and \fedcm{}, with global and local step-sizes $\eta_t, \eta_l$ and momentum weight $\beta$ is described by a discrete-time linear system with state-space representation:
    \begin{align*}
    \left\{\begin{alignedat}{2}
        \mathbf{z}[t] &= \Psi(t,1) \mathbf{z}[1] + \sum_{k=2}^{t}\Psi(t,k)\mathbf{B}\mathbf{u}[k] \\
        \mathbf{y}[t] &= \mathbf{C}\Psi(t,1) \mathbf{z}[1]\vphantom{\sum_{k=2}^{t}}
        + 
        \mathbf{C}\sum_{k=2}^{t}\Psi(t,k)\mathbf{B}\mathbf{u}[k]
    \end{alignedat}\right.
    \end{align*}
    where, given algorithm-dependent coefficients $p_t^{(a)}, q_t^{(a)}, r_t^{(a)}$:
    \begin{align*}
        & \mathbf{z}[t] =
            \begin{pmatrix}
                \theta^t &
                \theta^{t-1}
            \end{pmatrix}^\top,
        &\mathbf{u}[t] &=
        \begin{pmatrix}
            (-1)^t q_t^{(a)} G 
        \end{pmatrix}, \\
        & \mathbf{B} = 
            \begin{pmatrix}
                1 &
                0
            \end{pmatrix}^\top,
        & \mathbf{C} &=
            \begin{pmatrix}
                1 &
                0
            \end{pmatrix}\\
        & \mathbf{A}[t]=
            \begin{pmatrix}
                p_t^{(a)} & -r_t^{(a)} \\
                1                       & 0
            \end{pmatrix}
        &\Psi(t,k)&:=\prod_{s=k+1}^t\mathbf{A}[s]
    \end{align*}
\end{lemma}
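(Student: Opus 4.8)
The plan is to reduce the coupled momentum dynamics to a single scalar second-order linear recursion in the iterate $\theta^t$, and then lift this recursion to the requested first-order companion-form system, whose closed-form solution is the discrete variation-of-constants formula. The crucial structural observation is that the chosen objectives have \emph{affine} gradients, $\nabla f_1(\theta) = \mu\theta + G$ and $\nabla f_2(\theta) = \mu\theta - G$, with $\nabla f(\theta) = \mu\theta$. Consequently every operation performed by either algorithm — a full-batch local gradient step, the freezing and averaging of a momentum direction, and the server update — is an affine map of the current iterate, and compositions of affine maps remain affine. This is what guarantees that the dynamics are genuinely \emph{linear} (time-varying only through the alternating client sign and the step-size schedule), rather than merely approximately so.

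First I would write out the update rules of \fedavgm{} and \fedcm{} specialized to this instance. With $C = 0.5$ and two clients, \cref{assum:cyclic_part} forces $|\S{t}| = 1$, so exactly one client is active per round and the active component alternates; its gradient contributes $\mu\theta^{t-1} \pm G$, with the sign flipping each round. Substituting the affine gradients and composing the $K$ local steps, I would collect all terms into the form
\begin{equation*}
    \theta^t = p_t^{(a)}\,\theta^{t-1} - r_t^{(a)}\,\theta^{t-2} + (-1)^t q_t^{(a)} G ,
\end{equation*}
where the dependence on $\theta^{t-2}$ is produced by the momentum term (the server heavy-ball increment for \fedavgm{}, the frozen local direction for \fedcm{}), the homogeneous coefficients $p_t^{(a)}, r_t^{(a)}$ absorb $\mu, \beta, \eta_t, \eta_l$ and the number of local steps, and the non-homogeneous term isolates the heterogeneity: its magnitude is captured by $q_t^{(a)}$ and its alternating sign by $(-1)^t$, precisely because the active client switches every round. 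The $t$-dependence of the coefficients enters only through the (possibly decreasing) global step-size $\eta_t$, and the algorithm index $a \in \{\fedavgm{}, \fedcm{}\}$ only changes the explicit values of these coefficients, not the order or shape of the recursion.

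Next I would introduce the state $\mathbf{z}[t] = (\theta^t, \theta^{t-1})^\top$ and rewrite the scalar recursion as the first-order system $\mathbf{z}[t] = \mathbf{A}[t]\mathbf{z}[t-1] + \mathbf{B}\mathbf{u}[t]$ with the companion matrix $\mathbf{A}[t]$, input $\mathbf{u}[t] = ((-1)^t q_t^{(a)} G)$, and $\mathbf{B}, \mathbf{C}$ as in the statement; the output $\mathbf{y}[t] = \mathbf{C}\mathbf{z}[t] = \theta^t$ selects the first coordinate. Unrolling this recursion from $t=1$ (equivalently, a short induction on $t$) yields $\mathbf{z}[t] = \Psi(t,1)\mathbf{z}[1] + \sum_{k=2}^t \Psi(t,k)\mathbf{B}\mathbf{u}[k]$ with the state-transition matrix $\Psi(t,k) = \prod_{s=k+1}^t \mathbf{A}[s]$, and applying $\mathbf{C}$ on the left gives the output equation directly. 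This is exactly the split the subsequent analysis exploits: $\Psi(t,1)\mathbf{z}[1]$ is the zero-input response governed by the shared objective, while the convolution sum is the zero-state response driven by the heterogeneity input.

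The main obstacle is the bookkeeping for the local steps and the unification of the two algorithms under a single template. I need to verify that running $K > 1$ local steps does not raise the order of the recursion: one must show that composing $K$ affine local maps — and, for \fedcm{}, carrying the frozen momentum direction through all of them — still produces a second-order recursion in $\theta^t$ whose coefficients are of the stated form. This is where the affine-gradient structure is essential, since it prevents the accumulation of higher-order dependence on past iterates; once the $K=1$ case and this reduction are established, the remaining steps are mechanical.
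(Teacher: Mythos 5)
Your proposal is correct and follows essentially the same route as the paper: the paper likewise derives the scalar second-order recursion $\theta^t = p_t^{(a)}\theta^{t-1} + (-1)^t q_t^{(a)} G - r_t^{(a)}\theta^{t-2}$ by explicitly composing the affine local and server updates (this is done in the auxiliary one-round-progress lemmas for \fedavgm{} and \fedcm{}, which confirm your claim that $J$ local steps do not raise the order of the recursion), and then lifts it to the companion-form state-space system and unrolls. The only difference is presentational — the paper delegates the local-step bookkeeping to separate lemmas rather than folding it into the main argument.
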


\begin{proof}[Proof sketch]
    Assuming at each round the algorithm optimizes an objective function $f^t(\theta)$ cyclically (\eg{} $f_1(\theta)$ at $t$ odd and $f_2(\theta)$ otherwise), by \cref{lemma:update_fedavgm,lemma:update_fedcm} a generic update for \fedavgm{} and \fedcm{} can be written as:
    \begin{align}
    \label{eq:generic_mom_update}
    \theta^t \leftarrow p_t^{(a)}\theta^{t-1}+q_t^{(a)}(-1)^tG - r_t^{(a)}\theta^{t-2} 
    \end{align}
    \cref{eq:generic_mom_update} can analyzed as a discrete-time linear system, where $\mathbf{z}[t]$, $\mathbf{u}[t]$ and $\mathbf{y}[t]=\theta^t$ are respectively the state, the input and the output at $t$-th round.
    The evolution of the system has the following state-space form:
    \begin{align}
    \left\{\begin{alignedat}{2}
        \mathbf{z}[t] &= \mathbf{A}[t]\mathbf{z}[t-1] + \mathbf{B}\mathbf{u}[t] \\
        \mathbf{y}[t] &= \mathbf{C}\mathbf{z}[t] 
    \end{alignedat}\right.
    \end{align}
    Unrolling the recursion leads to the lemma statement. Complete proof is presented in \cref{proof:lemma:lb_construction}.
\end{proof}
To draw intuition about the behavior of the system, let us express $f_{1,2}(\theta)=f_{hom}(\theta)+f_{het}(\theta)$, where $f_{hom}(\theta)=f(\theta)$ and $f_{het}(\theta)=\pm G\theta$. Then, the update \wrt{} the shared objective maps onto the \textit{natural response} of the system, while $\nabla f_{het}(\theta)$ appears as external force of the system. In practice, at each iteration $t$, the presence of heterogeneity 
acts as ``disturb signal" to the optimization of the global objective.
This offers immediate understanding of the impact of noise on the algorithms' convergence:
if there was no input (\ie{} $f_{1,2}(\theta)$ were homogeneous or both sampled at each round), then convergence would depend only on initial conditions, with an exponentially fast rate under constant $p_t^{(a)}$ (\ie{} with proper constant step-size $\eta_t=\eta$).
Conversely, the presence of heterogeneity leads to a convergence rate determined by how the terms related to the initial conditions (called \textit{zero-input response}) and the input (called \textit{zero-state response}) interact: this depends on the choice of step-size $\eta_t$, which enters both in the term $p_t^{(a)}$ and and as scaling to input in $q_t^{(a)}$.

These terms also depend on the number of local steps, and can be calculated by unrolling the one round server update of the algorithms. For \fedavgm{}, by \cref{lemma:update_fedavgm}, we have that:
\begin{align}
    &\left\{
    \begin{alignedat}{3}
        &p_t^{(\fedavgm{})} = 1+\beta+\tilde{\eta}_t((1-\mu\eta_l)^J-1) \\
        &q_t^{(\fedavgm{})} = \tilde{\eta}_t\eta_l\sum_{j=0}^{J-1}(1-\mu\eta_l)^j \\
        &r_t^{(\fedavgm{})} = \beta 
    \end{alignedat}
    \right.
\intertext{Conversely, for \fedcm{}, by \cref{lemma:update_fedcm}, we have that:}
    &\left\{
    \begin{alignedat}{3}
        &p_t^{(\textsc{CM})} = 1+\frac{\beta}{J}\sum_{j=0}^{J-1}(1-\mu\tilde{\eta}_l)^j +{\eta}_t((1-\mu\tilde{\eta}_l)^J-1) \\
        &q_t^{(\textsc{CM})} = {\eta}_t\tilde{\eta}_l\sum_{j=0}^{J-1}(1-\mu\tilde{\eta}_l)^j \\
        &r_t^{(\textsc{CM})} = \frac{\beta}{J}\sum_{j=0}^{J-1}(1-\mu\tilde{\eta}_l)^j 
    \end{alignedat}
    \right.
\end{align}
where $\eta_t,\eta_l$ are global and local step-sizes, $\tilde{\eta}_t=\eta_t(1-\beta)$ and $\tilde{\eta}_l=\eta_l(1-\beta)$.
Comparing the $q_t^{(a)}$ terms, we notice that:
\begin{equation}
    \label{eq:qt_comparison}
    q_t^{(\textsc{CM})} \geq q_t^{(\fedavgm{})} \geq \eta_t\eta_l(1-\beta)
\end{equation}
with the equality holding when $J=1$. This means that: (i) for both algorithms, increasing the number of local steps worsens the dependency on the heterogeneity bound and (ii) \fedcm{} has a worse dependency than \fedavgm{}.
The additional multiplicative factors due to local steps represent the effect of the so-called \textit{client drift}, that is the phenomenon by which heterogeneous clients ``drift away'' from the ideal global update. These factors converge for $J\to \infty$, so the $q_t^{(a)}$ terms differ at most by a constant from the case when only one local step is taken.

Determining if one of the two algorithms is apriori better at optimizing the zero-input response would involve explicit calculation of the maximal eigenvalue for both algorithms, which depend both on $p_t^{(a)}, r_t^{(a)}$.
From numerical calculations, it is possible to show that the algorithm with lowest maximal eigenvalue, \ie{} the faster, depends on the hyperparameters.
However, since $p_t^{(a)}, r_t^{(a)}$ only differ by a constant
the  asymptotic convergence rate remains the same for any $J$.
Having verified that local steps do not provide convergence advantages at the same amount of iterations, in the rest of this work we analyze the case of only one local step size. In this case, by \cref{corollary:fedavgm_fedcm}, \fedavgm{} and \fedcm{} both correspond to an instance of IGD with momentum, and share the same update rule leading to state and input matrices as:
\begin{align}
    &\mathbf{A}[t]=
        \begin{pmatrix}
            1+\beta-\mu\tilde{\eta}_t & -\beta \\
            1                       & 0
        \end{pmatrix},
    &\mathbf{u}[t]=
        \begin{pmatrix}
            (-1)^t \tilde{\eta}_tG
        \end{pmatrix}
\end{align}

\subsection{Convergence under Constant Step-sizes}
The following theorem reveals that, similarly as it is known for vanilla SGD and IGD, momentum does not bring any asymptotic advantage in the convergence rate. This directly implies that algorithms based on classical momentum \textit{cannot} provide strong theoretical guarantees against statistical heterogeneity in decentralized and federated learning settings under partial participation (\ie{} an heterogeneity bound $G$ is still necessary).
Since our analysis does not take into account local steps, the following theorem holds for both \fedavgm{} and \fedcm{} under cyclic client participation.\\

\begin{theorem}
\label{thm:lb_constant_step}
    For any positive constants $G, \mu$ there exist $\mu$-strongly convex functions satisfying assumption \ref{assum:bounded_gd} for which, under proper constant step-size $\eta$ and for any momentum factor $\beta \in [0,1)$, the output of {\fedcm} and {\fedavgm} under cyclic partial participation (assumption \ref{assum:cyclic_part}), has the following asymptotic error:
\begin{equation*}
    f(\theta^t) - f(\theta^*) = \Theta \left(  \frac{G^2}{\mu T^2} \right)
\end{equation*}
\end{theorem}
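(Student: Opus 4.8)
The plan is to linearise the problem under a constant step-size and read off the error from the forced response of the resulting time-invariant system. With $\eta_t\equiv\eta$ the state matrix in \cref{lemma:lb_construction} becomes constant, $\mathbf{A}[t]\equiv\mathbf{A}$, and the input collapses to the two-periodic sequence $\mathbf{u}[t]=(-1)^t\tilde{\eta}G$ with $\tilde{\eta}=\eta(1-\beta)$. Taking $\theta^*=0$ and initialising at the optimum, $\mathbf{z}[1]=\mathbf{0}$, kills the zero-input response and isolates the heterogeneity-driven part, exactly the decomposition anticipated after \cref{lemma:lb_construction}:
\[
\theta^t=\mathbf{C}\sum_{k=2}^{t}\mathbf{A}^{t-k}\mathbf{B}\,\mathbf{u}[k]
=\tilde{\eta}G\,(-1)^t\sum_{n=0}^{t-2}h[n]\,(-1)^n,
\qquad h[n]:=\mathbf{C}\mathbf{A}^{n}\mathbf{B}.
\]

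Next I would diagonalise $\mathbf{A}$. Its characteristic polynomial $\lambda^2-(1+\beta-\mu\tilde{\eta})\lambda+\beta$ has roots with product $\beta$ and sum $1+\beta-\mu\tilde{\eta}$; expanding to first order in the small quantity $\mu\tilde{\eta}$ gives a slow pole $\lambda_1=1-\mu\eta+O((\mu\eta)^2)$ and a fast pole $\lambda_2=\beta+O(\mu\eta)$, so $h[n]=(\lambda_1^{n+1}-\lambda_2^{n+1})/(\lambda_1-\lambda_2)$. Since the input lives at the Nyquist frequency ($z=-1$), the relevant quantity is the transfer function there, $H(-1)=\bigl(2(1+\beta)-\mu\tilde{\eta}\bigr)^{-1}$, which yields the steady-state oscillation amplitude $c=H(-1)\,\tilde{\eta}G=\Theta(\eta G)$: the iterate asymptotically swings between $\pm c$ about the optimum, so $f(\theta^t)-f(\theta^*)=\tfrac{\mu}{2}(\theta^t)^2$ is controlled by $c$.

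I would then tune the horizon-dependent constant step-size $\eta=\Theta(1/(\mu T))$, which places the slow pole at $\lambda_1=1-\Theta(1/T)$ and the oscillation floor at $c=\Theta(G/(\mu T))$, giving $f(\theta^T)-f(\theta^*)=\tfrac{\mu}{2}c^2=\Theta\!\left(G^2/(\mu T^2)\right)$. The hidden constant carries a factor $(1-\beta)^2/(1+\beta)^2$, so momentum only improves the constant and leaves both the $G^2$-dependence and the $1/T^2$ rate untouched --- precisely the claim of the theorem. The upper bound is immediate by bounding $|\theta^T|$ by the steady-state amplitude plus transient, both $O(\eta G)$.

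The main obstacle is the matching lower bound. With $\eta=\Theta(1/(\mu T))$ one has $\lambda_1^{T}=\Theta(1)$, so the transient does \emph{not} become negligible over the horizon and could, a priori, cancel the steady state exactly at $t=T$; I cannot simply pass to the $t\to\infty$ limit. Instead I would control the finite sum $\sum_{n=0}^{T-2}h[n](-1)^n$ directly, splitting it into the geometrically summable fast mode and the slow mode via $\sum_{n\ge0}(-\lambda_j)^n=(1+\lambda_j)^{-1}$ truncated at $T-2$. Using $\lambda_1/(1+\lambda_1)\to\tfrac12$ and $\lambda_2/(1+\lambda_2)=\beta/(1+\beta)$, the $T$-independent part equals $\tfrac{1}{2(1+\beta)}>0$, while the truncation leaves a parity-dependent correction of size $(-1)^{T}\lambda_1^{T}/\bigl(2(1-\beta)\bigr)$. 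The delicate step is ensuring the sum stays bounded away from zero for \emph{both} parities of $T$: this holds once the step-size constant is chosen so that $\lambda_1^{T}<(1-\beta)/(1+\beta)$, which secures $|\theta^T|=\Omega(\eta G)$ and closes the $\Omega(G^2/(\mu T^2))$ bound.
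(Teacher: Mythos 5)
Your proposal is correct and follows the same core route as the paper's proof: under constant step-size the recursion of \cref{lemma:lb_construction} is an LTI system, heterogeneity enters as a $2$-periodic input, and the error is read off from the periodic steady state whose amplitude $\tilde{\eta}G/\bigl(2(1+\beta)-\mu\tilde{\eta}\bigr)$ you obtain from the transfer function at $z=-1$ --- exactly the quantity the paper computes by solving $\mathbf{c}=(\mathbf{I}+\mathbf{A})^{-1}\tilde{\eta}G\mathbf{B}$ --- followed by the same tuning $\eta=\Theta\bigl(\tfrac{1}{\mu T}\tfrac{1+\beta}{1-\beta}\bigr)$ and the observation $f(\theta)=\tfrac{\mu}{2}\theta^2$. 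The one substantive difference is where the error is evaluated. The paper imposes stability via the Jury criterion, lets $t\to\infty$ so that the transient vanishes identically, and reports $|\theta^\infty|=\Theta(G/(\mu T))$; it therefore never confronts the issue you raise in your last paragraph. You instead evaluate at the finite horizon $t=T$, where $\lambda_1^T=\Theta(1)$ and the truncated convolution sum could a priori cancel the steady state, and you close this by computing $\sum_{n=0}^{T-2}h[n](-1)^n$ mode by mode and imposing $\lambda_1^T<(1-\beta)/(1+\beta)$, which is satisfiable with a universal constant in the step-size since $xe^{-x}<1$ for $x=(1+\beta)/(1-\beta)$. This buys a genuinely stronger, parity-robust statement about $\theta^T$ itself rather than about the limit point, at the cost of extra bookkeeping. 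Two small points to make explicit when writing it up: the eigenvalues are real and distinct for $\mu\tilde{\eta}$ small (discriminant $\approx(1-\beta)^2>0$), so the partial-fraction form of $h[n]$ is valid; and initializing at the optimum to kill the zero-input response is legitimate here because the theorem is an existence (lower-bound) construction, whereas for arbitrary initialization you would fall back on the paper's argument that the zero-input response decays geometrically.
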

\begin{proof}[Proof sketch]
    Starting from the representation in \cref{lemma:lb_construction}, the output of the system can be analyzed studying separately the zero-input and zero-state response, and results combined thanks to linearity.
    Under constant step size, $\mathbf{A}[t]=\mathbf{A}\, \forall t \in [T]$, implying $\Psi(t,1)=\mathbf{A}^{t-1}$, so the (exponential) convergence of the zero-input response reduces to imposing the eigenvalues of $\mathbf{A}$ to be strictly less than one, leading to:
    \begin{equation}
    \label{eq:proof_sketch_constat_step:eta_bound}
        \eta \in \left( 0, \frac{2(1+\beta)}{\mu(1-\beta)}\right), \qquad \beta \in [0,1)
    \end{equation}
    The solution of the zero-state response follows from noticing that the input $\mathbf{u}[t]$ is $2$-periodic and non-vanishing, leading to a limit cycle of the same period:
    \begin{equation}
        \theta^t = (-1)^t \frac{\eta(1-\beta)G}{2(1+\beta)-\mu\eta(1-\beta)}
    \end{equation}
    Notice that the amplitude of the cycle limit monotonically increases with $\eta$. As is, to control the error at convergence, we must impose a small step size as:
    \begin{equation}
        \frac{c_1}{\mu T}\frac{1+\beta}{1-\beta} < \eta < \frac{c_2}{\mu T}\frac{1+\beta}{1-\beta}
    \end{equation}
    with $0<c_1<c_2\leq2$ and $T>1$. 
    The result follows from imposing the above constraint, and noting that $f(\theta)=\frac{\mu}{2}\theta^2$.
    Complete proof is presented in \cref{proof:thm:lb_constant_step}.
\end{proof}
The fundamental insight of \cref{thm:lb_constant_step} regards the asymptotic behavior of the zero-input and zero-state response depending on the choice of the step size.
In fact, the zero-input response converges exponentially fast (the faster the higher the step-size following \cref{eq:proof_sketch_constat_step:eta_bound}), matching the convergence rate of GD.
Conversely, since the zero-state response converges to a 2-period cycle limit 
, a step-size as small as $\eta =\Theta(1/T)$ must be imposed to obtain a linear rate.

\subsection{Convergence under Decreasing Step-sizes}
The intuitive reason for adopting decreasing step-sizes lies on the observation that heterogeneity enters the optimization as an external input, scaled by the effective step-size $\eta_t(1-\beta)$. This suggest that decreasing $\eta_t$ over time may offer a benefit not visible when it is kept constant.
We study the problem in \cref{lemma:lb_construction} under a polynomial decreasing step-size schedule of the type $\eta_t=\eta/t^\alpha$, where $t$ is the current iteration and $\alpha>0$ is an hyperparameter controlling the decay rate of $\eta_t$.
We show that, even when $\eta_t$ is decreasing, the dependence on the heterogeneity bound cannot be eliminated, and that overly fast-decaying step-size schedules are detrimental. \\

\begin{theorem}
\label{thm:lb_decreasing_step}
    For any positive constants $G, \mu$, $\alpha$ there exist $\mu$-strongly convex functions satisfying assumption \ref{assum:bounded_gd} for which, under decreasing step-size $\eta_t \sim \mathcal{O}\left(1/t^\alpha \right)$,the output of {\fedcm} and {\fedavgm} under cyclic participation (assumption \ref{assum:cyclic_part}), even assuming initialization at optimum ($\theta^0=\theta^*$), has the following error:
    \begin{align*}
    f(\theta^t) - f(\theta^*) &=
        \left\{\begin{alignedat}{2}
             & \Theta \left(  \frac{G^2}{\mu t^{2\alpha}} \right) &\quad& \text{if } 0 < \alpha < 1  \\
             & \Theta \left(  \frac{G^2}{\mu t^{2\min(\mu\eta,1)}} \right) &\quad& \text{if } \alpha = 1  \\
             & \Theta \left(
             \frac{G^2}{\mu} \right)
             &\quad& \text{if }  \alpha > 1
        \end{alignedat}\right.
    \end{align*}
\end{theorem}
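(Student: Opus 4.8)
The plan is to exploit the state-space representation of \cref{lemma:lb_construction} specialized to one local step, where $\mathbf{A}[t]$ and $\mathbf{u}[t]$ take the form stated above with $\tilde\eta_t=\eta(1-\beta)/t^\alpha$. Since we initialize at the optimum, $\mathbf{z}[1]=\mathbf{0}$ and the \emph{zero-input response} vanishes identically; hence $\theta^t$ equals the \emph{zero-state response} $\mathbf{C}\sum_{k=2}^t\Psi(t,k)\mathbf{B}\mathbf{u}[k]$ and, because $f(\theta)=\tfrac{\mu}{2}\theta^2$ with $\theta^*=0$, it suffices to determine the exact (two-sided) growth of $|\theta^t|$ and conclude via $f(\theta^t)-f(\theta^*)=\tfrac{\mu}{2}(\theta^t)^2$. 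Equivalently, I would work directly with the scalar recursion $\theta^t=(1+\beta-\mu\tilde\eta_t)\theta^{t-1}-\beta\theta^{t-2}+(-1)^t\tilde\eta_t G$, with $\theta^0=\theta^1=0$.

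The key observation is that, as $t\to\infty$, the frozen matrix $\mathbf{A}[t]$ has eigenvalues $\lambda_1(t)=1-\tfrac{\mu\eta}{t^\alpha}+o(t^{-\alpha})$ and $\lambda_2(t)=\beta+o(1)$, so the system becomes \emph{marginally stable}: the slow mode approaches the unit circle while the forcing $(-1)^t\tilde\eta_tG$ sits at the Nyquist frequency. I would therefore decompose the response as $\theta^t=(-1)^t a_t+w_t$, where $(-1)^t a_t$ is the instantaneous limit cycle obtained by freezing the coefficients at time $t$ — by the same computation as in \cref{thm:lb_constant_step} one gets $a_t=\tfrac{\tilde\eta_tG}{2(1+\beta)-\mu\tilde\eta_t}=\Theta(\tilde\eta_tG)$, which is of order $G/(\mu\,t^\alpha)$ for a step-size constant $\eta$ chosen of order $1/\mu$ — and $w_t$ is a smooth residual. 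Substituting shows $w_t$ obeys the homogeneous recursion forced only by $R_t=O(|a_t-a_{t-1}|)=O(t^{-\alpha-1})$, with initial mismatch $w_1=a_1=\Theta(\eta)$. Consequently $w_t$ is governed by the slow mode and tracks the accumulated gain $\Pi(t):=\prod_{s}\lambda_1(s)\approx\exp\!\big(-\mu\eta\textstyle\sum_{s\le t}s^{-\alpha}\big)$, the subdominant mode $\lambda_2\approx\beta$ being geometrically negligible thanks to the spectral gap $1-\beta>0$.

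The three regimes then follow from the asymptotics of $\sum_{s\le t}s^{-\alpha}$. For $0<\alpha<1$ the sum grows like $t^{1-\alpha}$, so $\Pi(t)$ decays stretched-exponentially and $w_t$ is negligible next to the alternating part; hence $|\theta^t|=\Theta(a_t)$ and $f(\theta^t)-f(\theta^*)=\Theta(G^2/(\mu t^{2\alpha}))$. For $\alpha=1$ the sum is $\ln t+O(1)$, giving $\Pi(t)=\Theta(t^{-\mu\eta})$; the transient $w_t=\Theta(\eta\,t^{-\mu\eta})$ (including the forced part $\sum_k \tfrac{\Pi(t)}{\Pi(k)}R_k$) now competes with the instantaneous $a_t=\Theta(t^{-1})$, and the slower of the two decays dominates, yielding $|\theta^t|=\Theta(t^{-\min(\mu\eta,1)})$ up to the $G/\mu$ scale and hence $\Theta(G^2/(\mu t^{2\min(\mu\eta,1)}))$. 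For $\alpha>1$ the sum converges, so $\Pi(t)\to\Pi_\infty>0$: the slow mode never fully contracts, $w_t\to w_\infty=\Theta(\eta)\ne0$, the step-size \quotes{freezes} the iterate away from the optimum, and $f(\theta^t)-f(\theta^*)\to\Theta(G^2/\mu)$, a constant set by the initialization transient and the heterogeneity $G$. Collecting the three cases and squaring gives the statement.

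The main obstacle is making the quasi-static decomposition rigorous for a genuinely time-varying, non-normal linear recursion and, crucially, obtaining \emph{matching} lower bounds rather than mere upper bounds. This requires (i) a WKB-type control of the product $\Psi(t,k)$ of slowly varying matrices, bounding the accumulated error of the first-order eigenvalue expansion — the correction terms contribute $\exp(-O(\sum s^{-2\alpha}))$, which is lower order than $\Pi(t)$ in every regime — and using the gap $1-\beta$ to discard the $\lambda_2$ mode; and (ii) ruling out destructive cancellation, i.e. showing the leading coefficient of the transient $w_t$ is nonzero and that the smooth part $w_t$ and the alternating part $(-1)^t a_t$ do not cancel at the sampled indices, so that $|\theta^t|$ is bounded below by the dominant contribution. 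The $\alpha=1$ boundary, where the two decay exponents coincide at $\mu\eta=1$, is the most delicate point of the comparison.
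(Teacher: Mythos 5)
Your strategy is sound and reaches the correct trichotomy, but it is a genuinely different route from the paper's. The paper does not build a quasi-static particular solution: it splits $\mathbf{A}[t]=\mathbf{A}^\infty+\mathbf{E}[t]$, diagonalizes with respect to the limiting matrix $\mathbf{A}^\infty$ (eigenvalues $1$ and $\beta$), obtains two \emph{coupled} scalar recursions in the transformed coordinates $\bar z_1,\bar z_2$, and closes the coupling with a self-consistent ansatz (assume an asymptotic form for $\bar z_1$, derive $\bar z_2$, substitute back and verify). All of the alternating forcing is then absorbed into explicit two-sided asymptotics for sums of the form $\sum_s \Psi_i(t,s,\alpha)(-1)^s s^{-n}$ (\cref{lemma:lim_sum_alt_trans_matrix_gen,lemma:lim_sum_alt_trans_matrix_2}), which is where the matching lower bounds --- your stated main worry --- are actually obtained. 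Your adiabatic split $\theta^t=(-1)^ta_t+w_t$ instead isolates the Nyquist-frequency part as the frozen-coefficient limit cycle of \cref{thm:lb_constant_step} and tracks the rest through the product $\Pi(t)\approx\exp(-\mu\eta\sum_{s\le t}s^{-\alpha})$, which is exactly the object the paper controls in \cref{lemma:lim_trans_matrix_1,corollary:lim_trans_matrix_1}; the three regimes have the same source in both proofs. What your route buys is transparency (the regimes visibly come from how fast $\sum_{s\le t} s^{-\alpha}$ diverges versus the decay of the forcing amplitude $\tilde\eta_t G$) and no self-consistency loop; what it costs is the two obstacles you name, plus one you understate: the residual $w_t$ is \emph{not} smooth, because its forcing $(-1)^t\bigl(p_t(a_t-a_{t-1})+\beta(a_t-a_{t-2})\bigr)$ is still alternating, so the smooth/alternating separation must be iterated or handled by Abel summation against the slowly varying $\Pi(t)/\Pi(s)$ --- this is precisely the content of the paper's alternating-sum lemmas. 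Likewise, for $\alpha>1$ the non-vanishing of $w_\infty$ is not automatic; the paper obtains it by solving a fixed-point equation for the limiting constant with coefficients bounded away from zero via \cref{lemma:lim_sum_trans_matrix_1,lemma:lim_sum_alt_trans_matrix_gen}. With those two points supplied, your outline would constitute a valid alternative proof.
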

\begin{proof}[Proof sketch]
    The main difficulty \wrt{} the analysis in \cref{thm:lb_constant_step} is that the underlying system is not time-invariant anymore. Indeed, since one of the eigenvalues of the $\mathbf{A}$ matrix depends on time and tends to $1$, it is not possible to rely on the analysis of eigenvalues.
    To study the system, we express decompose the $\mathbf{A}[t]$ matrix as:
    \begin{equation}
    \mathbf{A}[t]=
        \underbrace{\begin{pmatrix}
        1+\beta & -\beta \\
            1                       & 0
        \end{pmatrix}}_{:=\mathbf{A}^\infty} + 
        \underbrace{\begin{pmatrix}
            -\frac{\mu\tilde{\eta}}{t^\alpha} & 0 \\
            0 & 0
        \end{pmatrix}}_{:=\mathbf{E}[t]}
    \end{equation}
    and then diagonalize it \wrt{} the $\mathbf{A}^\infty$ matrix, to decouple the directions corresponding to the marginally stable eigenvalue $\lambda_1=1$ and the asymptotic stable $\lambda_2=\beta < 1$:
    \begin{equation}
        \bar{\mathbf{z}}[t] =(\mathbf{\Lambda} + \mathbf{H}[t])\bar{\mathbf{z}}[t-1] +\mathbf{W}\mathbf{u}[t], \quad 
        \mathbf{\Lambda} =
        \begin{pmatrix}
            1 & 0 \\
            0 & \beta
        \end{pmatrix}
    \end{equation}
    Since the system is not diagonal, each component $\bar{z}_{1,2}[t]$, enters the other as external input, \eg{} for $\bar{z}_2[t]$:
    \begin{align*}
        \bar{z}_2[t] &= 
        \underbrace{\vphantom{\sum_{s=2}^{t}}\Psi_2(t,1,\alpha)\bar{z}_2[1]}_{\text{zero-input response}}
        + 
        \underbrace{\sum_{s=2}^{t}\Psi_2(t,s,\alpha)\frac{\bar{z}_1[s-1]}{s^\alpha}}_{\text{coupling term with $\bar{z}_1$}} +\\
        &-
        \underbrace{\sum_{s=2}^{t}\Psi_2(t,s,\alpha) \frac{\eta G}{s^\alpha}}_{\text{zero-state response}}%
    \end{align*}
    and similarly for $\bar{z}_1[t]$. The analysis proceeds by finding a set of self-consistent hypotheses for the asymptotic behavior of $\bar{z}_{1,2}[t]$. Following this, we obtain that $\bar{z}_2[t]=\Theta\left(\eta G / t^\alpha\right),\, \forall \alpha>0$ independent of $\bar{z}_1[t]$: this is expected, since $\bar{z}_2[t]$ is the direction associated with $\lambda_2=\beta$. Conversely:
    \begin{align}
        \bar{z}_1[t]=
        \left\{
        \begin{alignedat}{3}
            &\Theta\left(\frac{G}{\mu t^\alpha}\right) &\,& \text{ if }0 < \alpha <1 \\
            &\Theta\left(\frac{G}{\mu t^{\mu\eta}}\right) &\,& \text{ if } \alpha =1 \\
            &\Theta\left(\frac{G}{\mu}\right) &\,& \text{ if } \alpha > 1 \\
        \end{alignedat}
        \right.
    \end{align}
    The convergence rate follows from noticing that $y[t]=z_1[t]=\bar{z}_1[t]+\beta\bar{z}_2[t]$ and analyzing the dominance of each term varying $\alpha,\eta$. Noting that $f(\theta)=\frac{\mu}{2}\theta^2$ leads to the theorem result.
    Complete proof is deferred to \cref{proof:thm:lower_bound_decreasin_stepsize}.
\end{proof}

\paragraph{Slowly-decreasing step-sizes}
 When the decay rate of the step-size is sufficiently slow (\ie{} $0<\alpha <1$), the convergence rate is strictly slower than in \cref{thm:lb_constant_step}, as $2\alpha<2$, and the dependence on the heterogeneity bound $G$ remains.
 From the mathematical point of view, the bottleneck in the rate arises from the solution of the zero-input response, which decays as a polynomial in $\alpha$, while the zero-state response still decays exponentially fast. As such, for large $t$ the rate is dominated by the former term, and the final convergence value $\theta^t$ is the same irrespective of initial conditions $\theta^0$.

\paragraph{Fast-decreasing step-sizes}

When $\alpha=1$, the convergence rate depends on the choice of initial step size $\eta$. When a small $\eta<1/\mu$ is chosen, the rate depends on $\mu\eta$, getting slower as $\eta$ is chosen smaller. On the other hand, when a large $\eta\geq1/\mu$ is chosen, the rate matches the one in \cref{thm:lb_constant_step}. Similar findings have been observed for SGD under the same step-size schedule by \cite{Jentzen2020lower}.
Mathematically, the transition between $t^{-\mu\eta}$ to $t^{-1}$ in the rate arises 
because
the state transition matrix $\Psi(t,s)$ now decays only polynomially to zero, not exponentially as in the previous case. As is, the rate now depends on how the zero-input and zero-state responses interact: when $\eta < 1/\mu$, a term depending on the initialization affects the rate, so $\theta^t$ will depend on $\theta^0$. On the contrary, when $\eta > 1/\mu$, the rate is dominated only by the response to heterogeneity.

\paragraph{Overly fast-decreasing step-sizes}
When the step-size decays faster than linearly, the algorithm fails to reach an arbitrarily small optimality gap.
Both the solutions of the homogeneous and heterogeneous part of the system in \cref{lemma:lb_construction} are affected, because the state transition matrix does not longer decay to zero.
This means that, not only the zero-state response converges to a constant depending on $G$, but also the the zero-input response converges to a constant depending on the initialization.

\subsection{Circumventing the Momentum Lower Bounds}
The findings in this section confirm
classical momentum cannot be employed in decentralized learning to completely overcome the effects of statistical heterogeneity.
To the best of authors' knowledge, the only momentum-based algorithm circumventing this limitation is the Generalized Heavy-Ball Momentum (\ghb{}) \cite{zaccone2025ghbm}. 
As authors explain, leveraging an incremental aggregated gradient perspective, its momentum update rule approximates the one classical momentum has in full participation. 
Modeled in such a way, the heterogeneous term term arising in \cref{lemma:lb_construction} does not appear even in (cyclic) partial participation, recovering the rate of classical momentum in full participation.
Therefore, the limitations we refer to in this paper do not apply to \ghb{}.

\section{Numerical Results}
\label{sec:experiments}

\subsection{Theoretical Experiments on \texorpdfstring{$\mu$}{mu}-strongly Convex Functions}
\begin{table*}[t]
\centering
\caption{\textbf{Effect function heterogeneity and decreasing step-size on \textsc{IGD} with (left) and without momentum (right):} $\theta^t$ after $T = 10^6$ iterations for the problem in \cref{lemma:lb_construction}. Heterogeneity affects convergence linearly, and step-size schedules decaying faster than $\Theta\left(1/t^\alpha\right)$ lead to worse solutions, both when not starting at the optimum (\ie{} $\theta^0 \neq 0$ and $G=0$) and when objectives are heterogeneous (\ie{} $G > 0$ and $\theta^0 = \theta^*$).}
\label{tab:theory_exp}
\resizebox{0.99\textwidth}{!}{%
\begin{tabular}{@{}lrrrrr|rrrrrr@{}}
\toprule
\multicolumn{1}{c}{\multirow{2}{*}{\sc Step-size schedule}} &
  \multicolumn{2}{c}{$G=100$} &
  \multicolumn{2}{c}{$G=10$} &
  \multicolumn{1}{c}{$G=0$} &
  \multicolumn{2}{c}{$G=100$} &
  \multicolumn{2}{c}{$G=10$} &
  \multicolumn{1}{c}{$G=0$} \\ 
\cmidrule(l){2-3} \cmidrule(l){4-5} \cmidrule(l){6-6} \cmidrule(l){7-8} \cmidrule(l){9-10} \cmidrule(l){11-11}
\multicolumn{1}{c}{} &
  \multicolumn{1}{c}{$\theta^0 = 0$} &
  $\theta^0 = 10$ &
  \multicolumn{1}{c}{$\theta^0 = 0$} &
  $\theta^0 = 10$ &
  \multicolumn{1}{c}{$\theta^0 = 10$} &
  \multicolumn{1}{c}{$\theta^0 = 0$} &
  $\theta^0 = 10$ &
  \multicolumn{1}{c}{$\theta^0 = 0$} &
  $\theta^0 = 10$ &
  \multicolumn{1}{c}{$\theta^0 = 10$} \\
\midrule
\textsc{Constant}                  & $2.5\mathrm{e}{-05}$
 & $2.5\mathrm{e}{-05}$ & $2.5\mathrm{e}{-06}$ & $2.5\mathrm{e}{-06}$  & $5.7\mathrm{e}{-08}$ & $1.5\mathrm{e}{-05}$ & $3.7\mathrm{e}{+00}$ & $1.5\mathrm{e}{-06}$ & $3.7\mathrm{e}{+00}$ & $3.7\mathrm{e}{+00}$ \\ \cmidrule(r){1-1}
\makecell[l]{\textsc{Polynomial}}                     &  &  &  &  &  &  &  &  &  &  \\
\quad $\alpha = 0.1$              & $7.2\mathrm{e}{+00}$ & $7.2\mathrm{e}{+00}$ & $7.2\mathrm{e}{-01}$ & $7.2\mathrm{e}{-01}$ & $-5.0\mathrm{e}{-324}$ & $7.2\mathrm{e}{+00}$ & $7.2\mathrm{e}{+00}$ & $7.2\mathrm{e}{-01}$  & $7.0\mathrm{e}{-01}$ & $5.0\mathrm{e}{-324}$ \\
\quad $\alpha = 0.5$              & $2.5\mathrm{e}{-02}$ & $2.5\mathrm{e}{-02}$ & $2.5\mathrm{e}{-03}$ & $2.5\mathrm{e}{-03}$ & $1.5\mathrm{e}{-323}$ & $2.5\mathrm{e}{-02}$ & $2.5\mathrm{e}{-02}$ & $2.5\mathrm{e}{-03}$ & $2.5\mathrm{e}{-03}$ & $9.4\mathrm{e}{-322}$ \\
\quad $\alpha = 1$                & $2.5\mathrm{e}{-05}$ & $2.5\mathrm{e}{-05}$ & $2.5\mathrm{e}{-06}$ & $2.5\mathrm{e}{-06}$ & $-1.7\mathrm{e}{-78}$ & $-8.6\mathrm{e}{-06}$ & $-8.3\mathrm{e}{-06}$ & $-8.6\mathrm{e}{-07}$ & $-6.0\mathrm{e}{-07}$ & $2.6\mathrm{e}{-07}$ \\
\rowcolor{red!30}
\quad $\alpha = 2$                & $4.8\mathrm{e}{+01}$ & $5.7\mathrm{e}{+01}$ & $4.8\mathrm{e}{+00}$ & $1.4\mathrm{e}{+01}$ & $9.0\mathrm{e}{+00}$ & $-1.9\mathrm{e}{+01}$ & $-1.9\mathrm{e}{+01}$ & $-1.9\mathrm{e}{+00}$ & $-1.8\mathrm{e}{+00}$ & $1.0\mathrm{e}{-01}$ \\ \cmidrule(r){1-1}
\makecell[l]{\textsc{Exponential}}                   &  &  &  &  &  &  &  &  &  &  \\
\quad $\gamma = 0.9999$           &  $1.5\mathrm{e}{-17}$ & $1.5\mathrm{e}{-17}$  & $2.4\mathrm{e}{-18}$ & $2.4\mathrm{e}{-18}$ & $-1.5\mathrm{e}{-323}$ & $1.9\mathrm{e}{-17}$ & $1.9\mathrm{e}{-17}$ & $2.3\mathrm{e}{-18}$ & $2.3\mathrm{e}{-18}$ & $0.0\mathrm{e}{+00}$ \\

\quad $\gamma = 0.999$            &  $1.8\mathrm{e}{-17}$ & $1.8\mathrm{e}{-17}$ & $-6.3\mathrm{e}{-18}$ & $-6.3\mathrm{e}{-18}$ & $-1.7\mathrm{e}{-163}$ & $2.3\mathrm{e}{-17}$ & $2.3\mathrm{e}{-17}$ & $7.0\mathrm{e}{-18}$ & $7.0\mathrm{e}{-18}$ & $0.0\mathrm{e}{+00}$ \\
\quad $\gamma = 0.99$             &  $1.1\mathrm{e}{-14}$ & $9.8\mathrm{e}{-15}$ & $1.1\mathrm{e}{-15}$ & $-3.5\mathrm{e}{-16}$ & $-1.5\mathrm{e}{-15}$ & $1.5\mathrm{e}{-16}$ & $1.5\mathrm{e}{-16}$ & $1.8\mathrm{e}{-18}$ & $1.8\mathrm{e}{-18}$ & $1.1\mathrm{e}{-65}$ \\
\rowcolor{red!30}
\quad $\gamma = 0.9$              &  $-7.2\mathrm{e}{+00}$ & $-6.2\mathrm{e}{+00}$  & $-7.2\mathrm{e}{-01}$ & $-2.4\mathrm{e}{+01}$ & $9.5\mathrm{e}{-01}$ & $-1.0\mathrm{e}{-04}$ & $-8.2\mathrm{e}{-05}$ & $-1.0\mathrm{e}{-05}$ & $1.1\mathrm{e}{-05}$ & $2.2\mathrm{e}{-05}$ \\ 
\bottomrule
\end{tabular}%
}
\end{table*}

We provide numerical results confirm our theoretical findings, evaluating three step-size schedules: constant (as analyzed in \cref{thm:lb_constant_step}), polynomially decreasing ($\eta_t=\eta/t^\alpha$, with $\alpha>0$, as in \cref{thm:lb_decreasing_step}), and exponentially decreasing ($\eta_t=\eta \gamma^t$, with $\gamma \in (0,1)$). The experiments, shown in \cref{tab:theory_exp} for comparison with and without momentum, confirm that momentum is affected by heterogeneity, and that that fast-decaying schedules negatively affect convergence to the optimum.

\paragraph{Constant and Slowly-decreasing Step-sizes}
Results in \cref{tab:theory_exp} show that, when the learning rate is constant or slowly decreasing (\ie{} $\alpha<1$), the final value at convergence always linearly depends on the heterogeneity bound $G$, and it is irrespective of initialization. This validates the theory, which predicts an exponential decay rate of the initial conditions and a linear decay of the perturbation caused by heterogeneity.
The result of constant learning rate and linear decay ($\alpha=1$) are equal in all cases but when the system is homogeneous (\ie{} $G=0$): in this case, since the decay rate of the initial conditions is exponential, a bigger step-size is better 
This motivates why the smaller the decay, the closer the solution is to the optimum, which is contrary to the heterogeneous cases.

\begin{figure*}[!ht]
    \centering
    \includegraphics[ width=0.49\textwidth]{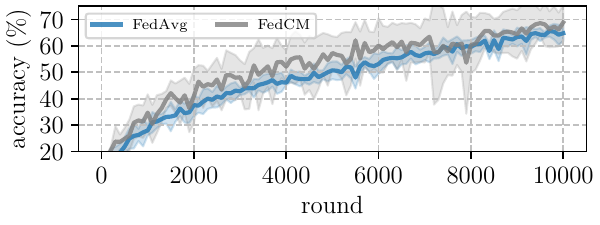}
    \includegraphics[ width=0.49\textwidth]{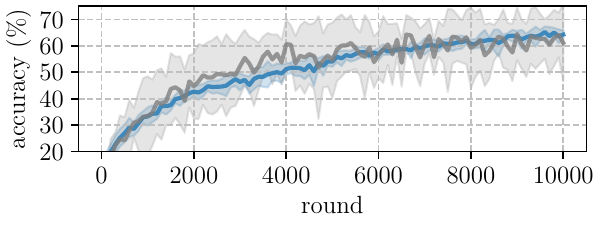}
    \vspace{-3mm}
    \caption{\textbf{\fedavg{} and \fedcm{} under cyclic participation:} under high heterogeneity and partial participation, FL-methods based on classical momentum do not offer a substantial improvement over simpler methods without momentum. Results on \cifar{10} with {\resnet} (left) and {\lenet} (right).
    The reference accuracy in centralized settings  is $\approx86\%$ for {\lenet} and $\approx 89\%$ for {\resnet}.
    }
    \label{fig:fl_exp}
\end{figure*}

\paragraph{Fast-decreasing Step-sizes}

\begin{table}[t]
\centering
\caption{\textbf{Impact of step-size with fast decaying polynomial schedule on \textsc{IGD} with momentum:} $\theta^t$ after $T = 10^6$ iterations for the learning problem in \cref{lemma:lb_construction}, with $\epsilon=10^{-2}$.
}
\label{tab:poly_alpha_1}
\resizebox{0.99\linewidth}{!}{%
\begin{tabular}{llrr}
\toprule
\multirow{2}{*}{\sc \thead{Polynomial \\ Decay Rate}} & \multicolumn{1}{c}{\multirow{2}{*}{\sc \thead{Initial \\ Step-Size}}} & \multicolumn{2}{c}{$G=10$} \\ \cmidrule(l){3-4}
         & \multicolumn{1}{c}{}                           & $\theta^0=0$ & $\theta^0=10$ \\
\midrule
$\alpha=1$ & $\eta=\frac{1(1+\beta)}{\mu (1-\beta)}-\epsilon$ &   $2.5\mathrm{e}{-06}$         &     $2.5\mathrm{e}{-06}$        \\
$\alpha=1$ & $\eta=\frac{1}{\mu} - \epsilon$                  &   $-3.9\mathrm{e}{-06}$        &    $-1.2\mathrm{e}{-04}$    \\
\bottomrule
\end{tabular}%
}
\end{table}
When $\alpha=1$, the decay rate of initialization and heterogeneity interact, as they are both polynomial, and the overall rate depends on the choice of the step-size. 
Indeed, as shown in \cref{tab:poly_alpha_1}, when $\eta > 1/\mu$ 
the solution depends on the heterogeneity, since the decay rate of the initialization is $\mathcal{O}(t^{-\mu\eta})$, which is faster than $\mathcal{O}(t^{-1})$: this makes the solution independent of $\theta^0$
On the contrary, $\eta<1/\mu$ the decay rate of the initialization is slower than $\mathcal{O}(t^{-1})$, so the final solution $\theta^t$ is different for $\theta^0=0$ and $\theta^0=10$.
When the step-size decay rate is too fast, the system does not converge to the optimum, but to a final value depending on initialization and heterogeneity, as highlighted by the red rows in \cref{tab:theory_exp}.

\subsection{Federated Learning Experiments}
\paragraph{Dataset and Models}
We use \cifar{10} with training images preprocessed by applying random crops, random horizontal flips and both train and test images finally normalized according to their mean and standard deviation.
As models, we used an architecture similar to \textsc{LeNet-5} as proposed in \cite{hsu2020FederatedVisual}, and a {\resnet} as described in \cite{he2015deep}, following the implementation provided in \cite{resnetCifarGithub}. Since batch normalization~\cite{ioffe2015batch} layers have been shown to hamper performance in learning from decentralized data with skewed label distribution \cite{pmlr-v119-hsieh20a}, we replaced them with group normalization \cite{Wu2018GroupNorm}.
\paragraph{Experimental Setting}
 we split the dataset among $|\mathcal{S}|=100$ clients following a common FL procedure proposed by \cite{hsu2020FederatedVisual}, and sample $C=10\%$ of them at each round.
We choose to simulate the most severe level of heterogeneity, since it has been showed to be a practical predictor of algorithms' performance with more complex architectures and large-scale datasets \cite{zaccone2025ghbm}.
Results are shown as average of $5$ independent runs, with standard deviation depicted with shaded areas.

\paragraph{Results}
\cref{fig:fl_exp} shows the test accuracy over training rounds of \fedavg{} and \fedcm{}. The experiments are conducted under cyclic participation, to reflect the setting analyzed in this paper and simulate a condition favorable to momentum.
The results clearly indicate that classical momentum is ineffective in high heterogeneous decentralized settings with partial participation. As in our theoretical experiments, this is motivated by the fact that momentum does not overcome the noise due to sampling only a subset of function components at each round. Similar experimental findings are reported for the case of random uniform client participation in \cite{zaccone2025ghbm}.

\section{Conclusions}

This paper addresses a gap in understanding the role of momentum in distributed optimization with statistical heterogeneity and partial worker participation. While momentum is appealing to build robustness to statistical heterogeneity, our work demonstrates that it does not inherently overcome the challenges posed by heterogeneous data.
By unveiling this fundamental limitation, this work 
provides a more realistic basis for its use in heterogeneous decentralized environments.

\bibliography{main}

\newpage
\appendices
\onecolumn

\section{Experimental Setting}
\label{app:exp}

\paragraph{Hyperparameters.}
As per the hyperparameters, for \fedavg{} and \lenet{} we search the server step-size $\eta \in \{2, 1.5, 1, 0.5, 0.1\}$ and local step-size $\eta_l \in \{0.1, 0.05, 0.01, 0.005\}$ and found the best performing to be $\eta = 1.5$ and $\eta_l=0.01$. For \resnet{}, we search the server step-size $\eta \in \{1.5, 1, 0.1\}$ and local step-size $\eta_l \in \{1, 0.5, 0.1, 0.01\}$ and found the best performing to be $\eta = 1$ and $\eta_l=0.5$.
Similarly, for \fedcm{} and \lenet{} we search the server step-size $\eta \in \{1, 0.5, 0.1, 0.05\}$ and local step-size $\eta_l \in \{1, 0.5, 0.1, 0.05\}$ and found the best performing to be $\eta = 0.1$ and $\eta_l=0.1$. For \resnet{}, we search the server step-size $\eta \in \{1.5, 1, 0.5, 0.1\}$ and local step-size $\eta_l \in \{1, 0.5, 0.1, 0.5\}$ and found the best performing to be $\eta = 1$ and $\eta_l=0.1$. The momentum factor is searched among $\beta \in \{0.95, 0.9, 0.85\}$ and set as $\beta=0.9$.

\paragraph{Metrics and Experimental protocol.}
We consider the model accuracy in predicting the correct class images belong to. Results are always reported as average of $5$ independent runs, with stardard deviation directly shown in \cref{fig:fl_exp}.

\newpage
\section{Deferred Proofs}
\label{appendix:theory}

\subsection{Auxiliary Lemmas}
\label{appendix:theory:aux_lemmas}

Here is a collection of some smaller technical lemmas that are used within the proofs of the main results.

\begin{lemma}
\label{lemma:sum_lt_int_2}
    Let $f(x)$ be a non-negative, monotonically decreasing function that is integrable over an interval $[a,b]$, where $a<b$ are integers. The following inequality holds:
    \begin{align*}
        &\sum_{k=a+1}^{b} f(k) \leq \int_{a}^{b} f(x)\, dx\,
    \end{align*}
\end{lemma}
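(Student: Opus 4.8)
The plan is to exploit monotonicity of $f$ to lower-bound the integral over each unit subinterval by the value of $f$ at its right endpoint, then sum and telescope. This is the standard integral-comparison argument for decreasing functions, and since both $a$ and $b$ are integers the bookkeeping is clean.

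First I would partition the integration domain as $[a,b]=\bigcup_{k=a+1}^{b}[k-1,k]$, a disjoint union (up to endpoints) of $b-a$ unit intervals. On each such interval, I would use the fact that $f$ is monotonically decreasing: for every $x\in[k-1,k]$ we have $x\le k$, hence $f(x)\ge f(k)$. Integrating this pointwise inequality over $[k-1,k]$, whose length is $1$, gives
\begin{equation*}
    f(k)=\int_{k-1}^{k} f(k)\,dx \;\le\; \int_{k-1}^{k} f(x)\,dx,
\end{equation*}
where integrability of $f$ over $[a,b]$ guarantees each piece is well defined.

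Next I would sum this inequality over $k=a+1,\dots,b$ and invoke additivity of the integral to collapse the right-hand side:
\begin{equation*}
    \sum_{k=a+1}^{b} f(k) \;\le\; \sum_{k=a+1}^{b}\int_{k-1}^{k} f(x)\,dx \;=\; \int_{a}^{b} f(x)\,dx,
\end{equation*}
which is exactly the claimed bound. I would also note that non-negativity of $f$ is not strictly needed for this direction of the inequality, but it keeps the statement aligned with the intended use (bounding tails of convergent sums), and monotonicity is the only property doing real work.

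There is no substantial obstacle here; the single point requiring care is the direction of the monotonicity comparison, namely that one must evaluate $f$ at the \emph{right} endpoint $f(k)$ (not $f(k-1)$) to obtain a lower bound for the integral on $[k-1,k]$, which is what yields the upper bound on the sum. The companion inequality with $f(k-1)$ on the left would instead give a lower bound on the integral, so I would be careful to pick the correct endpoint throughout.
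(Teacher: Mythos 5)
Your argument is correct and is essentially identical to the paper's proof: both compare $f(k)$ with the integral of $f$ over $[k-1,k]$ via monotonicity, then sum over $k$ and use additivity of the integral. Your added remark that only monotonicity (not non-negativity) is doing the work is accurate.
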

\begin{proof}
Since $f(x)$ is a monotonically decreasing function on the interval $[a,b]$, for any integer $k \in [a+1,b]$, and for any $x \in [k-1,k]$, we have that:
    \begin{align}
        f(k) \leq f(x)  \Rightarrow  
        \int_{k-1}^{k} f(k)\, dx &\leq \int_{k-1}^{k}f(x)\, dx \hspace{3cm}
    \intertext{Since $f(k)$ is constant \wrt{} the integration variable $x$, we have that:}
        f(k) &\leq \int_{k-1}^{k} f(x)\, dx\\
    \intertext{Summing up from $k=a+1$ to $k=b$ and using the additive property of integrals:}
        \sum_{k=a+1}^{b}f(k) &\leq \sum_{k=a+1}^{b}\int_{k-1}^{k}  f(x)\, dx \\
        &=\int_{a}^{b} f(x)\, dx 
    \end{align}
\end{proof}

\begin{lemma}
\label{lemma:lim_trans_matrix_1}
Let $\eta<\frac{2^\alpha}{\mu}$, and let the function $\Psi_1(t,s, \alpha)$ be:
\begin{equation*}
    \Psi_1(t,s, \alpha):= \prod_{k=s+1}^{t}\left(1-\frac{\mu{\eta}}{k^\alpha}\right)
\end{equation*}

Then, for any $t>=2$ and $1 \leq s<t$ the following holds:
    \begin{align*}
         \Psi_1(t,s, \alpha) &\leq
        \left\{\begin{alignedat}{3}
        &\exp\left( -\mu\eta \frac{t^{1-\alpha} - s^{1-\alpha}}{1-\alpha}\right) &\quad& \text{if } 0 < \alpha < 1 \\
        &\left(\frac{s}{t}\right)^{\mu\eta} &\quad& \text{if }\alpha = 1 \\
        &\exp\left(-\frac{\mu\eta}{2^\alpha}\right) &\quad& \text{if }\alpha > 1\\
        \end{alignedat}\right. \\ \\
        \Psi_1(t,s, \alpha) &\geq
        \left\{\begin{alignedat}{3}
        &\exp\left( -\frac{2^\alpha\mu\eta}{2^\alpha-\mu\eta} \frac{t^{1-\alpha} - s^{1-\alpha}}{1-\alpha}\right) &\quad& \text{if } 0 < \alpha < 1 \\
        &\left(\frac{s}{t}\right)^{\frac{2\mu\eta}{2-\mu\eta}} &\quad& \text{if }\alpha = 1 \\
        &\exp\left(-\frac{2^\alpha\mu\eta}{2^\alpha-\mu\eta}\zeta_H(\alpha, s+1)\right) &\quad& \text{if }\alpha > 1\\
        \end{alignedat}\right.
    \end{align*}
    where $\zeta_H(\alpha,s):= \sum_{k=s}^\infty \frac{1}{k^\alpha}$ is the Hurwitz zeta-function.
\end{lemma}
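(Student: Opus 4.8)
The plan is to pass to logarithms and reduce everything to controlling the single sum $\Sigma(t,s,\alpha):=\sum_{k=s+1}^{t}k^{-\alpha}$, which I would then compare against an integral (or, for $\alpha>1$, against its convergent tail). First I would check the factors are admissible: since $\eta<2^\alpha/\mu$ and every index obeys $k\ge s+1\ge 2$, each term satisfies $0<\frac{2^\alpha-\mu\eta}{2^\alpha}\le 1-\frac{\mu\eta}{k^\alpha}<1$, so $\ln\Psi_1(t,s,\alpha)=\sum_{k=s+1}^{t}\ln\!\big(1-\tfrac{\mu\eta}{k^\alpha}\big)$ is well defined and finite. Both bounds then amount to estimating this logarithmic sum.

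For the upper bound on $\Psi_1$ I would apply the elementary inequality $\ln(1-x)\le -x$ termwise, giving $\ln\Psi_1(t,s,\alpha)\le -\mu\eta\,\Sigma(t,s,\alpha)$. For the lower bound I would instead use $\ln(1-x)\ge -\frac{x}{1-x}$ (which follows from $\ln(1-x)=-\int_0^x\frac{dr}{1-r}$ and $\frac{1}{1-r}\le\frac{1}{1-x}$) together with the uniform estimate $1-\frac{\mu\eta}{k^\alpha}\ge\frac{2^\alpha-\mu\eta}{2^\alpha}$, valid because $k\ge 2$. This yields $\ln\Psi_1(t,s,\alpha)\ge -\frac{2^\alpha\mu\eta}{2^\alpha-\mu\eta}\,\Sigma(t,s,\alpha)$. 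This step already explains the structure of the statement: the bare factor $\mu\eta$ in the upper bounds and the inflated factor $\frac{2^\alpha\mu\eta}{2^\alpha-\mu\eta}$ in all three lower bounds.

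It remains to bound $\Sigma(t,s,\alpha)$ in closed form per regime. For the lower bound on $\Psi_1$ I need an \emph{upper} bound on $\Sigma$, which is exactly \cref{lemma:sum_lt_int_2} applied to the decreasing integrand $x\mapsto x^{-\alpha}$: $\Sigma(t,s,\alpha)\le\int_s^t x^{-\alpha}\,dx$. Evaluating gives $\frac{t^{1-\alpha}-s^{1-\alpha}}{1-\alpha}$ for $\alpha\ne 1$ and $\ln(t/s)$ for $\alpha=1$; for $\alpha>1$ I would instead dominate $\Sigma$ by its convergent tail $\sum_{k=s+1}^{\infty}k^{-\alpha}=\zeta_H(\alpha,s+1)$. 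Exponentiating reproduces the three stated lower bounds verbatim. For the upper bound on $\Psi_1$ I need a \emph{lower} bound on $\Sigma$, i.e. the companion of \cref{lemma:sum_lt_int_2}, $\Sigma(t,s,\alpha)\ge\int_{s+1}^{t+1}x^{-\alpha}\,dx$; in the case $\alpha>1$ the single-term estimate $\Sigma\ge(s+1)^{-\alpha}\ge 2^{-\alpha}$ (tightest at $s=1$) is what produces the constant $\exp(-\mu\eta/2^\alpha)$, and for $\alpha=1$ the telescoping identity $\prod_{k=s+1}^{t}(1-1/k)=s/t$ pins down the exponent on $(s/t)$.

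The main obstacle I anticipate is the upper-bound direction, since it forces a lower bound on $\Sigma$ — the less forgiving of the two integral comparisons. The natural companion estimate lands on the shifted endpoints $\int_{s+1}^{t+1}$ rather than $\int_s^t$, so matching the clean $\frac{t^{1-\alpha}-s^{1-\alpha}}{1-\alpha}$, $(s/t)^{\mu\eta}$, and $\exp(-\mu\eta/2^\alpha)$ forms requires carefully tracking this endpoint shift (and the size of the first term) rather than reading it off directly. Once the correct integral/tail comparisons are fixed, the remaining work — splitting into $0<\alpha<1$, $\alpha=1$, $\alpha>1$ and simplifying — is routine bookkeeping, so I would invest the effort in nailing down the sum estimates and their endpoints before exponentiating.
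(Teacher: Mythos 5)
Your route is the same as the paper's: pass to logarithms, use $\ln(1-x)\le -x$ for the upper bound and $\ln(1-x)\ge -x/(1-x)$ together with the uniform estimate $1-\mu\eta/k^\alpha\ge(2^\alpha-\mu\eta)/2^\alpha$ (valid since $k\ge 2$) for the lower bound, then compare $\Sigma(t,s,\alpha)=\sum_{k=s+1}^{t}k^{-\alpha}$ with an integral or, for $\alpha>1$, with its convergent tail. The lower bounds on $\Psi_1$ are complete and coincide with the paper's argument: there you need an \emph{upper} bound on $\Sigma$, which is exactly what \cref{lemma:sum_lt_int_2} supplies, and exponentiating reproduces all three stated expressions.

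The gap is the one you flagged yourself and did not close: the upper bound on $\Psi_1$ needs a \emph{lower} bound on $\Sigma$, and the companion comparison for a decreasing integrand gives $\Sigma(t,s,\alpha)\ge\int_{s+1}^{t+1}x^{-\alpha}\,dx$, not $\int_{s}^{t}x^{-\alpha}\,dx$. This endpoint shift cannot be tracked away: since $x\mapsto x^{-\alpha}$ is decreasing, $\Sigma(t,s,\alpha)$ is the right-endpoint Riemann sum for $\int_s^t x^{-\alpha}\,dx$ and is therefore strictly smaller than it, so the inequality your plan needs is false --- and so, in fact, is the clean stated bound itself: for $\alpha=1$, $s=1$, $t=2$, $\mu\eta=1/2$ one gets $\Psi_1=3/4>2^{-1/2}=(s/t)^{\mu\eta}$. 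What your argument actually yields is the shifted form $\Psi_1(t,s,1)\le\bigl((s+1)/(t+1)\bigr)^{\mu\eta}$ (and analogously for $0<\alpha<1$), which has the same asymptotics and is all the downstream lemmas use; the paper's own proof papers over this by invoking \cref{lemma:sum_lt_int_2} in the reversed direction at precisely this step. Two smaller points you share with the paper: for $\alpha>1$ the chain $\Sigma\ge(s+1)^{-\alpha}\ge 2^{-\alpha}$ holds only at $s=1$ (for $s\ge 2$ the second inequality reverses), so the uniform constant $\exp(-\mu\eta/2^\alpha)$ is only justified for $s=1$ and the honest bound for general $s$ is $\exp(-\mu\eta/(s+1)^\alpha)$; and the telescoping identity $\prod_{k=s+1}^{t}(1-1/k)=s/t$ pins the exponent only when $\mu\eta=1$, so it does not by itself deliver $(s/t)^{\mu\eta}$ for general $\mu\eta$.
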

\begin{proof}
    \underline{\textbf{Case $0< \alpha \leq 1:$}} \quad For the upper bound, we have that
    \begin{align}
        \Psi_1(t,s, \alpha) &= \prod_{k=s+1}^{t} \left( 1-\frac{\mu\eta}{k^\alpha}\right) \\
        &= \exp\left( \sum_{k=s+1}^{t} \ln \left( 1- \frac{\mu\eta}{k^\alpha}\right)\right) \label{eq:lemma:lim_trans_matrix_1:prod_to_sum} \\
        &\leq \exp\left( - \sum_{k=s+1}^{t} \frac{\mu\eta}{k^\alpha}\right) \label{eq:lemma:lim_trans_matrix_1:first_step_ub}\\
        & \overset{\ref{lemma:sum_lt_int_2}}{\leq} \exp\left( -\mu\eta \int_{s}^{t} \frac{1}{k^\alpha} \,dk\right)\\
        &=\left\{
        \begin{alignedat}{2}
            & \exp\left( -\mu\eta \frac{t^{1-\alpha} - s^{1-\alpha}}{1-\alpha}\right) &\quad& \text{if } 0 < \alpha < 1 \\
            & \exp\left( -\mu\eta \log\left(\frac{t}{s}\right)\right) =\left(\frac{s}{t}\right)^{\mu\eta}  &\quad& \text{if } \alpha = 1
        \end{alignedat}
        \right.
    \end{align}
    where in the step (\ref{eq:lemma:lim_trans_matrix_1:first_step_ub}) we used the inequality $\ln(1-x) \leq -x$ for $x >0$, with $x=\frac{\mu\eta}{k^\alpha}$. 
    Similarly, for the lower bound we have that
    \begin{align}
        \Psi_1(t,s, \alpha) &= \prod_{k=s+1}^{t} \left( 1-\frac{\mu\eta}{k^\alpha}\right) \\
        &= \exp\left( \sum_{k=s+1}^{t} \ln \left( 1- \frac{\mu\eta}{k^\alpha}\right)\right) \\
        & \geq \exp\left(\sum_{k=s+1}^{t} \underbrace{-\frac{1}{1 - \mu\eta/k^\alpha}}_{\text{increasing}}\frac{\mu\eta}{k^\alpha}\right) \label{eq:step_intermediate} \\
        &\overset{k\geq s+1\geq2}{\geq} \exp\left( - \frac{2^\alpha}{2^\alpha-\mu\eta} \sum_{k=s+1}^{t} \frac{\mu\eta}{k^\alpha}\right) \label{eq:lemma:lim_trans_matrix_1:first_step_lb}\\
        &\overset{\text{\ref{lemma:sum_lt_int_2}}}{\geq} \exp\left( -\frac{2^\alpha\mu\eta}{(2^\alpha-\mu\eta)} \int_{s}^{t} \frac{1}{k^\alpha} \,dk\right) \\
        &=\left\{
        \begin{alignedat}{2}
            & \exp\left( -\frac{2^\alpha\mu\eta}{2^\alpha-\mu\eta} \frac{t^{1-\alpha} - s^{1-\alpha}}{(1-\alpha)}\right) &\quad& \text{if } 0 < \alpha < 1 \\
            & \exp\left( -\frac{2\mu\eta}{2-\mu\eta} \ln\left(\frac{t}{s}\right)\right) = \left(\frac{s}{t}\right)^{\frac{2\mu\eta}{2-\mu\eta}}  &\quad& \text{if } \alpha = 1
        \end{alignedat}
        \right.
    \end{align}
    where in the step (\ref{eq:step_intermediate}) we used the inequality $\log(1-x) \geq -\frac{x}{1-x}$ for $x>0$, with $x=\frac{\mu\eta}{k^\alpha}$. 

    \underline{\textbf{Case $\alpha > 1:$}} \quad
    For $\alpha > 1$, we have that 
    \begin{equation}
        \label{eq:series_boundaries}
        (s+1)^{-\alpha}\leq\sum_{k=s+1}^{t}k^{-\alpha} <\sum_{k=s+1}^{\infty}k^{-\alpha} < \infty \,\,\forall\,t
    \end{equation}
    Therefore, for the upper bound we have
    \begin{equation}
        \Psi_1(t,s,\alpha) \overset{\text{\cref{eq:lemma:lim_trans_matrix_1:first_step_ub}}}{\leq} \exp\left( - \sum_{k=s+1}^{t} \frac{\mu\eta}{k^\alpha}\right)
        \overset{\text{\cref{eq:series_boundaries}}}{\leq} \exp\left(-\frac{\mu\eta}{(s+1)^\alpha} \right) \overset{s\geq1}{\leq} \exp\left(-\frac{\mu\eta}{2^\alpha}\right)
    \end{equation}

    For the lower bound we have
    \begin{align}
        \Psi_1(t,s,\alpha) &\overset{\text{\cref{eq:lemma:lim_trans_matrix_1:first_step_lb}}}{\geq} \exp\left( - \frac{2^\alpha\mu\eta}{2^\alpha-\mu\eta} \sum_{k=s+1}^{t} \frac{1}{k^\alpha}\right) \\
        & \overset{\text{\cref{eq:series_boundaries}}}{\geq} \exp\left( - \frac{2^\alpha\mu\eta}{2^\alpha-\mu\eta} \sum_{k=s+1}^{\infty} \frac{1}{k^\alpha}\right) \\
        &= \exp\left(- \frac{2^\alpha\mu\eta}{2^\alpha-\mu\eta}\zeta_H(\alpha,s+1) \right)
    \end{align}
\end{proof}

\begin{corollary}
\label{corollary:lim_trans_matrix_1}
    Let $\eta<\frac{2^\alpha}{\mu}$, and let the function $\Psi_1(t,s, \alpha)$ be:
    \begin{equation*}
        \Psi_1(t,s, \alpha):= \prod_{k=s+1}^{t}\left(1-\frac{\mu{\eta}}{k^\alpha}\right)
    \end{equation*}
    Then the following holds:
    \begin{equation*}
        \lim_{t \to \infty} \Psi_1(t,s, \alpha)= \lim_{t \to \infty} \left\{\begin{alignedat}{2}
        &\exp\left( -t^{1-\alpha}\right) &\quad& \text{if } 0 < \alpha < 1 \\
        &\left(\frac{1}{t}\right)^{\mu\eta} &\quad& \text{if }\alpha = 1
        \end{alignedat}\right.
    \end{equation*}
    Moreover, for $\alpha>1$, it holds that:
    \begin{align*}
        \lim_{t\to \infty}\Psi_1(t,s,\alpha)&=c, \quad &c \in \left(\exp\left(-\frac{2^\alpha\mu\eta}{2^\alpha-\mu\eta}\zeta_H(\alpha, s+1)\right), \exp\left(-\frac{\mu\eta}{2^\alpha}\right)\right)
    \end{align*}
\end{corollary}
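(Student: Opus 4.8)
The plan is to obtain the corollary as a direct consequence of the two-sided bounds in \cref{lemma:lim_trans_matrix_1}, handling the three ranges of $\alpha$ separately: a squeeze argument disposes of $\alpha \le 1$, while $\alpha > 1$ requires a monotone-convergence argument since the product no longer vanishes.

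For $0 < \alpha < 1$ I would sandwich $\Psi_1(t,s,\alpha)$ between the lemma's lower and upper bounds, both of the form $\exp\!\left(-C\,\frac{t^{1-\alpha}-s^{1-\alpha}}{1-\alpha}\right)$ with $C>0$ (namely $C=\mu\eta$ for the upper bound and $C=\frac{2^\alpha\mu\eta}{2^\alpha-\mu\eta}$ for the lower one). Since $1-\alpha>0$, the term $t^{1-\alpha}\to\infty$ while $s^{1-\alpha}$ stays fixed, so both bounds tend to $0$ and the squeeze theorem gives $\lim_{t\to\infty}\Psi_1(t,s,\alpha)=0$; because each exponent is $\Theta(t^{1-\alpha})$, the decay is of order $\exp(-t^{1-\alpha})$. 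The case $\alpha=1$ is identical in structure: the lemma yields $(s/t)^{2\mu\eta/(2-\mu\eta)} \le \Psi_1(t,s,1) \le (s/t)^{\mu\eta}$, both exponents are strictly positive, so both bounds vanish and the squeeze again gives $\Psi_1\to 0$, with effective (slowest) rate $t^{-\mu\eta}$ as stated.

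For $\alpha>1$ the argument changes, since the product converges to a nonzero constant. I would first note that the hypothesis $\eta<2^\alpha/\mu$ together with $k \ge s+1 \ge 2$ forces every factor $1-\mu\eta/k^\alpha$ into $(0,1)$, so the finite products $\Psi_1(t,s,\alpha)$ are positive and strictly decreasing in $t$; the lemma's lower bound shows they stay above the positive constant $\exp\!\left(-\frac{2^\alpha\mu\eta}{2^\alpha-\mu\eta}\zeta_H(\alpha,s+1)\right)$. By the monotone convergence theorem the limit $c$ therefore exists and is positive---equivalently, $\log\Psi_1=\sum_{k=s+1}^t \log(1-\mu\eta/k^\alpha)$ converges absolutely by comparison with the convergent $p$-series $\sum_k k^{-\alpha}$. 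Passing to the limit $t\to\infty$ in the lemma's two-sided bounds, which hold for every $t$, places $c$ inside the stated interval, and the endpoints are strict because the elementary inequalities used to derive those bounds ($\log(1-x)<-x$ and $\log(1-x)>-x/(1-x)$ for $x\in(0,1)$) are strict term by term.

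The only genuinely delicate point is the $\alpha>1$ regime: one must recognize that the product no longer decays to zero and switch from the squeeze argument to monotone convergence (or, equivalently, to convergence of the associated log-series by $p$-series comparison), and then justify the open-interval endpoints from the strictness of the underlying inequalities. The cases $\alpha\le1$ are routine bookkeeping on the bounds already proved in \cref{lemma:lim_trans_matrix_1}.
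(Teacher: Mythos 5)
Your proof is correct and follows essentially the same route as the paper, which likewise obtains the corollary by passing to the limit in the two-sided bounds of \cref{lemma:lim_trans_matrix_1} for each range of $\alpha$. If anything, your treatment of the $\alpha>1$ case (monotone convergence to establish that the limit $c$ exists, plus the term-by-term strictness argument for the open interval) is more careful than the paper's one-line justification, but it is the same underlying argument.
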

\begin{proof}
    The proof of the statement follows from taking the limit for $t \to \infty$, for each range of $\alpha$, of upper bounds in \cref{lemma:lim_trans_matrix_1}, which give the slowest decay.
\end{proof}

\begin{lemma}
\label{lemma:lim_trans_matrix_2}
Let $\mu,\eta$ and $\beta$ positive constants, and let the function $\Psi_2(t,s, \alpha)$ be:
\begin{equation*}
    \Psi_2(t,s, \alpha):= \prod_{k=s+1}^{t}\left( \beta + \frac{\mu\eta\beta}{k^\alpha}\right)
\end{equation*}
For any $1 \leq s < t$ and $\alpha > 0$, the following holds:
    \begin{align*}
         \Psi_2(t,s, \alpha) &\leq 
        \left\{\begin{alignedat}{3}
            &\beta^{t-s}\exp\left(\mu\eta \frac{t^{1-\alpha}-s^{1-\alpha}}{1-\alpha}\right) &\quad& \text{if } 0 < \alpha < 1 \\
            &\beta^{t-s}\left(\frac{t}{s}\right)^{\mu\eta} &\quad& \text{if } \alpha = 1 \\
            &\beta^{t-s}\exp\left(\mu\eta\zeta_H(\alpha,s+1)\right) &\quad& \text{if } \alpha > 1
        \end{alignedat}\right.\\
        \Psi_2(t,s, \alpha) &>0
    \end{align*}
     where $\zeta_H(\alpha,s):= \sum_{k=s}^\infty \frac{1}{k^\alpha}$ is the Hurwitz zeta-function.
\end{lemma}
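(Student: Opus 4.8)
The plan is to reduce the claim to the integral-comparison machinery already developed for $\Psi_1$. First I would factor the constant $\beta$ out of every term, writing
\begin{equation*}
\Psi_2(t,s,\alpha) = \prod_{k=s+1}^{t}\beta\left(1+\frac{\mu\eta}{k^\alpha}\right) = \beta^{t-s}\prod_{k=s+1}^{t}\left(1+\frac{\mu\eta}{k^\alpha}\right),
\end{equation*}
since the index $k$ ranges over exactly $t-s$ values. Positivity is then immediate: each factor $\beta+\mu\eta\beta/k^\alpha$ is a product of strictly positive quantities, so $\Psi_2(t,s,\alpha)>0$, and it remains only to bound the residual product $\prod_{k=s+1}^{t}(1+\mu\eta/k^\alpha)$ from above.

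For the upper bound I would pass to logarithms using the elementary inequality $1+x\le e^x$ (valid for all $x$), which converts the product into an exponential of a sum:
\begin{equation*}
\prod_{k=s+1}^{t}\left(1+\frac{\mu\eta}{k^\alpha}\right) \le \exp\left(\sum_{k=s+1}^{t}\frac{\mu\eta}{k^\alpha}\right) = \exp\left(\mu\eta\sum_{k=s+1}^{t}\frac{1}{k^\alpha}\right).
\end{equation*}
This mirrors the upper-bound step for $\Psi_1$ in \cref{lemma:lim_trans_matrix_1}, with the crucial difference that the sign now works in our favor: no delicate $\log(1-x)\ge -x/(1-x)$ correction is needed, because we are bounding above a quantity whose factors exceed one, rather than bounding below.

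The remaining task is to control $\sum_{k=s+1}^{t}k^{-\alpha}$, and I would split on $\alpha$. For $0<\alpha\le 1$ the summand $x^{-\alpha}$ is non-negative and decreasing, so \cref{lemma:sum_lt_int_2} gives $\sum_{k=s+1}^{t}k^{-\alpha}\le\int_s^t x^{-\alpha}\,dx$; evaluating the integral yields $(t^{1-\alpha}-s^{1-\alpha})/(1-\alpha)$ when $0<\alpha<1$ and $\ln(t/s)$ when $\alpha=1$, the latter exponentiating to $(t/s)^{\mu\eta}$. For $\alpha>1$ I would instead bound the partial sum by its convergent tail, $\sum_{k=s+1}^{t}k^{-\alpha}\le\sum_{k=s+1}^{\infty}k^{-\alpha}=\zeta_H(\alpha,s+1)$, which produces the stated $t$-independent constant. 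Recombining each case with the factored $\beta^{t-s}$ gives the three branches of the claim.

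I expect no substantive obstacle here: the argument is essentially the upper-bound half of \cref{lemma:lim_trans_matrix_1} transplanted to a product of terms larger than one. The only point requiring attention is to switch from the integral comparison to the Hurwitz-zeta tail exactly at $\alpha>1$, so that the bound is expressed in the constant form quoted in the statement rather than as a (finite) integral $(s^{1-\alpha}-t^{1-\alpha})/(\alpha-1)$.
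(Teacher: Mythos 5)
Your proposal is correct and follows essentially the same route as the paper's proof: factor out $\beta^{t-s}$, bound the residual product via $1+x\le e^x$ (equivalent to the paper's $\ln(x)\le x-1$), then compare the resulting sum to an integral via \cref{lemma:sum_lt_int_2} for $0<\alpha\le 1$ and to the convergent tail $\zeta_H(\alpha,s+1)$ for $\alpha>1$. No gaps.
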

\begin{proof}
    The fact that $\Psi_2(t,s, \alpha)$ is positive (lower bound) is trivial. For the upper bound, we can write
    \begin{align}
        \Psi_2(t,s, \alpha) &= \prod_{k=s+1}^{t}\left( \beta + \frac{\mu\eta\beta}{k^\alpha}\right) \\
        &= \beta^{t-s} \prod_{k=s+1}^{t}\left( 1 + \frac{\mu\eta}{k^\alpha}\right) \\
        &= \beta^{t-s} \exp\left(\sum_{k=s+1}^{t}\ln\left( 1 + \frac{\mu\eta}{k^\alpha}\right)\right) \\
        &\leq \beta^{t-s} \exp\left(\sum_{k=s+1}^{t} \frac{\mu\eta}{k^\alpha}\right)
        \label{eq:b2_intermediate}
    \end{align}
    where in the last step we used the inequality $\ln(x)\leq x-1\,\,\forall x>0$, with $x=\left(1+\frac{\mu\eta}{k^\alpha}\right)>0$, which is always verified since $\mu,\eta,k>0$.
    Now, we differentiate the next steps depending on the value of $\alpha$.

    \underline{\textbf{Case $0< \alpha < 1:$}} \quad Since the function within the summation in \cref{eq:b2_intermediate} is decreasing, we use \cref{lemma:sum_lt_int_2}:
    \begin{align}
        0<\Psi_2(t,s, \alpha) 
        &\leq \beta^{t-s} \exp\left(\mu\eta\sum_{k=s+1}^{t} \frac{1}{k^\alpha}\right)\\
        &\overset{\text{\ref{lemma:sum_lt_int_2}}}{\leq} §\beta^{t-s} \exp\left( \mu\eta\int_{s}^{t} \frac{1}{k^\alpha}\,dk\right) \\
        &= \beta^{t-s}\exp\left( \mu\eta\frac{t^{1-\alpha} - s^{1-\alpha}}{1-\alpha}\right)
    \end{align}
    
    \underline{\textbf{Case $\alpha=1:$}} \quad Using \cref{lemma:sum_lt_int_2} as in the previous case, we have that:
    \begin{align}
        0<\Psi_2(t,s, \alpha)  &\leq \beta^{t-s} \exp\left(\mu\eta\sum_{k=s+1}^{t} \frac{1}{k^\alpha}\right)\\
        &\overset{\text{\ref{lemma:sum_lt_int_2}}}{\leq} \beta^{t-s} \exp\left( \mu\eta\int_{s}^{t} \frac{1}{k}\, dk\right) \\
        &= \beta^{t-s}\exp\left( \mu\eta \ln\left(\frac{t}{s}\right)\right) \\
        &= \beta^{t-s} \left( \frac{t}{s}\right)^{\mu\eta}
    \end{align}

    \underline{\textbf{Case $\alpha > 1:$}} 
    \begin{align}
        0<\Psi_2(t,s, \alpha) &\leq \beta^{t-s} \exp\left(\mu\eta\sum_{k=s+1}^{t} \underbrace{\frac{1}{k^\alpha}}_{>0}\right) \\
        &\leq \beta^{t-s} \exp\left(\mu\eta\sum_{k=s+1}^{\infty} \frac{1}{k^\alpha}\right) \\
        &= \beta^{t-s} \exp\left(\mu\eta\zeta_H(\alpha,s+1)\right)
    \end{align}
This concludes the proof.
\end{proof}

\begin{corollary}
\label{corollary:lim_trans_matrix_2}
    Let $\mu,\eta$ and $\beta$ positive constants, and let the function $\Psi_2(t,s, \alpha)$ be:
    \begin{equation*}
        \Psi_2(t,s, \alpha):= \prod_{k=s+1}^{t}\left( \beta + \frac{\mu\eta\beta}{k^\alpha}\right)
    \end{equation*}
    Then the following holds:
        \begin{equation*}
            \lim_{t \to \infty} \Psi_2(t,s, \alpha) =
            \lim_{t \to \infty} \beta^t
        \end{equation*}
\end{corollary}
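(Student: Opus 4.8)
The plan is to prove the claim by a squeeze argument built directly on the two-sided bounds of \cref{lemma:lim_trans_matrix_2}. Since $\beta$ here is the momentum factor with $\beta\in[0,1)$, the right-hand side satisfies $\lim_{t\to\infty}\beta^t = 0$; hence it suffices to show that $\Psi_2(t,s,\alpha)\to 0$ as $t\to\infty$. The lemma already supplies the lower bound $\Psi_2(t,s,\alpha)>0$ for every $t$, so the entire task reduces to driving the upper bound to zero.

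The unifying observation is that, in each regime of $\alpha$, the upper bound of \cref{lemma:lim_trans_matrix_2} factors as $\beta^{t-s}=\beta^{-s}\beta^{t}$ multiplied by a term that grows at most sub-exponentially in $t$, so the exponential decay of $\beta^{t}$ dominates. I would verify this case by case. When $\alpha>1$ the extra factor $\exp\!\big(\mu\eta\,\zeta_H(\alpha,s+1)\big)$ is a constant independent of $t$, so the bound is a constant multiple of $\beta^{t}$ and tends to $0$ at once. When $\alpha=1$ the extra factor is the polynomial $(t/s)^{\mu\eta}$, and exponential decay beats polynomial growth, so $\beta^{t}\,t^{\mu\eta}\to 0$. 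When $0<\alpha<1$ the bound is, up to constants, $\beta^{t}\exp\!\big(\mu\eta\,t^{1-\alpha}/(1-\alpha)\big)$; collecting the exponent as $t\ln\beta + \mu\eta\,t^{1-\alpha}/(1-\alpha)$ and using $\ln\beta<0$ together with $1-\alpha<1$, the linear term dominates the sub-linear one and the exponent diverges to $-\infty$, forcing the bound to $0$.

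With the upper bound shown to vanish in every case and the lower bound fixed at $0<\Psi_2(t,s,\alpha)$, the squeeze theorem yields $\lim_{t\to\infty}\Psi_2(t,s,\alpha)=0=\lim_{t\to\infty}\beta^{t}$, as claimed. The only point requiring a moment's care is the regime $0<\alpha<1$, where one must confirm that the exponential decay overwhelms the $\exp\!\big(\Theta(t^{1-\alpha})\big)$ growth; this is immediate once the exponents are combined, and the remaining two cases are routine limits of the form \quotes{exponential beats polynomial or constant}.
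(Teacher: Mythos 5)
Your proof is correct and follows essentially the same route as the paper: the paper's own proof is a one-line appeal to the squeeze theorem applied to the upper and lower bounds of \cref{lemma:lim_trans_matrix_2}, which is exactly what you do, only with the case-by-case verification that the upper bound vanishes written out explicitly (including the one non-trivial check that $t\ln\beta$ dominates $\Theta(t^{1-\alpha})$ when $0<\alpha<1$). No gaps.
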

\begin{proof}
    The proof of the statement follows from taking the limit for $t \to \infty$, for each range of $\alpha$, of both upper and lower bounds of $\Psi_2(t,s,\alpha)$ in \cref{lemma:lim_trans_matrix_2} and using the squeeze theorem.
\end{proof}

\begin{lemma}
Let $\Psi_1(t,s,\alpha)$ as defined in \cref{lemma:lim_trans_matrix_1}, and let the summation $S(t,\alpha)$ be:
\begin{equation*}
    S(t,\alpha):=\sum_{s=2}^{t} \Psi_1(t,s,\alpha)\frac{1}{s^{2\alpha}}
\end{equation*}
\label{lemma:lim_sum_trans_matrix_1}
    Then, for any $\alpha>1$ the following holds:
    \begin{align*}
        \frac{\exp\left( -\frac{2^\alpha\mu\eta}{2^\alpha-\mu\eta}\zeta_R\left(\alpha\right)\right)}{2\alpha - 1} \leq \lim_{t \to \infty} {S}(t,\alpha) \leq \zeta_R(2\alpha)
    \end{align*}
\end{lemma}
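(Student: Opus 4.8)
The plan is to pass to the limit $t\to\infty$ termwise and then sandwich the resulting series using the two-sided estimates for $\Psi_1$ from \cref{lemma:lim_trans_matrix_1} in the regime $\alpha>1$. Because $\eta<2^\alpha/\mu$, every factor $1-\mu\eta/k^\alpha$ with $k\ge 2$ lies in $(0,1)$, so each summand $\Psi_1(t,s,\alpha)s^{-2\alpha}$ is nonnegative, nonincreasing in $t$, and dominated by the summable envelope $s^{-2\alpha}$ (summable since $2\alpha>1$). Dominated convergence then yields
\begin{equation*}
\lim_{t\to\infty}S(t,\alpha)=\sum_{s=2}^{\infty}\Psi_1^{\infty}(s)\,\frac{1}{s^{2\alpha}},\qquad \Psi_1^{\infty}(s):=\prod_{k=s+1}^{\infty}\Bigl(1-\frac{\mu\eta}{k^{\alpha}}\Bigr),
\end{equation*}
reducing the statement to estimating a convergent weighted $p$-series whose weights lie in $(0,1)$.

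For the upper bound I would simply discard the weights using $\Psi_1^{\infty}(s)\le 1$, which follows because each factor is below one. This gives
\begin{equation*}
\lim_{t\to\infty}S(t,\alpha)\le\sum_{s=2}^{\infty}\frac{1}{s^{2\alpha}}=\zeta_R(2\alpha)-1<\zeta_R(2\alpha),
\end{equation*}
so the claimed upper bound holds with a full unit of slack.

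For the lower bound I would insert the $\alpha>1$ estimate of \cref{lemma:lim_trans_matrix_1}, namely $\Psi_1^{\infty}(s)\ge\exp\!\bigl(-\tfrac{2^{\alpha}\mu\eta}{2^{\alpha}-\mu\eta}\,\zeta_H(\alpha,s+1)\bigr)$, and make the exponent uniform in $s$ through the monotonicity $\zeta_H(\alpha,s+1)\le\zeta_H(\alpha,1)=\zeta_R(\alpha)$. This pulls the constant $\exp\!\bigl(-\tfrac{2^{\alpha}\mu\eta}{2^{\alpha}-\mu\eta}\zeta_R(\alpha)\bigr)$ outside the sum and leaves $\sum_{s=2}^{\infty}s^{-2\alpha}$ to be bounded from below. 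Comparing this decreasing series with its integral (the complementary inequality to \cref{lemma:sum_lt_int_2}, i.e. $f(s)\ge\int_s^{s+1}f$ for decreasing $f$) gives $\sum_{s=2}^{\infty}s^{-2\alpha}\ge\int_{2}^{\infty}x^{-2\alpha}\,dx=\tfrac{2^{1-2\alpha}}{2\alpha-1}$, which reproduces the $(2\alpha-1)^{-1}$ shape of the target.

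The main obstacle is precisely the constant multiplying $(2\alpha-1)^{-1}$ in the lower bound. The uniformization $\zeta_H(\alpha,s+1)\le\zeta_R(\alpha)$ is wasteful for large $s$, where $\Psi_1^{\infty}(s)\to 1$, and the elementary integral comparison naturally produces the prefactor $2^{1-2\alpha}$ rather than $1$. I would therefore expect the real effort to lie in tightening this last step — either by retaining the $s$-dependence of $\zeta_H(\alpha,s+1)$ inside the sum rather than bounding it by $\zeta_R(\alpha)$ at once, or by checking that the coarser constant is all that the downstream use of this lemma requires. The limit interchange and the upper bound, by contrast, are routine.
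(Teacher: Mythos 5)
Your approach is essentially the paper's: bound $\Psi_1(t,s,\alpha)$ above by a constant (the paper uses $\exp(-\mu\eta/2^\alpha)\le 1$) to get $\zeta_R(2\alpha)$, and below via the $\alpha>1$ estimate of \cref{lemma:lim_trans_matrix_1}, uniformized through $\zeta_H(\alpha,s+1)\le\zeta_R(\alpha)$, followed by an integral comparison of $\sum_{s=2}^t s^{-2\alpha}$. The only structural difference is that you interchange limit and sum first (dominated convergence), whereas the paper bounds $S(t,\alpha)$ for each finite $t$ and then lets $t\to\infty$; both are fine, and your upper bound $\zeta_R(2\alpha)-1$ is marginally tighter.

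The ``obstacle'' you flag in the lower bound is not a defect of your argument --- it is a genuine slip in the paper. The paper computes $\int_2^t s^{-2\alpha}\,ds=\frac{t^{1-2\alpha}-2^{1-2\alpha}}{1-2\alpha}$ but then transcribes its limit as $\frac{1}{2\alpha-1}$ instead of the correct $\frac{2^{1-2\alpha}}{2\alpha-1}$, which is exactly the prefactor you obtain. The constant as stated in the lemma cannot be recovered by any tightening: since every weight satisfies $\Psi_1^{\infty}(s)<1$, the limit is at most $\zeta_R(2\alpha)-1$, which decays like $2^{-2\alpha}$ as $\alpha\to\infty$, while the claimed lower bound $\exp\bigl(-\tfrac{2^\alpha\mu\eta}{2^\alpha-\mu\eta}\zeta_R(\alpha)\bigr)/(2\alpha-1)$ decays only like $1/\alpha$; so the stated inequality actually fails for large $\alpha$. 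The bound your argument (and, done correctly, the paper's) delivers is $\frac{2^{1-2\alpha}}{2\alpha-1}\exp\bigl(-\tfrac{2^\alpha\mu\eta}{2^\alpha-\mu\eta}\zeta_R(\alpha)\bigr)\le\lim_{t\to\infty}S(t,\alpha)$. Since the lemma is only used downstream to certify that $\lim_{t\to\infty}S(t,\alpha)$ is a strictly positive constant ($g(2\alpha)$ in the proof of \cref{thm:lb_decreasing_step}), the corrected constant suffices there; your instinct to ``check what the downstream use requires'' rather than chase the stated prefactor is the right resolution.
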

\begin{proof}
    For $\alpha >1$, we have that:
    \begin{align}
         S(t,\alpha) 
         &\overset{\text{\ref{lemma:lim_trans_matrix_1}}}{\leq} \sum_{s=2}^{t} \frac{1}{s^{2\alpha}}\exp\left(-\frac{\mu\eta}{2^\alpha} \right) \\
         &\leq \sum_{s=1}^{\infty} \frac{1}{s^{2\alpha}} = \zeta_R(2\alpha) \label{eq:lemma:lim_sum_trans_matrix_1:final_ub_alphagt1}
    \end{align}
    On the other hand:
    \begin{align}
        S(t,\alpha) &\overset{\text{\ref{lemma:lim_trans_matrix_1}}}{\geq} \sum_{s=2}^{t}\frac{1}{s^{2\alpha}} \exp\left( -\frac{2^\alpha\mu\eta}{2^\alpha-\mu\eta}\zeta_H\left(\alpha,s \right)\right)\\
        &\geq \exp\left( -\frac{2^\alpha\mu\eta}{2^\alpha-\mu\eta}\zeta_R\left(\alpha\right)\right)\sum_{s=2}^{t}\frac{1}{s^{2\alpha}} \\
        &\geq \exp\left( -\frac{2^\alpha\mu\eta}{2^\alpha-\mu\eta}\zeta_R\left(\alpha\right)\right)\int_{2}^{t}\frac{1}{s^{2\alpha}}\,ds \\
        &= \exp\left( -\frac{2^\alpha\mu\eta}{2^\alpha-\mu\eta}\zeta_R\left(\alpha\right)\right) \frac{t^{1-2\alpha}-2^{1-\alpha}}{1-2\alpha} \\
        &= \exp\left( -\frac{2^\alpha\mu\eta}{2^\alpha-\mu\eta}\zeta_R\left(\alpha\right)\right) \left( \frac{1}{2\alpha -1} - \frac{2^{1-\alpha}}{(2\alpha-1)t^{2\alpha-1}}\right) \label{eq:lemma:lim_sum_trans_matrix_1:final_lb_alphagt1}
    \end{align}
    Since $\lim_{t \to \infty}1/t^{2\alpha-1}=0$ because $2\alpha-1>0$ since $\alpha>1$. Putting together the results of \cref{eq:lemma:lim_sum_trans_matrix_1:final_lb_alphagt1,eq:lemma:lim_sum_trans_matrix_1:final_ub_alphagt1}, we have that:
    \begin{align}
        \frac{\exp\left( -\frac{2^\alpha\mu\eta}{2^\alpha-\mu\eta}\zeta_R\left(\alpha\right)\right)}{2\alpha - 1} \leq \lim_{t \to \infty} S(t,\alpha) \leq \zeta_R(2\alpha)
    \end{align}
\end{proof}

\begin{lemma}
\label{lemma:lim_sum_trans_matrix_2}
Let $\alpha>0, n>0$ and $\Psi_2(t,s,\alpha) := \prod_{k=s+1}^{t}\left(\beta+\frac{\mu{\eta}\beta}{k^\alpha}\right)$. Then with $\beta \in [0,1)$ the following holds:
    \begin{equation*}
        \lim_{t \to \infty} \sum_{s=2}^{t}\Psi_2(t,s,\alpha) \frac{1}{s^n} = \lim_{t \to \infty} \frac{1}{t^n}
    \end{equation*}
\end{lemma}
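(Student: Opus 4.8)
The plan is to first read the statement through the paper's own conventions: since $n>0$ both sides are literally zero, so---consistently with how \cref{corollary:lim_trans_matrix_1,corollary:lim_trans_matrix_2} are stated---the intended content is an \emph{asymptotic-rate} equality, namely that the sum decays like $1/t^n$. I would therefore aim to prove the two-sided bound
\begin{equation*}
    \frac{1}{t^n} \;\le\; S(t) \;\le\; \frac{C}{t^n}, \qquad S(t):=\sum_{s=2}^{t}\Psi_2(t,s,\alpha)\frac{1}{s^n},
\end{equation*}
for all $t\ge 2$ and some constant $C=C(\beta,\mu,\eta,\alpha,n)$, i.e. $S(t)=\Theta(1/t^n)$, which is exactly the claim under the paper's notation.

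The key step, which avoids splitting into the three regimes of $\alpha$ used in \cref{lemma:lim_trans_matrix_2}, is to exploit the one-step product recursion $\Psi_2(t,s,\alpha)=b_t\,\Psi_2(t-1,s,\alpha)$ with $b_t:=\beta+\mu\eta\beta/t^\alpha$, valid for $s\le t-1$, together with the empty-product value $\Psi_2(t,t,\alpha)=1$. Peeling off the $s=t$ term of $S(t)$ and factoring $b_t$ out of the remaining sum turns the whole object into the scalar linear recursion
\begin{equation*}
    S(t)=b_t\,S(t-1)+\frac{1}{t^n}, \qquad S(2)=\frac{1}{2^n}.
\end{equation*}
This single recursion encodes all values of $\alpha$ uniformly, since the regime dependence enters only through the benign term $\mu\eta\beta/t^\alpha\to 0$.

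From here both bounds are short. The lower bound is immediate: all summands are positive (\cref{lemma:lim_trans_matrix_2} gives $\Psi_2>0$), so the last term alone yields $S(t)\ge 1/t^n$; equivalently $b_tS(t-1)>0$ in the recursion. For the upper bound I would argue by induction on the quantity $t^nS(t)$. Since $b_t\to\beta<1$ and $(t/(t-1))^n\to1$, I fix any $\beta'\in(\beta,1)$ and choose $T_0$ with $b_t\,(t/(t-1))^n\le\beta'$ for all $t\ge T_0$; multiplying the recursion by $t^n$ gives $t^nS(t)\le \beta'\,(t-1)^nS(t-1)+1$, so taking $C:=\max\bigl(\tfrac{1}{1-\beta'},\,\max_{2\le t\le T_0}t^nS(t)\bigr)$ propagates $t^nS(t)\le C$ for all $t\ge 2$. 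A refinement of the same fixed-point computation ($c=\beta c+1$) even pins the sharp constant $S(t)\sim \tfrac{1}{1-\beta}\,t^{-n}$.

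The only real obstacle is conceptual rather than technical, and it is precisely what the recursion is designed to dodge: in the regime $0<\alpha<1$ the term-by-term bound from \cref{lemma:lim_trans_matrix_2} carries a correction factor $\exp\!\bigl(\tfrac{\mu\eta}{1-\alpha}t^{1-\alpha}\bigr)$ that grows sub-exponentially in $t$, and a direct estimate of the sum would require carefully showing that the geometric decay of $\beta^{t-s}$ dominates this growth (e.g.\ via the substitution $j=t-s$ and the concavity of $x\mapsto x^{1-\alpha}$). The recursive argument removes this difficulty because the contraction rate is governed solely by $\limsup_t b_t=\beta<1$, independently of $\alpha$. The one point to handle with care is that $b_t$ may exceed $1$ for small $t$ (when $\mu\eta\beta/t^\alpha$ is large); this is harmless, as those finitely many iterates affect only the constant $C$ through the explicit base-case maximum.
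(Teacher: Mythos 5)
Your proof is correct, but it takes a genuinely different route from the paper's. The paper works with $\bar S(t,\alpha,n):=t^nS(t,\alpha,n)$, upper-bounds the product $\Psi_2(t,s,\alpha)$ termwise via \cref{lemma:lim_trans_matrix_2} (which forces the three-way case split in $\alpha$), substitutes $u=t-s$, and invokes the Dominated Convergence Theorem to pass the limit inside the sum, obtaining $\lim_t t^nS=\sum_{u\ge0}\beta^u=\tfrac{1}{1-\beta}$. You instead exploit the product structure directly: $\Psi_2(t,s,\alpha)=b_t\,\Psi_2(t-1,s,\alpha)$ for $s\le t-1$ with $b_t=\beta+\mu\eta\beta/t^\alpha$ and $\Psi_2(t,t,\alpha)=1$, so $S(t)=b_tS(t-1)+t^{-n}$, after which positivity gives the lower bound $S(t)\ge t^{-n}$ and a contraction induction on $t^nS(t)$ (valid once $b_t(t/(t-1))^n\le\beta'<1$, with the finitely many early indices absorbed into the constant) gives the upper bound. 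Your argument is uniform in $\alpha$ and more elementary; in particular it sidesteps the delicate point in the regime $0<\alpha<1$, where the termwise bound carries the sub-exponential factor $\exp\left(\tfrac{\mu\eta}{1-\alpha}t^{1-\alpha}\right)$ and the dominating function must be chosen with care (the $g(u)$ written in the paper there still depends on $t$, so your recursion in fact repairs a wrinkle rather than merely avoiding one). What the paper's route buys is the exact limiting constant $\tfrac{1}{1-\beta}$ directly from the DCT computation, whereas your main argument delivers $S(t)=\Theta(1/t^n)$; as you note, the fixed point $c=\beta c+1$ of your recursion recovers the same sharp constant, and the $\Theta(1/t^n)$ rate is all that is consumed downstream in \cref{thm:lb_decreasing_step}.
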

\begin{proof}
    For $\beta=0$ the statement is trivially true, because $\Psi_2(t,s,\alpha)=0$. Therefore, from this point on, we consider $\beta \in (0,1)$.
    For readability, let us define the shorthand notation for the quantity in the l.h.s. of the statement
    \begin{equation}
        S(t,\alpha,n):= \sum_{s=2}^{t}\Psi_2(t,s,\alpha) \frac{1}{s^n}
    \end{equation}
    For any $\alpha>0$ we have that:
    \begin{align}
        \bar{S}(t,\alpha,n) &:=t^{n}S(t,\alpha) = \sum_{s=2}^{t}
        \underbrace{\prod_{k=s+1}^{t}\left( \beta + \frac{\mu\eta\beta}{k^\alpha}\right) \left(\frac{t}{s}\right)^n}_{>0} \label{eq:sbar_1}
    \end{align}
    
    To prove the statement we derive the convergence of $\bar{S}(t,\alpha,n)$, and consequently $S(t,\alpha,n)$, by using the \textbf{Dominated Convergence Theorem}. We recall that this theorem states that given a sequence $f_t(u)$ such that 
    \begin{description}
        \item[\quad cond. 1) ] $\lim_{t \to \infty} f_t(u) = f(u) < \infty$
        \item[\quad cond. 2) ] there exist a summable function  $g(u) \geq |f_t(u)|$
    \end{description}
    then it holds that: 
    \begin{equation}
        \lim_{t \to \infty} \sum_{u=0}^{\infty} f_t(u) = \sum_{u=0}^{\infty} \lim_{t \to \infty} f_t(u) = \sum_{u=0}^{\infty} f(u) \label{eq:lemma:lim_sum_alt_trans_matrix_2_dom_conv_thm}
    \end{equation}
    We proceed to breakdown the analysis for different ranges of the variable $\alpha$.
        
    \vspace{5px}\underline{\textbf{Case $0<\alpha<1:$}}\quad
    Starting from \cref{eq:sbar_1}, we have that:
    \begin{align}
        \bar{S}(t,\alpha) &=  \sum_{s=2}^{t}\prod_{k=s+1}^{t}\left( \beta + \frac{\mu\eta\beta}{k^\alpha}\right) \left(\frac{t}{s}\right)^n \\
        &\overset{\text{\ref{lemma:lim_trans_matrix_2}}}{\leq} \sum_{s=2}^{t} \beta^{t-s} \exp\left( \mu\eta \frac{t^{1-\alpha} - s^{1-\alpha}}{1-\alpha}\right) \left(\frac{t}{s}\right)^n \\
        &\overset{u:=t-s}{=} \;\;  \sum_{u=0}^{t-2} \beta^{u} \exp\left( \mu\eta \frac{t^{1-\alpha} - (t-u)^{1-\alpha}}{1-\alpha}\right) \left(\frac{t}{t-u}\right)^{n} \\
        &\overset{c:= \frac{\mu\eta}{1-\alpha}}{=} \sum_{u=0}^{t-2} \underbrace{\beta^{u} \exp\Big( c(t^{1-\alpha}-(t-u)^{1-\alpha})\Big) \left(\frac{t}{t-u}\right)^{n}}_{:=f_t(u)}
    \end{align}
    For the \emph{condition 1}, we have that
    \begin{align*}
        f(u) &:= \lim_{t\to\infty} \beta^{u} \exp\Big( c(t^{1-\alpha}-(t-u)^{1-\alpha})\Big) \left(\frac{t}{t-u}\right)^{n} = \beta^u
    \intertext{For the \emph{condition 2}, we observe that:} 
        f_t(u) &= \underbrace{\beta^{u}}_{>0} \underbrace{\exp\left( c(t^{1-\alpha}-(t-u)^{1-\alpha})\right)}_{\geq 1} \underbrace{\left(\frac{t}{t-u}\right)^{n}}_{\geq1}
    \end{align*}
    and that the last two terms have a maximum in $u=t-2$. Thus, it follows that
    \begin{equation}
        \begin{aligned}
            \big|f_t(u)\big| &\leq \beta^u \exp\Big( c (t^{1-\alpha}-2^{1-\alpha})\Big)\left(\frac{t}{2}\right)^n \\
            &\leq \beta^u  \exp\Big( c t^{1-\alpha}\Big)t^n := g(u)
        \end{aligned}
    \end{equation}
    To verify that $g(u)$ is summable, we can apply the ratio test:
    \begin{equation*}
        \lim_{u \to \infty} \left| \frac{g(u+1)}{g(u)} \right| = \lim_{u \to \infty}
        \frac{\beta^{u+1}  \exp\Big( c t^{1-\alpha}\Big)t^n}
        {\beta^u  \exp\Big( c t^{1-\alpha}\Big)t^n} =\beta
    \end{equation*}
    Since the ratio is $\beta <1$, this confirms that $g(u)$ is summable. 
    Therefore we obtain that:
    \begin{align*}
         \lim_{t \to \infty} S(t, \alpha,n) &= \lim_{t \to \infty}t^{-n}\sum_{u=0}^{\infty} f(u) \\
         &= \lim_{t \to \infty}\frac{1}{(1-\beta)t^n} = \lim_{t \to \infty}\frac{1}{t^n}
    \end{align*}

\underline{\textbf{Case $\alpha=1:$}}\quad For $\alpha=1$ we proceed similarly to the previous case. From \cref{eq:sbar_1} we have that:
\begin{align}
    \bar{S}(t,1,n) &=  \sum_{s=2}^{t}\prod_{k=s+1}^{t}\left( \beta + \frac{\mu\eta\beta}{k^\alpha}\right) \left(\frac{t}{s}\right)^n \\
    &\overset{\text{\ref{lemma:lim_trans_matrix_2}}}{\leq} \sum_{s=1}^{t} \beta^{t-s} \left( \frac{t}{s}\right)^{\mu\eta+n} \\
    &\overset{u:=t-s}{=} \sum_{u=0}^{t-2} \underbrace{\beta^{u} \left(\frac{t}{t-u}\right)^{\mu\eta+n}}_{:=f_t(u)\geq1}
\end{align}

For the \emph{condition 1}, we have that:
\begin{align*}
    f(u) &:= \lim_{t\to\infty} f_t(u) = \beta^u
\intertext{For the \emph{condition 2}, the second term in $f_t(u)$ has a maximum in $u=t-2$, i.e.,}
    \big|f_t(u)\big| &\leq \beta^u \left(\frac{t}{2}\right)^{\mu\eta+n} 
    \leq \beta^u t^{\mu\eta+n}:= g(u)
\end{align*}
Hence,  going back to $S(t,\alpha)$ with \cref{eq:sbar_1,eq:lemma:lim_sum_alt_trans_matrix_2_dom_conv_thm}, we have that:
\begin{align*}
    \lim_{t \to \infty} S(t, 1,n) &= \lim_{t \to \infty} t^{-n}\bar{S}(t,1,n)  \\ &= 
    \lim_{t \to \infty} t^{-n} \sum_{u=0}^{\infty} f(u) \\
    &= \lim_{t \to \infty} t^{-n} \sum_{u=0}^{\infty} \beta^u \\
    &= \lim_{t \to \infty} \frac{1}{(1-\beta)t^n} = \lim_{t \to \infty} \frac{1}{t^n}
\end{align*}

\underline{\textbf{Case $\alpha>1:$}}\quad The case $\alpha>1$ is analogous, and differs from the above only for a constant factor. We have that:
    \begin{align*}
       \bar{S}(t,\alpha,n) &=  \sum_{s=2}^{t}\prod_{k=s+1}^{t}\left( \beta + \frac{\mu\eta\beta}{k^\alpha}\right) \left(\frac{t}{s}\right)^n \\
       &\overset{\text{\ref{lemma:lim_trans_matrix_2}}}{\leq} \sum_{s=2}^{t} \beta^{t-s} \exp\Big(\mu\eta \zeta_H(\alpha,s+1)\Big) \left(\frac{t}{s}\right)^n \\
       &\overset{u:=t-s}{=} \sum_{u=0}^{t-2} \underbrace{\beta^u \exp\Big(\mu\eta \zeta_H(\alpha,t-u+1)\Big) \left(\frac{t}{t-u}\right)^n}_{:=f_t(u)}
    \end{align*}
    For  the \emph{condition 1}, we have that $f(u) := \lim_{t\to\infty} f_t(u) =\beta^u$. For the \emph{condition 2}, the maximum of the second and third terms of $f_t(u)$ is found at $u=t-2$, hence we have
    \begin{equation*}
        \big|f_t(u)\big| \leq \beta^u \exp\Big(\mu\eta \zeta_H(\alpha,3)\Big) \left(\frac{t}{2}\right)^n 
    \leq \beta^u \exp\Big(\mu\eta \zeta_H(\alpha,1)\Big) t^n  := g(u)
\end{equation*}
    
    Finally, going back to $S(t,\alpha)$ with \cref{eq:sbar_1,eq:lemma:lim_sum_alt_trans_matrix_2_dom_conv_thm}, we have that:
    \begin{align*}
        \lim_{t \to \infty} S(t, \alpha,n) &= \lim_{t\to\infty} t^{-n} \bar{S}(t,\alpha,n)
        =\lim_{t \to \infty} t^{-n} \sum_{u=0}^{\infty} f(u) \\
        &= \lim_{t \to \infty} t^{-n} \sum_{u=0}^{\infty} \beta^u \\
        &= \lim_{t \to \infty} \frac{1}{(1-\beta)t^n} = \lim_{t \to \infty} \frac{1}{t^n}
    \end{align*}
\end{proof}

\begin{lemma}
\label{lemma:lim_sum_alt_trans_matrix_2}
Let $\alpha>0, n>0$ and $\Psi_2(t,s,\alpha) := \prod_{k=s+1}^{t}\left(\beta+\frac{\mu{\eta}\beta}{k^\alpha}\right)$. Then with $\beta \in [0,1)$ the following holds:
    \begin{equation*}
        \lim_{t \to \infty} \sum_{s=2}^{t}\Psi_2(t,s,\alpha) \frac{(-1)^s}{s^n} = \lim_{t \to \infty} \frac{(-1)^t}{t^n}
    \end{equation*}
\end{lemma}
\begin{proof}
    For readability, let us define the shorthand notation for the quantity in the l.h.s. of the statement
    Notice that
    \begin{align}
        S(t,\alpha,n) &:= \sum_{s=2}^{t}\Psi_2(t,s,\alpha) \frac{(-1)^s}{s^n} \\
        \intertext{and notice that:}
        \left| S(t,\alpha,n) \right| &\leq \sum_{s=2}^{t}\Psi_2(t,s,\alpha) \frac{1}{s^n}
    \end{align}
    Therefore, from direct application of \cref{lemma:lim_sum_trans_matrix_2} on the r.h.s. and the squeeze-theorem we have that for any $\alpha>0,n>0$:
    \begin{align}
        \lim_{t \to \infty} S(t,\alpha) &= \pm \lim_{t \to \infty} \frac{1}{t^n} \\
        &= \lim_{t \to \infty} \frac{(-1)^t}{t^n}
    \end{align}
\end{proof}

\begin{lemma}
\label{lemma:lim_sum_alt_trans_matrix_gen}
Let $\alpha>0$ and $\Psi_1(t,s,\alpha):=\prod_{k=s+1}^{t}\left(1-\frac{\mu{\eta}}{k^\alpha}\right)$. Consider the function
    \begin{equation*}
        S(t,\alpha,n):=\sum_{s=2}^{t} \frac{(-1)^s}{s^n} \Psi_1(t,s,\alpha)
    \end{equation*}
with $t\geq 2$ and $1\leq s < t$. Then, for $n>0$, the following holds:
    \begin{align*}
     &\lim_{t\to\infty}S(t,\alpha,n) =
        \lim_{t \to \infty} \frac{(-1)^t}{t^n} & \text{if } 0<\alpha < 1\\
    &\lim_{t\to\infty}S(t,1,n) =\lim_{t \to \infty}
    \left\{
        \begin{alignedat}{2}
            & \frac{(-1)^t}{t^n} &\quad& \text{if } n < \mu\eta \\
            & \frac{1}{t^{\mu\eta}}  &\quad& \text{otherwise }
        \end{alignedat}
    \right. & \text{if } \alpha=1\\
    \gamma_1\exp\left(-\frac{2^\alpha\mu\eta}{2^\alpha-\mu\eta}\zeta_H(\alpha,3)\right) 
    \leq &\lim_{t\to\infty}S(t,\alpha,n)
    \leq  \frac{1}{2^n}\exp\left(-\frac{\mu\eta}{2^\alpha}\right)  & \text{if } \alpha>1
    \end{align*}
    where $\gamma_1:=\left( 
    \left(\frac{1}{2}\right)^n 
    -
    \left(\frac{1}{3}\right)^n  \frac{3^\alpha}{3^\alpha - \mu\eta}
    \right) >0$ and $\zeta_H(\alpha,s):= \sum_{k=s}^\infty \frac{1}{k^\alpha}$ is the Hurwitz zeta-function.
\end{lemma}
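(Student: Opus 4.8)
The plan is to exploit the product structure of $\Psi_1$ to recast the sum as the solution of a scalar linear recursion, and then to separate a \emph{homogeneous} mode (carried by $\Psi_1$ itself) from a \emph{particular} mode (the forced response to the alternating input $(-1)^s/s^n$). Since $\Psi_1(t,s,\alpha)=(1-\mu\eta/t^\alpha)\,\Psi_1(t-1,s,\alpha)$ for $s<t$ and $\Psi_1(t,t,\alpha)=1$ (empty product), the quantity $S(t,\alpha,n)$ satisfies
\begin{equation*}
S(t,\alpha,n)=\left(1-\tfrac{\mu\eta}{t^\alpha}\right)S(t-1,\alpha,n)+\frac{(-1)^t}{t^n},\qquad S(1,\alpha,n)=0 .
\end{equation*}
Seeking a particular solution of the form $(-1)^t h(t)$ with $h$ slowly varying gives $h(t)\approx 1/\big((2-\mu\eta/t^\alpha)t^n\big)=\Theta(t^{-n})$, so the forced response is $\Theta\!\big((-1)^t/t^n\big)$, while the homogeneous solution is exactly $\Psi_1(t,s_0,\alpha)$, whose decay is governed by \cref{corollary:lim_trans_matrix_1}. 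The rate in each regime is then decided by which mode dominates.

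For $0<\alpha<1$ the homogeneous mode decays stretched-exponentially (\cref{lemma:lim_trans_matrix_1}), hence faster than any polynomial, so it is asymptotically negligible and $S(t,\alpha,n)=\Theta\!\big((-1)^t/t^n\big)$ for every $n>0$. For $\alpha>1$ the homogeneous mode converges to a positive constant (\cref{corollary:lim_trans_matrix_1}) and the forcing series $\sum_s(-1)^s s^{-n}c_s$, with $c_s:=\lim_t\Psi_1(t,s,\alpha)$, is a convergent alternating series; the two displayed constants then follow from bracketing this alternating series between its first partial-sum pair ($s=2,3$) and substituting the two-sided bounds on $c_s$ from \cref{lemma:lim_trans_matrix_1}, which produce exactly the factors $\exp(-\mu\eta/2^\alpha)$, $\exp(-\tfrac{2^\alpha\mu\eta}{2^\alpha-\mu\eta}\zeta_H(\alpha,3))$ and the $3^\alpha/(3^\alpha-\mu\eta)$ entering $\gamma_1$.

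The delicate regime is $\alpha=1$, where the two modes decay at comparable polynomial rates $t^{-n}$ and $t^{-\mu\eta}$. To separate them cleanly I would factor $\Psi_1(t,s,1)=\Psi_1(t,1,1)/\Psi_1(s,1,1)$ and write $S(t,1,n)=\Psi_1(t,1,1)\sum_{s=2}^{t}(-1)^s/\big(s^n\Psi_1(s,1,1)\big)$. Using the sharp estimate $\Psi_1(t,s,1)=\Theta\!\big((s/t)^{\mu\eta}\big)$, the inner sum reduces to an alternating power sum $\sum_s(-1)^s s^{\mu\eta-n}$: when $n<\mu\eta$ its terms grow, so it is dominated by its last term $\Theta\!\big((-1)^t t^{\mu\eta-n}\big)$ and, after multiplying by $\Psi_1(t,1,1)=\Theta(t^{-\mu\eta})$, yields $\Theta\!\big((-1)^t/t^n\big)$; when $n\ge\mu\eta$ it converges or stays bounded, leaving $\Theta(t^{-\mu\eta})$. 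This reproduces the stated dichotomy.

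The main obstacle is precisely this $\alpha=1$ case, for two reasons. First, the exponents in the two-sided bounds of \cref{lemma:lim_trans_matrix_1} do \emph{not} match ($\mu\eta$ versus $\tfrac{2\mu\eta}{2-\mu\eta}$), so they are too loose to pin the rate to $t^{-\mu\eta}$; one must sharpen them to $\Psi_1(t,s,1)=\Theta\!\big((s/t)^{\mu\eta}\big)$, e.g. by recognizing $\Psi_1(t,s,1)=\frac{\Gamma(t+1-\mu\eta)\Gamma(s+1)}{\Gamma(t+1)\Gamma(s+1-\mu\eta)}$ and applying Stirling, or by a split-product argument isolating the finitely many small-index factors. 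Second, for the lower bound in the branch $n\ge\mu\eta$ one must verify that the homogeneous mode is genuinely excited, i.e. that the convergent alternating sum $\sum_s(-1)^s/\big(s^n\Psi_1(s,1,1)\big)$ has nonzero limit, so that $S=\Theta(t^{-\mu\eta})$ rather than $o(t^{-\mu\eta})$. Throughout, the absence of a geometric factor (in contrast to $\Psi_2$ in \cref{lemma:lim_sum_trans_matrix_2,lemma:lim_sum_alt_trans_matrix_2}, where dominated convergence sufficed) forces the use of pairing or Abel summation to extract the cancellation from the alternating sign; controlling the resulting difference terms and confirming they are lower order is the routine but load-bearing part of the argument.
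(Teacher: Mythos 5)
Your proposal matches the paper's proof in its essential structure: the paper likewise rewrites $S$ as the first-order recurrence $S(t,\alpha,n)=\left(1-\mu\eta/t^\alpha\right)S(t-1,\alpha,n)+(-1)^t/t^n$, splits it into the homogeneous mode $\Psi_1(t,2,\alpha)\,2^{-n}$ and a particular solution of the form $\gamma(-1)^t/t^n$ with $\gamma\to\tfrac12$, and for $\alpha>1$ obtains the two displayed constants by exactly your pairing of consecutive alternating terms (the upper bound is $g(t,2)$, the lower bound $g(t,2)-g(t,3)$). The one place you genuinely diverge is $\alpha=1$: the paper simply invokes \cref{corollary:lim_trans_matrix_1} to replace the homogeneous mode by $t^{-\mu\eta}$ and compares exponents, whereas you propose the exact factorization $\Psi_1(t,s,1)=\Gamma(t+1-\mu\eta)\Gamma(s+1)/\left(\Gamma(t+1)\Gamma(s+1-\mu\eta)\right)$ with Stirling (or a split product) to obtain a two-sided $\Theta\left((s/t)^{\mu\eta}\right)$. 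Your concern there is legitimate: \cref{lemma:lim_trans_matrix_1} gives mismatched exponents $\mu\eta$ and $2\mu\eta/(2-\mu\eta)$ in its upper and lower bounds, and \cref{corollary:lim_trans_matrix_1} keeps only the upper one, so as written the paper establishes the $\alpha=1$ rate only as an upper bound; your sharpening is what would be needed to justify the claimed $\Theta$. The same goes for your observation that the $n\ge\mu\eta$ branch requires checking that the homogeneous mode is nontrivially excited (i.e.\ that the convergent alternating sum has nonzero limit) — the paper does not address this either, since its homogeneous coefficient $2^{-n}$ comes from truncating the recurrence at $s=2$ rather than from the full forced response. Net effect: same skeleton, but your treatment of $\alpha=1$ is more careful than the paper's.
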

\begin{proof}
    From the definition, rewrite $S(t,\alpha,n)$ as recurrence:
    \begin{align}
        S(t,\alpha,n) &= \sum_{s=2}^{t} \prod_{k=s+1}^{t}\left(1-\frac{\mu\eta}{k^\alpha}\right)\frac{(-1)^s}{s^n} \\
        &=\sum_{s=2}^{t-1} \prod_{k=s+1}^{t}\left(1-\frac{\mu\eta}{k^\alpha}\right)\frac{(-1)^s}{s^n} + \left[\frac{(-1)^s}{s^n}\prod_{k=s+1}^{t}\left(1-\frac{\mu\eta}{k^\alpha}\right)\right]_{s=t} \\
        &=\sum_{s=2}^{t-1} \prod_{k=s+1}^{t}\left(1-\frac{\mu\eta}{k^\alpha}\right)\frac{(-1)^s}{s^n} +\frac{(-1)^t}{t^n}\underbrace{\prod_{k=t+1}^{t}\left(1-\frac{\mu\eta}{k^\alpha}\right)}_{=1} \\
        &=\left(1-\frac{\mu\eta}{t^\alpha}\right)\sum_{s=2}^{t-1}  \prod_{k=s+1}^{t-1}\left(1-\frac{\mu\eta}{k^\alpha}\right)\frac{(-1)^s}{s^n} + \frac{(-1)^t}{t^n} \\
        &= \left(1-\frac{\mu\eta}{t^\alpha}\right) S(t-1,\alpha,n) + \frac{(-1)^t}{t^n} \label{eq:lemma:lim_sum_alt_trans_matrix_gen:recurrence}\\
        &\overset{\text{unrolling}}{=}\prod_{k=3}^t\left(1-\frac{\mu\eta}{k^\alpha}\right)S(2,\alpha,n) + 
        \sum_{k=3}^t \prod_{s=k+1}^{t}\left(1-\frac{\mu\eta}{s^\alpha}\right)\frac{(-1)^k}{k^n}
        \label{eq:S_psi1_unrolled}
    \end{align}
    The solution of the above first-order non-homogeneous recurrence is the sum of the homogeneous solution $S^{(h)}(t,\alpha,n)$ and a particular solution $S^{(p)}(t,\alpha,n)$, which can be analyzed separately.
    From \cref{eq:S_psi1_unrolled}, we have that:
    \begin{align}
        S^{(h)}(t,\alpha,n) 
        &= \prod_{k=3}^t \left(1-\frac{\mu\eta}{k^\alpha}\right) S(2,\alpha,n) \\
        &= \Psi_1(t,2,\alpha) \frac{1}{2^n} \label{eq:lemma:lim_sum_alt_trans_matrix_gen:homo_sol}
    \end{align}
    For the particular solution, we look for a form $S^{(p)}(t,\alpha,n)=\gamma \frac{(-1)^t}{t^n}$, and by substituting into the original recurrence in \cref{eq:lemma:lim_sum_alt_trans_matrix_gen:recurrence} we have:
    \begin{align}
        \gamma \frac{(-1)^t}{t^n} &=  \gamma\left(1-\frac{\mu\eta}{t^\alpha}\right)\frac{(-1)^{t-1}}{(t-1)^n} + \frac{(-1)^t}{t^n}
    \intertext{Dividing by $\frac{(-1)^t}{t^n}$:}
        \gamma &=  -\gamma\left(1-\frac{\mu\eta}{t^\alpha}\right)\left(\frac{t}{t-1}\right)^n + 1
    \end{align}
So, for $t \to \infty$, $\gamma \to \frac{1}{2}$ and:
\begin{equation}
    \lim_{t \to \infty} S^{(p)}(t,\alpha,n) = \lim_{t \to \infty}\frac{1}{2}\frac{(-1)^t} {t^n} \label{eq:lemma:lim_sum_alt_trans_matrix_gen:part_sol}
\end{equation}
The asymptotic behavior of the homogeneous solution, and so of the original recurrence, depends on $\alpha$.

\underline{\textbf{Case $0< \alpha <1:$}}\quad 
From \cref{eq:lemma:lim_sum_alt_trans_matrix_gen:homo_sol,eq:lemma:lim_sum_alt_trans_matrix_gen:part_sol}, we have that:
\begin{align}
    \lim_{t \to \infty} S(t,\alpha,n) &\overset{\text{\ref{corollary:lim_trans_matrix_1}}}{=} \lim_{t \to \infty} \left[ 
    \left(\frac{1}{2}\right)^n \exp\left( - t^{1-\alpha}\right) 
    +
    \frac{1}{2}\frac{(-1)^t}{t^n}
    \right] \\
    &=\lim_{t \to \infty} \frac{(-1)^t}{t^n}
\end{align}

\underline{\textbf{Case $ \alpha =1:$}}\quad 
From \cref{eq:lemma:lim_sum_alt_trans_matrix_gen:homo_sol,eq:lemma:lim_sum_alt_trans_matrix_gen:part_sol}, we have that:
\begin{align}
    \lim_{t \to \infty} S(t,\alpha,n) &\overset{\text{\ref{corollary:lim_trans_matrix_1}}}{=} \lim_{t \to \infty} \left[ 
    \left(\frac{1}{2}\right)^n \left(\frac{1}{t}\right)^{\mu\eta}
    +
    \frac{1}{2}\frac{(-1)^t}{t^n}
    \right] \\
    &=\lim_{t \to \infty}
    \left\{
        \begin{alignedat}{2}
            & \frac{(-1)^t}{t^n} &\quad& \text{if } n < \mu\eta \\
            & \frac{1}{t^{\mu\eta}}  &\quad& \text{otherwise }
        \end{alignedat}
    \right.
\end{align}

\underline{\textbf{Case $ \alpha >1:$}}\quad In this case the summation converges to a non-zero constant, and we use a different strategy.     Starting from the original definition of $\Psi_1(t,s,\alpha)$ as per \cref{lemma:lim_trans_matrix_1}, we call $g(t,s):=\left(\frac{1}{s}\right)^n \prod_{k=s+1}^{t}\left(1-\frac{\mu\eta}{k^\alpha}\right)$. Noticing that the function is decreasing in $s$ we have that:
\begin{align}
    S(t,\alpha,n)&=\sum_{s=2}^{t}\frac{(-1)^s}{s^n}\prod_{k=s+1}^{t}\left(1-\frac{\mu\eta}{k^\alpha}\right) \label{eq:lemma:lim_sum_alt_trans_matrix_gen:alphagt1_base} \\
    &= g(t,2)+\sum_{s=3}^t (-1)^s g(t,s) \\
    &\leq g(t,2)+\sum_{s=2}^{\lfloor t/2 \rfloor} \underbrace{\left(g(t,2s) - g(t,2s-1) \right)}_{<0} \\
    & \leq g(t,2) \overset{\text{\ref{lemma:lim_trans_matrix_1}}}{\leq} \frac{1}{2^n}\exp\left(-\frac{\mu\eta}{2^\alpha}\right)
\end{align}
Similarly, from \cref{eq:lemma:lim_sum_alt_trans_matrix_gen:alphagt1_base}, we have that:
\begin{align}
    S(t,\alpha, n) &= \sum_{s=2}^{t}\frac{(-1)^s}{s^n}\prod_{k=s+1}^{t}\left(1-\frac{\mu\eta}{k^\alpha}\right)  \\
    &\geq \sum_{s=1}^{\lfloor t/2 \rfloor} \underbrace{\left(g(t,2s) - g(2s+1)\right)}_{>0} \\
    & \geq g(t,2) - g(t,3)
\end{align}
So, defining $\gamma_1:=\left( 
    \left(\frac{1}{2}\right)^n 
    -
    \left(\frac{1}{3}\right)^n  \frac{3^\alpha}{3^\alpha - \mu\eta}
    \right)>0$, we have that:
\begin{align}
    g(t,2) - g(t,3) &= \left(\frac{1}{2}\right)^n  \prod_{k=3}^{t}\left(1-\frac{\mu\eta}{k^\alpha}\right)
    -
    \left(\frac{1}{3}\right)^n  \prod_{k=4}^{t}\left(1-\frac{\mu\eta}{k^\alpha}\right) \\
    &= \prod_{k=3}^{t}\left(1-\frac{\mu\eta}{k^\alpha}\right)
    \left( 
    \left(\frac{1}{2}\right)^n 
    -
    \left(\frac{1}{3}\right)^n  \left(1-\frac{\mu\eta}{3^\alpha}\right)^{-1}\right)\\
    &= \prod_{k=3}^{t}\left(1-\frac{\mu\eta}{k^\alpha}\right)
    \left( 
    \left(\frac{1}{2}\right)^n 
    -
    \left(\frac{1}{3}\right)^n  \frac{3^\alpha}{3^\alpha - \mu\eta}
    \right) \\
    & \overset{\text{\ref{lemma:lim_trans_matrix_1}}}{\geq} \gamma_1 \exp\left(\frac{2^\alpha \mu\eta}{2^\alpha-\mu\eta} \zeta_H(\alpha,3)\right)
\end{align}
\end{proof}

\subsection{Proofs of Main Theorems}
\label{appendix:theory:main_thms}

\begin{lemma}[One round progress of \fedavgm{}]
\label{lemma:update_fedavgm}
 For any positive constants $G, \mu$, define $\mu$-strongly convex functions $f_1(\theta) := \frac{\mu}{2} \theta^2 + G\theta$ and $f_2(\theta):=\frac{\mu}{2} \theta^2 - G\theta$ satisfying assumption \ref{assum:bounded_gd} and such that $f(\theta)=\frac{1}{2}\left(f_1(\theta) + f_2(\theta)\right)$. Under cyclic participation (assumption \ref{assum:cyclic_part}) with $C=0.5$, for any $t \ge 1$ the update of \fedavgm{} after $J$ local steps, with global and local step-sizes $\eta_t,\eta_l$ and momentum weight $\beta$ is:
 \begin{equation*}
     \theta^t = \theta^{t-1}\left(1+\beta-\tilde{\eta}_t+\tilde{\eta}_tD(J,J)\right) + (-1)^{t}\tilde{\eta}_t \eta_l G D(0,J-1) - \beta \theta^{t-2}
 \end{equation*}
 where $D(s,J):=\sum_{j=s}^{J}\left(1-\mu\eta_l\right)^j$ and $\tilde{\eta}_t=\eta_t(1-\beta)$.
\end{lemma}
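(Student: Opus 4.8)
The plan is to instantiate \fedavgm{} on the two-client cyclic construction and carry out the single-round computation in closed form. Since $|\S{}|=2$ and $C=0.5$, \cref{assum:cyclic_part} forces exactly one client to be active per round, with the active local objective alternating as $f^t=f_1$ for $t$ odd and $f^t=f_2$ for $t$ even; equivalently $\nabla f^t(\theta)=\mu\theta+(-1)^{t+1}G$. The averaging over $\S{t}$ is therefore trivial, so the whole round reduces to two moves: (i) run $J$ steps of local full-batch GD on $f^t$ starting from the server model $\theta^{t-1}$, and (ii) apply the server heavy-ball update of \cref{eq:mom_hb}, treating the resulting client displacement as the pseudo-gradient.

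First I would solve the local trajectory. Writing $x_0=\theta^{t-1}$ and $x_{j+1}=x_j-\eta_l\nabla f^t(x_j)$, the iteration becomes the affine recurrence
\begin{equation*}
    x_{j+1}=(1-\mu\eta_l)\,x_j-\eta_l(-1)^{t+1}G .
\end{equation*}
Unrolling this geometric recurrence and using $-(-1)^{t+1}=(-1)^t$ gives
\begin{equation*}
    x_J=(1-\mu\eta_l)^J\,\theta^{t-1}+(-1)^t\eta_l G\sum_{j=0}^{J-1}(1-\mu\eta_l)^j ,
\end{equation*}
in which I immediately recognize $(1-\mu\eta_l)^J=D(J,J)$ and $\sum_{j=0}^{J-1}(1-\mu\eta_l)^j=D(0,J-1)$. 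The pseudo-gradient aggregated at the server is the displacement $\Delta^t:=\theta^{t-1}-x_J=\theta^{t-1}\bigl(1-D(J,J)\bigr)-(-1)^t\eta_l G\,D(0,J-1)$.

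Next I would feed $\Delta^t$ into the server momentum rule. Identifying the pseudo-gradient with the gradient slot in \cref{eq:mom_hb} and using $m^t=\theta^{t-1}-\theta^{t-2}$ gives
\begin{equation*}
    \theta^t=\theta^{t-1}-\eta_t(1-\beta)\,\Delta^t+\beta(\theta^{t-1}-\theta^{t-2}) .
\end{equation*}
Substituting the expression for $\Delta^t$, writing $\tilde{\eta}_t=\eta_t(1-\beta)$, and collecting the coefficient of $\theta^{t-1}$ as $1+\beta-\tilde{\eta}_t+\tilde{\eta}_t D(J,J)$ yields exactly the claimed identity.

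The computation is entirely algebraic, so the only real care needed is bookkeeping. I expect the main (if minor) obstacle to be sign and normalization tracking: fixing the alternating heterogeneous term as $(-1)^{t+1}$ in the gradient, so that it reappears as $(-1)^t$ after the geometric sum, and matching the heavy-ball normalization $\tilde{\eta}_t=\eta_t(1-\beta)$ so the pseudo-gradient enters with the correct scaling. Everything else follows from a single geometric sum and one linear substitution.
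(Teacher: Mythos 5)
Your proposal is correct and follows essentially the same route as the paper: unroll the $J$-step local full-batch GD recursion on the active quadratic to obtain $x_J$ in terms of $D(J,J)$ and $D(0,J-1)$, then substitute the displacement $\theta^{t-1}-x_J$ into the server heavy-ball update and collect terms. The only cosmetic difference is that you treat both parities at once via the $(-1)^{t+1}$ sign in the gradient, whereas the paper computes $\theta^{t,J}_1$ and $\theta^{t,J}_2$ separately before invoking the cyclic schedule.
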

\begin{proof}
    Start from the local and global model update rules of \fedavgm{}:
    \begin{align}
        \theta^{t,j}_i &= \theta^{t,j-1}_i - \eta_l \nabla f_i(\theta^{t,j-1}_i), \qquad \theta^{t,0}_i=\theta^{t-1} \label{proof:lemma_fedavgm_rule:client_update}\\
        \theta^t &=\theta^{t-1} - \frac{\tilde{\eta}_t}{|\S{t}|}\sum_{i \in \S{t}}\left(\theta^{t-1}-\theta^{t,J}_i\right) + \beta \left(\theta^{t-1}-\theta^{t-2}\right)\\
        &=\theta^{t-1}\left(1+\beta-\tilde{\eta}_t\right) + \frac{\tilde{\eta}_t}{|\S{t}|}\sum_{i \in \S{t}} \theta^{t,J}_i - \beta \theta^{t-2} \label{proof:lemma_fedavgm_rule:server_update}
    \end{align}
    Since in our setting there are only two clients $\S{}=\{1,2\}$ (or, equivalently, two sets of clients optimizing $f_1(\theta)$ or $f_2(\theta)$), from \cref{proof:lemma_fedavgm_rule:client_update} we have that:
    \begin{align}
        \theta^{t,J}_1 &= \theta^{t,J-1}_1 -\eta_l(\mu\theta^{t,J-1}_1+G)=\theta^{t,J-1}_1(1-\mu\eta_l)-\eta_l G\\
        &= \left(\theta^{t,J-2}_1(1-\mu\eta_l)-\eta_l G\right)\left(1-\mu\eta_l\right)-\eta_l G\\
        &= \theta^{t,J-2}_1\left(1-\mu\eta_l\right)^2 -\eta_l G \left(1+ (1-\mu\eta_l)\right) \\
        &\quad\vdots \nonumber \\
        &=\theta^{t-1}(1-\mu\eta_l)^J - \eta_l G \sum_{j=0}^{J-1}\left(1-\mu\eta_l\right)^j \label{proof:lemma_fedavgm_rule:client_update_f1}
    \end{align}
    where we used the fact that $\theta_1^{t,0}=\theta^{t-1}$. Similarly, for the client optimizing $f_2(\theta)$:
    \begin{align}
        \theta^{t,J}_2 
        &=\theta^{t-1}(1-\mu\eta_l)^J + \eta_l G \sum_{j=0}^{J-1}\left(1-\mu\eta_l\right)^j \label{proof:lemma_fedavgm_rule:client_update_f2}
    \end{align}
    Hence, sampling cyclically $f_1(\theta)$ at $t$ odd and $f_2(\theta)$ at $t$ even, plugging \cref{proof:lemma_fedavgm_rule:client_update_f1,proof:lemma_fedavgm_rule:client_update_f2} into \cref{proof:lemma_fedavgm_rule:server_update} we have that:
    \begin{align}
        \theta^t&=\theta^{t-1}\left(1+\beta-\tilde{\eta}_t\right) + \tilde{\eta}_t \left(\theta^{t-1}(1-\mu\eta_l)^J + (-1)^{t}\eta_l G \sum_{j=0}^{J-1}(1-\mu\eta_l)^j \right) - \beta \theta^{t-2} \\
        &=\theta^{t-1}\left(1+\beta-\tilde{\eta}_t + \tilde{\eta}_t (1-\mu\eta_l)^J\right) + (-1)^t \tilde{\eta}_t\eta_l G \sum_{j=0}^{J-1}(1-\mu\eta_l)^j - \beta \theta^{t-2}
    \end{align}
\end{proof}

\begin{lemma}[One round progress of \fedcm{}]
\label{lemma:update_fedcm}
 For any positive constants $G, \mu$, define $\mu$-strongly convex functions $f_1(\theta) := \frac{\mu}{2} \theta^2 + G\theta$ and $f_2(\theta):=\frac{\mu}{2} \theta^2 - G\theta$ satisfying assumption \ref{assum:bounded_gd} and such that $f(\theta)=\frac{1}{2}\left(f_1(\theta) + f_2(\theta)\right)$. Under cyclic participation (assumption \ref{assum:cyclic_part}) with $C=0.5$, for any $t \ge 1$ the update of \fedcm{} after $J$ local steps, with global and local step-sizes $\eta_t,\eta_l$ and momentum weight $\beta$ is:
 \begin{equation*}
     \theta^t = \theta^{t-1}\left(1+\hat{\beta}D(0,J-1)-\eta_t+\eta_tD(J,J)\right) + (-1)^{t}\eta_t \tilde{\eta}_l G D(0,J-1) - \hat{\beta} \theta^{t-2}D(0,J-1)
 \end{equation*}
 where $D(s,J):=\sum_{j=s}^{J}\left(1-\mu\tilde{\eta}_l\right)^j$, $\tilde{\eta}_l=\eta_l(1-\beta)$ and $\hat{\beta}:=\frac{\beta}{J}$.
\end{lemma}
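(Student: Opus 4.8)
The plan is to reproduce the computation of \cref{lemma:update_fedavgm}, with the essential structural difference that in \fedcm{} the momentum term is injected into the \emph{local} update and held frozen across the $J$ local steps, while the server performs a plain SGD aggregation (no server-side momentum and no $(1-\beta)$ factor on the global step). Concretely, I would start from the \fedcm{} rules: at round $t$ each active client runs, for $j=1,\dots,J$,
\begin{equation*}
\theta_i^{t,j} = \theta_i^{t,j-1} - \eta_l\left[(1-\beta)\nabla f_i(\theta_i^{t,j-1}) + \beta\, m^t\right], \qquad \theta_i^{t,0}=\theta^{t-1},
\end{equation*}
where the frozen server momentum is the previous-round average per-local-step pseudo-gradient $m^t = \tfrac{1}{J\eta_t\eta_l}\bigl(\theta^{t-2}-\theta^{t-1}\bigr)$, and the server aggregates by the plain step $\theta^t = \theta^{t-1} - \tfrac{\eta_t}{|\S{t}|}\sum_{i\in\S{t}}\bigl(\theta^{t-1}-\theta_i^{t,J}\bigr)$.

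First I would substitute the explicit gradients $\nabla f_i(\theta)=\mu\theta\pm G$ of the two quadratics into the local rule. Because $m^t$ does not depend on $j$, the local iteration collapses to the affine recurrence $\theta_i^{t,j} = (1-\mu\tilde{\eta}_l)\,\theta_i^{t,j-1} + c_i$, whose constant term $c_i = \mp\tilde{\eta}_l G - \eta_l\beta\, m^t$ (sign depending on whether $f_1$ or $f_2$ is active) bundles the heterogeneity contribution together with the frozen-momentum contribution. Unrolling this geometric recurrence from $\theta_i^{t,0}=\theta^{t-1}$ gives, exactly as in the \fedavgm{} case,
\begin{equation*}
\theta_i^{t,J} = (1-\mu\tilde{\eta}_l)^J\,\theta^{t-1} + c_i\, D(0,J-1),
\end{equation*}
so that both the $G$-term and the momentum-term pick up the same geometric factor $D(0,J-1)=\sum_{j=0}^{J-1}(1-\mu\tilde{\eta}_l)^j$, with $D(J,J)=(1-\mu\tilde{\eta}_l)^J$.

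Next I would plug this into the server step. Since cyclic participation with $C=0.5$ activates a single function per round ($f_1$ at odd $t$, $f_2$ at even $t$), the aggregation reduces to $\theta^t = (1-\eta_t)\theta^{t-1} + \eta_t\,\theta_{\mathrm{sampled}}^{t,J}$, and the alternating sign of the $G$-contribution yields the $(-1)^t$ factor on the $\tilde{\eta}_l G\, D(0,J-1)$ term. Collecting the $\theta^{t-1}$ terms gives the coefficient $1-\eta_t+\eta_t D(J,J)$ from the natural-response part, while substituting $m^t=\tfrac{1}{J\eta_t\eta_l}(\theta^{t-2}-\theta^{t-1})$ into the momentum contribution $-\eta_t\eta_l\beta\, m^t\, D(0,J-1)$ splits it into $+\hat{\beta}D(0,J-1)\,\theta^{t-1} - \hat{\beta}D(0,J-1)\,\theta^{t-2}$ with $\hat{\beta}=\beta/J$; adding these to the previous coefficient recovers precisely the stated update. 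As a consistency check, this matches the previously reported coefficients $p_t^{(\textsc{CM})}$, $q_t^{(\textsc{CM})}$ and $r_t^{(\textsc{CM})}$.

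I expect the only delicate point to be the normalization of the frozen momentum $m^t$: the factor $1/(J\eta_t\eta_l)$ is exactly what converts the previous round's total server displacement $\theta^{t-2}-\theta^{t-1}=\eta_t\Delta^{t-1}$ into an average per-local-step gradient surrogate, and it is this normalization that makes the $1/J$ surface inside $\hat{\beta}$. Pinning down the sign and scaling of $m^t$ so that it acts as a descent direction consistent with the server step-size is the one place requiring care; once that is fixed, the remainder is the same geometric-sum bookkeeping already carried out for \fedavgm{}.
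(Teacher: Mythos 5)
Your proposal is correct and follows essentially the same route as the paper: you write the frozen momentum as $-\eta_l\beta\,m^t$ with $m^t=\tfrac{1}{J\eta_t\eta_l}(\theta^{t-2}-\theta^{t-1})$, which is algebraically identical to the paper's local rule $\theta^{t,j}_i=\theta^{t,j-1}_i-\tilde{\eta}_l\nabla f_i(\theta^{t,j-1}_i)+\tfrac{\hat{\beta}}{\eta_t}(\theta^{t-1}-\theta^{t-2})$, then unroll the resulting affine recurrence into the geometric factor $D(0,J-1)$ and plug into the plain server aggregation, exactly as in \cref{lemma:update_fedcm}'s proof. The only cosmetic difference is that you collapse the local iteration to a single affine recurrence with constant term $c_i$ rather than unrolling it term by term, which is the same bookkeeping.
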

\begin{proof}
    Start from the local and global model update rules of \fedcm{}:
    \begin{align}
        \theta^{t,j}_i &= \theta^{t,j-1}_i - \tilde{\eta}_l \nabla f_i(\theta^{t,j-1}_i) + \frac{\hat{\beta}}{\eta_t}\left(\theta^{t-1}-\theta^{t-2}\right), \qquad \theta^{t,0}_i=\theta^{t-1} \label{proof:lemma_fedcm_rule:client_update}\\
        \theta^t &=\theta^{t-1} - \frac{\eta_t}{|\S{t}|}\sum_{i \in \S{t}}\left(\theta^{t-1}-\theta^{t,J}_i\right) \\
        &= \theta^{t-1}\left(1-\eta_t \right) +\frac{\eta_t}{|\S{t}|}\sum_{i \in \S{t}}\theta^{t,J}_i \label{proof:lemma_fedcm_rule:server_update}
    \end{align}
    Since in our setting there are only two clients $\S{}=\{1,2\}$ (or, equivalently, two sets of clients optimizing $f_1(\theta)$ or $f_2(\theta)$), from \cref{proof:lemma_fedcm_rule:client_update} we have that:
    \begin{align}
        \theta^{t,J}_1 &= \theta^{t,J-1}_1 -\tilde{\eta}_l(\mu\theta^{t,J-1}_1+G) + \frac{\hat{\beta}}{\eta_t}\left(\theta^{t-1}-\theta^{t-2}\right)\\
        &=\theta^{t,J-1}_1(1-\mu\tilde{\eta}_l)-\tilde{\eta}_l G + \frac{\hat{\beta}}{\eta_t}\left(\theta^{t-1}-\theta^{t-2}\right)\\
        &= \left(\theta^{t,J-2}_1(1-\mu\tilde{\eta}_l)-\tilde{\eta}_l G+\frac{\hat{\beta}}{\eta_t}\left(\theta^{t-1}-\theta^{t-2}\right)\right)\left(1-\mu\tilde{\eta}_l\right)-\tilde{\eta}_l G + \frac{\hat{\beta}}{\eta_t}\left(\theta^{t-1}-\theta^{t-2}\right)\\
        &= \theta^{t,J-2}_1\left(1-\mu\tilde{\eta}_l\right)^2 +\left( \frac{\hat{\beta}}{\eta_t}\left(\theta^{t-1}-\theta^{t-2}\right)-\tilde{\eta}_l G\right) \left(1+ (1-\mu\tilde{\eta}_l)\right) \\
        &\quad\vdots \nonumber \\
        &=\theta^{t-1}\left((1-\mu\tilde{\eta}_l)^J + \frac{\hat{\beta}}{\eta_t}\sum_{j=0}^{J-1}\left( 1-\mu\tilde{\eta}_l\right)^j\right)- \tilde{\eta}_l G \sum_{j=0}^{J-1}\left(1-\mu\tilde{\eta}_l\right)^j - \frac{\hat{\beta}}{\eta_t}\theta^{t-2}\sum_{j=0}^{J-1}\left( 1-\mu\tilde{\eta}_l\right)^j\label{proof:lemma_fedcm_rule:client_update_f1}
    \end{align}
    where we used the fact that $\theta_1^{t,0}=\theta^{t-1}$. Similarly, for the client optimizing $f_2(\theta)$:
    \begin{align}
        \theta^{t,J}_2 
        &=\theta^{t-1}\left((1-\mu\tilde{\eta}_l)^J + \frac{\hat{\beta}}{\eta_t}\sum_{j=0}^{J-1}\left( 1-\mu\tilde{\eta}_l\right)^j\right)+ \tilde{\eta}_l G \sum_{j=0}^{J-1}\left(1-\mu\tilde{\eta}_l\right)^j - \frac{\hat{\beta}}{\eta_t}\theta^{t-2}\sum_{j=0}^{J-1}\left( 1-\mu\tilde{\eta}_l\right)^j \label{proof:lemma_fedcm_rule:client_update_f2}
    \end{align}
    Hence, sampling cyclically $f_1(\theta)$ at $t$ odd and $f_2(\theta)$ at $t$ even, plugging \cref{proof:lemma_fedcm_rule:client_update_f1,proof:lemma_fedcm_rule:client_update_f2} into \cref{proof:lemma_fedcm_rule:server_update} we have that:
    \begin{align}
        \theta^t&=\theta^{t-1}\left(1-\eta_t\right) + \eta_t \left(
        \theta^{t-1}\left((1-\mu\tilde{\eta}_l)^J + \frac{\hat{\beta}}{\eta_t}\sum_{j=0}^{J-1}\left( 1-\mu\tilde{\eta}_l\right)^j\right)+ (-1)^t \tilde{\eta}_l G \sum_{j=0}^{J-1}\left(1-\mu\tilde{\eta}_l\right)^j - \frac{\hat{\beta}}{\eta_t}\theta^{t-2}\sum_{j=0}^{J-1}\left( 1-\mu\tilde{\eta}_l\right)^j
        \right) \\
        &=\theta^{t-1}\left(1+\hat{\beta}\sum_{j=0}^{J-1}\left(1-\mu\tilde{\eta}_l\right)^j-\eta_t +\eta_t(1-\mu\tilde{\eta}_l)^J\right) + (-1)^t \eta_t\tilde{\eta}_l G \sum_{j=0}^{J-1}(1-\mu\tilde{\eta}_l)^j - \hat{\beta} \theta^{t-2}\sum_{j=0}^{J-1}\left(1-\mu\tilde{\eta}_l\right)^j
    \end{align}
\end{proof}
\begin{corollary}[One round progress for \fedavgm{} and \fedcm{} with a single local step]
\label{corollary:fedavgm_fedcm}
    For any positive constants $G, \mu$, define $\mu$-strongly convex functions $f_1(\theta) := \frac{\mu}{2} \theta^2 + G\theta$ and $f_2(\theta):=\frac{\mu}{2} \theta^2 - G\theta$ satisfying assumption \ref{assum:bounded_gd} and such that $f(\theta)=\frac{1}{2}\left(f_1(\theta) + f_2(\theta)\right)$. Under cyclic participation (assumption \ref{assum:cyclic_part}) with $C=0.5$, for any $t \ge 1$ the update of \fedcm{} and \fedcm{} after a single local step, with step-size $\eta_t$ and momentum weight $\beta$ is:
    \begin{equation*}
        \theta^t=\theta^{t-1}\left( 1-\mu\tilde{\eta}_t + \beta\right) - \beta \theta^{t-2} + (-1)^t\tilde{\eta}_t G
    \end{equation*}
    where $\tilde{\eta}_t=\eta_t(1-\beta)$.
\end{corollary}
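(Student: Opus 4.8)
The plan is to obtain the claim as a direct specialization of \cref{lemma:update_fedavgm,lemma:update_fedcm} to the case $J=1$, checking that the two update rules collapse onto the same expression. Since both lemmas are already established, no new recursion has to be unrolled: the work is purely algebraic substitution together with one observation about step-size redundancy. First I would evaluate the geometric sums $D(s,J)$ at $J=1$. For \fedavgm{}, where $D(s,J)=\sum_{j=s}^{J}(1-\mu\eta_l)^j$, this yields $D(1,1)=1-\mu\eta_l$ and $D(0,0)=1$. Substituting into the \fedavgm{} update and collecting the coefficient of $\theta^{t-1}$ gives
\[
1+\beta-\tilde{\eta}_t+\tilde{\eta}_t(1-\mu\eta_l)=1+\beta-\mu\eta_l\tilde{\eta}_t,
\]
while the input term becomes $(-1)^t\tilde{\eta}_t\eta_l G$ and the coefficient of $\theta^{t-2}$ stays $-\beta$.

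Next I would repeat the substitution for \fedcm{}. Here $\hat{\beta}=\beta/J=\beta$ at $J=1$, and $D(s,J)=\sum_{j=s}^{J}(1-\mu\tilde{\eta}_l)^j$ gives $D(1,1)=1-\mu\tilde{\eta}_l$ and $D(0,0)=1$. The coefficient of $\theta^{t-1}$ then reduces to
\[
1+\beta-\eta_t+\eta_t(1-\mu\tilde{\eta}_l)=1+\beta-\mu\tilde{\eta}_l\eta_t,
\]
the input term to $(-1)^t\eta_t\tilde{\eta}_l G$, and the coefficient of $\theta^{t-2}$ to $-\beta$. Recalling $\tilde{\eta}_l=\eta_l(1-\beta)$ and $\tilde{\eta}_t=\eta_t(1-\beta)$, both the \fedavgm{} and \fedcm{} coefficients equal $1+\beta-\mu\eta_l\eta_t(1-\beta)$ and both input scalings equal $\eta_l\eta_t(1-\beta)G$, so the two algorithms already coincide for every choice of $\eta_l$.

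The one conceptual point — more a normalization than an obstacle — is that with a single local step the local and global step-sizes are redundant: only their product $\eta_l\eta_t$ enters the dynamics. I would therefore fix $\eta_l=1$ without loss of generality, folding any local scaling into $\eta_t$, so that $\mu\eta_l\tilde{\eta}_t=\mu\tilde{\eta}_t$ and $\eta_l\tilde{\eta}_t=\tilde{\eta}_t$. Under this normalization both updates read
\[
\theta^t=\theta^{t-1}\left(1+\beta-\mu\tilde{\eta}_t\right)-\beta\theta^{t-2}+(-1)^t\tilde{\eta}_t G,
\]
which is exactly the stated rule and coincides with the state/input matrices $\mathbf{A}[t],\mathbf{u}[t]$ used in the subsequent convergence analysis. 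The only care required is bookkeeping the sign of the input, which alternates as $(-1)^t$ because $f_1$ is sampled at odd $t$ and $f_2$ at even $t$; this is inherited directly from the parent lemmas and needs no separate argument.
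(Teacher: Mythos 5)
Your proposal is correct and follows essentially the same route as the paper: substitute $J=1$ into \cref{lemma:update_fedavgm,lemma:update_fedcm}, observe that only the product of the global and local step-sizes enters the dynamics, and absorb it into a single effective step-size (the paper defines $\eta_t:=\gamma_t\gamma_l$ where you normalize $\eta_l=1$, which is the same observation). The algebra for both coefficients and the $(-1)^t$ input sign checks out against the parent lemmas.
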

\begin{proof}
    Let us first define $D_{\sc Mom}(s,J,r):=\sum_{j=s}^{J}r^j$ and notice that:
    \begin{align}
        D_{\sc Mom}(0,0,r)&=1
        ,\qquad
        D_{\sc Mom}(J,J)=r^J
        \label{corollary:fedavgm_fedcm:upd_geom}
    \end{align}
    Let us call $\gamma_t$ and $\gamma_l$ global and local step-sizes in \fedavgm{} and \fedcm{}, and define an effective step $\eta_t=\gamma_t\gamma_l$.
    From \cref{lemma:update_fedavgm}, for \fedavgm{} we have that:
    \begin{align}
     \theta^t &= \theta^{t-1}\left(1+\beta-\tilde{\gamma}_t+\tilde{\gamma}_tD_{\sc Mom}(J,J,1-\mu\tilde{\gamma}_l)\right) + (-1)^{t}\tilde{\gamma}_t \gamma_l G D_{\sc Mom}(0,J-1,1-\mu\tilde{\gamma}_l) - \beta \theta^{t-2} \\
     &\overset{\text{\ref{corollary:fedavgm_fedcm:upd_geom}}}{=} \theta^{t-1}\left(1+\beta-\tilde{\gamma}_t+\tilde{\gamma}_t(1-\mu{\gamma}_l\right) + (-1)^{t}\tilde{\gamma}_t \gamma_l G - \beta \theta^{t-2} \\
     &= \theta^{t-1}\left(1+\beta-\mu\tilde{\eta}_t\right) + (-1)^{t}\tilde{\eta}_t G - \beta \theta^{t-2}
    \end{align}
    Similarly, from \cref{lemma:update_fedcm}, for \fedcm{} we have that:
    \begin{align}
     \theta^t &= \theta^{t-1}\left(1+\hat{\beta}D_{\sc Mom}(0,J-1,1-\mu\tilde{\gamma}_l)-\gamma_t+\gamma_t D_{\sc Mom}(J,J,1-\mu\tilde{\gamma}_l)\right) \nonumber\\ &+ (-1)^{t}\gamma_t \tilde{\gamma}_l G D_{\sc Mom}(0,J-1,1-\mu\tilde{\gamma}_l) - \hat{\beta} \theta^{t-2}D_{\sc Mom}(0,J-1,1-\mu\tilde{\gamma}_l) \\
     &\overset{\text{\ref{corollary:fedavgm_fedcm:upd_geom}}}{=} \theta^{t-1}\left(1+\hat{\beta}-\gamma_t+\gamma_t (1-\mu\tilde{\gamma}_l)\right) + (-1)^{t}\gamma_t \tilde{\gamma}_l G - \hat{\beta} \theta^{t-2} \\
     &= \theta^{t-1}\left(1+\beta-\mu\tilde{\eta}_t\right) + (-1)^{t} \tilde{\eta}_t G - \beta \theta^{t-2}
    \end{align}
\end{proof}

\subsubsection{Proof of \cref{lemma:lb_construction}} (\fedavgm{} and \fedcm{} on two one-dimensional clients)
\label{proof:lemma:lb_construction}

    We assume each client is assigned one of the two below simple one-dimensional functions for any given $\mu$ and $G$, and assume functions are sampled cyclically, {\ie}:
    \begin{align}
        f^t(\theta) &:= \left\{ 
        \begin{alignedat}{2}
            &f_1(\theta) := \frac{\mu}{2} \theta^2 + G\theta &\quad& \text{if $t$ is odd} \\
            &f_2(\theta):=\frac{\mu}{2} \theta^2 - G\theta  &\quad& \text{otherwise}
        \end{alignedat}
        \right.
    \end{align}
    Both functions are $\mu$-strongly convex and $f(\theta)=\frac{1}{2}\left(f_1(\theta) + f_2(\theta)\right)=\frac{\mu}{2}(\theta)^2$, which has global minimizer at $\theta^*=0$.
    By \cref{lemma:update_fedavgm,lemma:update_fedcm}, the update of \fedavgm{} and \fedcm{} can be written as:
\begin{align}
        \theta^t \leftarrow p_t^{(a)}\theta^{t-1}+q_t^{(a)}(-1)^tG - r_t^{(a)}\theta^{t-2} 
\end{align}
    We can formalize the analysis of the above as a second-order discrete-time linear system using state-space representation.
    A discrete-time linear system can be represented in state-space form as:
    \begin{align}
    \left\{\begin{alignedat}{2}
    \label{proof:fedcmlb:ssf}
        \mathbf{z}[t] &= \mathbf{A}[t]\mathbf{z}[t-1] + \mathbf{B}\mathbf{u}[t] \\
        \mathbf{y}[t] &= \mathbf{C}\mathbf{z}[t] 
    \end{alignedat}\right.
    \end{align}
    where:
    \begin{align*}
        & \mathbf{z}[t] =
            \begin{pmatrix}
                z_1[t] &
                z_2[t]
            \end{pmatrix}^\top=
            \begin{pmatrix}
                \theta^t &
                \theta^{t-1}
            \end{pmatrix}^\top,
        && \mathbf{u}[t] =
            \begin{pmatrix}
            (-1)^t q_t^{(a)} G 
            \end{pmatrix} \\
        & \mathbf{A}[t]=
            \begin{pmatrix}
                p_t^{(a)} & -r_t^{(a)} \\
                1                       & 0
            \end{pmatrix}
        && \mathbf{B} = 
            \begin{pmatrix}
                1 &
                0
            \end{pmatrix}^\top,
        && \mathbf{C} =
            \begin{pmatrix}
                1 &
                0
            \end{pmatrix}
        &&%
    \end{align*}
    Given an initial state condition $\mathbf{z}[1]=\begin{pmatrix}
        \theta^1 & \theta^0
    \end{pmatrix}^\top$, with $\theta^1=\theta^0$, the result of the lemma follows from unrolling the recursion and defining the state transition matrix $\Psi(t,k):=\prod_{s=k+1}^t\mathbf{A}[s]$.

\subsubsection{Proof of \cref{thm:lb_constant_step}}(Lower Bound under Constant Step-size)

\label{proof:thm:lb_constant_step}
    Let $\mathbf{z}[t]$ and $\mathbf{y}[t]$ be the state-space representation and the output of the discrete linear time-invariant (LTI) system constructed in \cref{proof:fedcmlb:ssf} of \cref{lemma:lb_construction}, and consider only one local step (\ie{} $J=1$). By \cref{corollary:fedavgm_fedcm}, this means analyzing the system in \cref{proof:fedcmlb:ssf} with state and input matrices as:
    \begin{align}
        &\mathbf{A}[t]=
            \begin{pmatrix}
                1+\beta-\mu\tilde{\eta}_t & -\beta \\
                1                       & 0
            \end{pmatrix},
        &\mathbf{u}[t]=
            \begin{pmatrix}
                (-1)^t \tilde{\eta}_tG
            \end{pmatrix}
    \end{align}
    We denote $\mathbf{y}_{ZIR}[t]:=\mathbf{C}\Psi(t,1) \mathbf{z}[1]\vphantom{\sum_{k=1}^{t-1}}$ as the \textbf{zero-input response} and $\mathbf{y}_{ZSR}[t]:=\mathbf{C}\sum_{k=1}^{t}\Psi(t,k)\mathbf{B}\mathbf{u}[k]$ as the \textbf{zero-state response}, which can be studied separately thanks to linearity.
    We assume a constant step size, i.e., $\eta_t = \eta$ (or equivalently $\tilde{\eta}_t = \tilde{\eta}$) $\forall t$.
    \paragraph{Solution of zero-input response}
    
    Under constant learning rate the state matrix $\mathbf{A}[t]$ is $\mathbf{A}[t]=\mathbf{A} \,\forall t$. Therefore the state-transition matrix  becomes $\Psi(t,1)=\mathbf{A}^{t-1}$ and we have that $\mathbf{A}^{t-1} \mathbf{z}[1] \to 0 \iff \mathbf{A}^{t-1} \to 0$ as $t \to \infty$
    for any given initial state $\mathbf{z}[1]\neq \mathbf{0}$. The asymptotic convergence of the response depends on the eigenvalues of the matrix $\mathbf{A}$ being strictly less than one. The eigenvalues of $\mathbf{A}$ are the solutions $\lambda_{1,2}$ to the associated characteristic equation, and to find the values of $\eta,\beta$ which satisfy the condition we apply the Jury stability criterion: 
    \begin{align}
        P(\lambda) :=\det(\lambda \mathbf{I} -\mathbf{A}) &=
        \det \begin{pmatrix}
                \lambda - (1+\beta-\mu\tilde{\eta}) & \beta \\
                -1           & \lambda
            \end{pmatrix} \\
            &= \lambda\left(\lambda - (1+\beta-\mu\tilde{\eta})\right) + \beta \\
            &= \lambda^2 - (1+\beta-\mu\tilde{\eta})\lambda + \beta
    \end{align}
    \begin{align}
        \bullet \;\text{\underline{\emph{condition 1:}}} \; &\mathbf{\beta < |1|}: \beta < 1 \Rightarrow \beta \in [0,1) \\
        \bullet \;\text{\underline{\emph{condition 2:}}} \; &\mathbf{P(1) > 0}: \nonumber \\
            &\Rightarrow  1 -(1+\beta-\mu\tilde{\eta}) + \beta > 0 
            &\Rightarrow & \mu(1-\beta)\eta > 0 &
            \Rightarrow &\eta>0 \\
        \bullet \;\text{\underline{\emph{condition 3:}}} \; &\mathbf{P(-1) > 0}: \nonumber\\
            &\Rightarrow  1 + (1+\beta-\mu\tilde{\eta}) + \beta > 0 &
            \Rightarrow & 2(1+\beta) > \mu\tilde{\eta} &
            \Rightarrow & \eta < \frac{2(1+\beta)}{\mu(1-\beta)}
        \end{align}

    In the above steps we have used the definition $\tilde{\eta} := (1-\beta)\eta$ from \cref{lemma:lb_construction}.
    Summarizing, under the condition
    \begin{equation}
        \eta \in \left( 0, \frac{2(1+\beta)}{\mu(1-\beta)}\right)
        \quad \text{with } \beta \in [0,1)
        \label{eq:cond_lr}
    \end{equation}
    the norm of $\mathbf{y}_{ZIR}[t]$ is monotonically decreasing w.r.t. $t$ and converges to zero as $t \rightarrow \infty$.
    \paragraph{Solution of the zero-state response}
    
    Proceeding with the analysis of the zero-state response $\mathbf{y}_{ZSR}[t]$, we show that the presence of the periodic term (due to the cyclic client switching) induces an oscillatory dynamic that does not decrease to zero and that depends on $G$. 
    Since the input is 2-periodic, the zero-state response converges to a limit cycle of the same period. Namely, for a some fixed $\mathbf{c} \in \mathbb{R}^2$, we search for a solution of the \textbf{periodic form} $\mathbf{z}[t]=(-1)^t \mathbf{c}$:
    \begin{align}
     \mathbf{z}[t] &= \mathbf{A}\mathbf{z}[t-1] + \mathbf{B}\mathbf{u}[t] \\
    \overset{\text{periodic form}}{\Rightarrow}
    (-1)^t \mathbf{c} &= \mathbf{A}(-1)^{t-1}\mathbf{c} + (-1)^t \tilde{\eta}G \mathbf{B}\\
    \overset{\text{division by } (-1)^t}{\Rightarrow} \quad\;\; \mathbf{c}&= -\mathbf{A}\mathbf{c} + \tilde{\eta}G\mathbf{B} \\
   \overset{\text{group } \mathbf{c} \text{ to l.h.s}}{\Rightarrow}\qquad \mathbf{c}&= (\mathbf{I}+\mathbf{A})^{-1}\tilde{\eta}G\mathbf{B} \\
      &= \begin{pmatrix}
        \frac{\eta(1-\beta)G}{2(1+\beta) - \mu\eta(1-\beta)} & -\frac{\eta(1-\beta)G}{2(1+\beta) - \mu\eta(1-\beta)}
    \end{pmatrix}^\top
    \end{align}
    This yields that:
    \begin{align}
        \mathbf{y}_{ZSR}[t]&= \mathbf{Cz}[t] = \left(1 \;\; 0\right)  \mathbf{z}[t] \\
        &= (-1)^t  \frac{\eta(1-\beta)G}{2(1+\beta) - \mu\eta(1-\beta)}\\
        \Rightarrow \; \big|\mathbf{y}_{ZSR}[t]\big| &= \frac{\eta(1-\beta)G}{2(1+\beta) - \mu\eta(1-\beta)} \label{eq:lim_norm_theta}
    \end{align}

    \paragraph{Lower and Upper bounds}
    Combining the previous results, we have that $\mathbf{y}[t] = \mathbf{y}_{ZIR}[t] + \mathbf{y}_{ZSR}[t]$.
    The first term in the r.h.s. starts at $\theta^0$ and under condition in \cref{eq:cond_lr} is converging exponentially to zero. The second term is periodic and the amplitude of the limit cycle increases monotonically with the learning rate $\eta$ (see \cref{eq:lim_norm_theta}).
    Choosing a small enough value of $\eta$ which satisfies the condition (\ref{eq:cond_lr}), \eg{}
    \begin{subequations}
        \begin{empheq}[right=\empheqrbrace{\qquad\text{with } T>1,\,0<c_1<c_2\leq2}]{align}
            \eta &>\frac{c_1}{\mu T}\left(\frac{1+\beta}{1-\beta}\right) \label{proof:lb_constant_eta1}\\
            \eta &< \frac{c_2}{\mu T}\left(\frac{1+\beta}{1-\beta}\right) \label{proof:lb_constant_eta2}
        \end{empheq}
    \end{subequations}

    we have that:
    \begin{align}
        |\theta^\infty| &= \lim_{t\to\infty} |\theta^t| = \lim_{t\to\infty} \big|\underbrace{\mathbf{y}_{ZIR}[t]}_{\text{vanishing}}  + \underbrace{\mathbf{y}_{ZSR}[t]}_{\text{periodic}} \big|= \big|\mathbf{y}_{ZSR}[t]\big|\\
        \theta^\infty
        &\overset{\text{inject (\ref{proof:lb_constant_eta1}) in (\ref{eq:lim_norm_theta})}}{\geq} \frac{c_1(1+\beta)}{\mu T(1-\beta)} \frac{(1-\beta)G}{2(1+\beta) - \frac{c_1}{T}(1+\beta)} \\
        &= \frac{c_1 G}{\mu(2T-c_1)} \geq \Omega\left(  \frac{G}{\mu T}\right) \\
        \theta^\infty &\overset{\text{inject (\ref{proof:lb_constant_eta2}) in (\ref{eq:lim_norm_theta})}}{\leq} \frac{c_2(1+\beta)}{\mu T(1-\beta)} \frac{(1-\beta)G}{2(1+\beta) - \frac{c_2}{T}(1+\beta)} \\
        &= \frac{c_2 G}{\mu(2T-c_2)} \leq \mathcal{O}\left(  \frac{G}{\mu T}\right)
    \end{align}

We finish the proof by noting that $f(\theta) =\frac{\mu}{2}\theta^2$, with minimum $f(\theta^*)=0$ at $\theta^*=0$.\\

\subsubsection{Proof of \cref{thm:lb_decreasing_step}}(Lower Bound under Decreasing Step-size)
\label{proof:thm:lower_bound_decreasin_stepsize}

To study the original system from eq. (\ref{proof:fedcmlb:ssf}), we first split matrix $A[t]$ in two terms:
\begin{equation}
\mathbf{A}[t]=
    \begin{pmatrix}
        1+\beta-\mu\tilde{\eta}_t & -\beta \\
        1                       & 0
    \end{pmatrix} =
    \underbrace{\begin{pmatrix}
    1+\beta & -\beta \\
        1                       & 0
    \end{pmatrix}}_{:=\mathbf{A}^\infty} + 
    \underbrace{\begin{pmatrix}
        -\frac{\mu\tilde{\eta}}{t^\alpha} & 0 \\
        0 & 0
    \end{pmatrix}}_{:=\mathbf{E}[t]}
\end{equation}

With this notation, the system takes the following form:
\begin{equation}
    \label{eq:system_Ainf_E}
    \mathbf{z}[t] = \left( \mathbf{A}^\infty + \mathbf{E}[t]\right)\mathbf{z}[t-1] +\mathbf{B}\mathbf{u}[t]
\end{equation}
Since the system is time-variant, we cannot directly use the eigenvalues of $\mathbf{A}^\infty$ to analyze its stability and we will need to look at the evolution of the state. To this end, we first transform the system by diagonalizing the part corresponding to $\mathbf{A}^\infty$. We have that
    \begin{align}
        \mathbf{A}^\infty &=\mathbf{P}\mathbf{\Lambda}\mathbf{P}^{-1} &
        \mathbf{P}&=\begin{pmatrix}
            1 & \beta \\
            1 & 1
        \end{pmatrix} &
        \mathbf{\Lambda} &= \begin{pmatrix}
            1 & 0 \\
            0 & \beta
        \end{pmatrix} &
        \mathbf{P}^{-1}=\frac{1}{\beta-1}\begin{pmatrix}
            -1 & \beta \\
            1  & -1
        \end{pmatrix}
    \end{align}
and we transform the system (\ref{eq:system_Ainf_E}) as follows:

    \begin{align}
        \bar{\mathbf{z}}[t] &= \mathbf{P}^{-1}\mathbf{z}[t]  \\  
        &= \mathbf{P}^{-1}(\mathbf{A}^{\infty} + \mathbf{E}[t])\mathbf{P} \bar{\mathbf{z}}[t-1] + \mathbf{P}^{-1}\mathbf{B}\mathbf{u}[t]\\
        &= \big(\underbrace{\mathbf{P}^{-1}\mathbf{A}^{\infty}\mathbf{P}}_{\mathbf{\Lambda}} + \underbrace{\mathbf{P}^{-1}\mathbf{E}[t]\mathbf{P}}_{:=\mathbf{H}[t]}\big) \bar{\mathbf{z}}[t-1] + \underbrace{\mathbf{P}^{-1}\mathbf{B}}_{:=\mathbf{W}}\mathbf{u}[t] \\
        &=(\mathbf{\Lambda} + \mathbf{H}[t])\bar{\mathbf{z}}[t-1] +\mathbf{W}\mathbf{u}[t]
    \end{align}
with
\begin{align}
\mathbf{H}[t] &= -\frac{\mu\tilde{\eta}}{(\beta-1)t^\alpha}\begin{pmatrix}
            -1 & \beta \\ 1 & -1
        \end{pmatrix}
        \begin{pmatrix}
            1 & 0 \\ 0 & 0
        \end{pmatrix}
        \begin{pmatrix}
            1 & \beta \\ 1 & 1                
        \end{pmatrix} \\
        &= -\frac{\mu\tilde{\eta}}{(\beta-1)t^\alpha}
        \begin{pmatrix}
            -1 & -\beta \\ 1 & \beta
        \end{pmatrix} \\
        \mathbf{W} &=\frac{1}{1-\beta}\begin{pmatrix}
            1 \\ -1
        \end{pmatrix}
\end{align}
    
    This leads to:
    \begin{equation}
        \begin{aligned}
        \bar{\mathbf{z}}[t] &= \left[
        \begin{pmatrix}
            1 & 0 \\
            0 & \beta
        \end{pmatrix}
        - \frac{\mu\tilde{\eta}}{(1-\beta)t^\alpha}
        \begin{pmatrix}
            1 & \beta \\
            -1 & -\beta
        \end{pmatrix}
        \right] \bar{\mathbf{z}}[t-1] + \frac{1}{1-\beta}\begin{pmatrix}
            1 \\ -1
        \end{pmatrix} \mathbf{u}[t]\\
        &= \left[
        \begin{pmatrix}
            1 & 0 \\
            0 & \beta
        \end{pmatrix}
        - \frac{\mu\eta}{t^\alpha}
        \begin{pmatrix}
            1 & \beta \\
            -1 & -\beta
        \end{pmatrix}
        \right] \bar{\mathbf{z}}[t-1] + 
        \frac{\eta}{t^\alpha}
        \begin{pmatrix}
            1 \\ -1
        \end{pmatrix} G(-1)^t
        \end{aligned}
    \end{equation}
where we used the definitions $\tilde{\eta}=(1-\beta)\eta$ and $\mathbf{u}[t] = (-1)^t \tilde{\eta}_t G$ from \cref{lemma:lb_construction}.
We proceed by explicitly writing the transformed-state equation component-wise, i.e.,
\begin{subequations}
    \begin{empheq}[left=\empheqlbrace]{align}
        \bar{z}_1[t] &= \left(1-\frac{\mu\eta}{t^\alpha} \right)\bar{z}_1[t-1] -\underbrace{\frac{\mu\eta\beta}{t^\alpha}\bar{z}_2[t-1]}_{:=r_1[t]} + \underbrace{\frac{\eta}{t^\alpha}G (-1)^t}_{:=r_3[t]}\label{eq:z1_bar}\\
        \bar{z}_2[t] &= \left(\beta + \frac{\mu\eta\beta}{t^\alpha} \right)\bar{z}_2[t-1] + \underbrace{\frac{\mu\eta}{t^\alpha}\bar{z}_1[t-1]}_{:=r_2[t]} - \underbrace{\frac{\eta}{t^\alpha}G (-1)^t}_{:=r_3[t]}
        \label{eq:z2_bar}
    \end{empheq}
\end{subequations}

Now, we unroll these expressions back to the time $t=0$. Specifically, for $\bar{z}_1[t]$ we have:
\begin{align}
\bar{z}_1[t] &= \left(1-\frac{\mu\eta}{t^\alpha} \right)\bar{z}_1[t-1] -r_1[t] + r_3[t]\\
&= \left(1-\frac{\mu\eta}{t^\alpha} \right)
\left[\left(1-\frac{\mu\eta}{(t-1)^\alpha} \right)\bar{z}_1[t-2] -r_1[t-1] + r_3[t-1]\right]
-r_1[t] + r_3[t] \\
&=\left(1-\frac{\mu\eta}{t^\alpha}\right)\left(1-\frac{\mu\eta}{(t-1)^\alpha}\right)\bar{z}_1[t-2]
-\left(1-\frac{\mu\eta}{t^\alpha}\right)r_1[t-1]-r_1[t] \\ &+ \left(1-\frac{\mu\eta}{t^\alpha}\right)r_3[t-1] + r_3[t] \\
&\qquad\vdots\\
&= \prod_{k=2}^{t}\left(1-\frac{\mu\eta}{k^\alpha}\right)\bar{z}_1[1]
+ \sum_{s=2}^{t} \prod_{k=s+1}^{t}\left(1-\frac{\mu\eta}{k^\alpha}\right) \big(r_3[s] - r_1[s]\big)
\end{align} 
Using similar steps for $\bar{z}_2[t]$ (omitted here for brevity), and defining the shorthand expressions
\begin{align}
    \Psi_1(t,s,\alpha) & := \prod_{k=s+1}^{t}\left(1-\frac{\mu{\eta}}{k^\alpha}\right) \\
    \Psi_2(t,s,\alpha) &:= \prod_{k=s+1}^{t}\left(\beta+\frac{\mu{\eta}\beta}{k^\alpha}\right)
\end{align}

we finally rewrite the original system as

\begin{subequations}
    \label{proof:lb_decreasing_step:z_unrolled}
    \begin{empheq}[left=\empheqlbrace]{align}
        \bar{z}_1[t] &= \Psi_1(t,1,\alpha)\bar{z}_1[1] - \sum_{s=2}^{t}\Psi_1(t,s,\alpha)r_1[s] + \sum_{s=2}^{t}\Psi_1(t,s,\alpha) r_3[s] \label{proof:lb_decreasing_step:z1_unrolled} \\
        \bar{z}_2[t] &= \Psi_2(t,1,\alpha)\bar{z}_2[1] + \sum_{s=2}^{t}\Psi_2(t,s,\alpha)r_2[s] - \sum_{s=2}^{t}\Psi_2(t,s,\alpha) r_3[s] \label{proof:lb_decreasing_step:z2_unrolled}
    \end{empheq}
\end{subequations}
Since $\bar{z}_1[t], \bar{z}_2[t]$ are coupled in the system in \cref{proof:lb_decreasing_step:z_unrolled}, in the following we use a technique based on a self-consistent ansatz. That is, we assume an asymptotic form for $\bar{z}_1[t]$ and then verify that the resulting solution for $\bar{z}_2[t]$ leads to a conclusion consistent with the hypothesis.
Since the behavior of the system substantially changes when $\alpha=1$ and $\alpha>1$, we separately analyze the three cases.
\paragraph{Convergence for $0 < \alpha < 1$}
Starting from $\bar{z}_2[t]$, we analyze it assuming $\bar{z}_1[t] \sim c_1(-1)^t/t^\epsilon$, for some arbitrarily small $\epsilon > 0$ and some constant $c_1>0$. Under this assumption, from \cref{proof:lb_decreasing_step:z2_unrolled} we have that:
\begin{align}
    &\lim_{t \to \infty} \bar{z}_2[t] = \nonumber \\
    &\lim_{t \to \infty} 
    \left[
        \Psi_2(t,1,\alpha)\bar{z}_2[1] + \mu\eta\sum_{s=2}^{t}\Psi_2(t,s,\alpha)\frac{1}{s^\alpha}\bar{z}_1[s-1] - \eta G\sum_{s=2}^{t}\Psi_2(t,s,\alpha) \frac{(-1)^s}{s^\alpha}
    \right] \label{proof:lb_decreasing_step:z2_conv_base} \\
    \overset{\text{\ref{corollary:lim_trans_matrix_2}}}{=}& \lim_{t \to \infty} 
    \left[
        \beta^t \bar{z}_2[1] - \mu\eta c_1\sum_{s=2}^{t}\Psi_2(t,s,\alpha)\frac{(-1)^s}{s^{\alpha+\epsilon}} - \eta G\sum_{s=2}^{t}\Psi_2(t,s,\alpha) \frac{(-1)^s}{s^\alpha}
    \right]  \\
    \overset{\text{\ref{lemma:lim_sum_alt_trans_matrix_2}}}{=}& \lim_{t \to \infty} 
    \left[
        \beta^t \bar{z}_2[1] - \mu\eta c_1\frac{(-1)^t}{t^{\alpha+\epsilon}} 
        - \eta G\frac{(-1)^t}{t^\alpha}
    \right] \\
    =& \lim_{t \to \infty} \eta G\frac{(-1)^{t+1}}{t^\alpha} \label{proof:lb_decreasing_step:z2_finalrate_first}
\end{align}
Where in the second passage we substituted the hypothesis for $\bar{z}_1[t]$ and in third passage we used \cref{lemma:lim_sum_alt_trans_matrix_2} twice, with $n=\alpha+\epsilon$ for the second term and $n=\alpha$ for the third term. 
In the last passage we considered that, since $\epsilon>0$, the third term is asymptotically slower than both the first and the second.
Using the results obtained for $\bar{z}_2[t]$, proceeding from \cref{proof:lb_decreasing_step:z1_unrolled} we have that:
\begin{align}
    &\lim_{t \to \infty} \bar{z}_1[t] = \nonumber \\
    &\lim_{t \to \infty} 
    \left[
        \Psi_1(t,1,\alpha)\bar{z}_1[1] - \mu\eta\beta\sum_{s=2}^{t}\Psi_1(t,s,\alpha)\frac{1}{s^\alpha}\bar{z}_2[s-1] + \eta G\sum_{s=2}^{t}\Psi_1(t,s,\alpha) \frac{(-1)^s}{s^\alpha}
    \right] \label{proof:lb_decreasing_step:z1_conv_base}\\
    \overset{\text{\ref{corollary:lim_trans_matrix_1}}}{=}&\lim_{t \to \infty} 
    \left[
        \exp\left( -t^{1-\alpha}\right)\bar{z}_1[1] - \mu\eta^2\beta G\sum_{s=2}^{t}\Psi_1(t,s,\alpha)\frac{(-1)^s}{s^{2\alpha}} + \eta G\sum_{s=2}^{t}\Psi_1(t,s,\alpha) \frac{(-1)^s}{s^\alpha}
    \right] \\
    \overset{\text{\ref{lemma:lim_sum_alt_trans_matrix_gen}}}{=}&\lim_{t \to \infty} 
    \left[
        \exp\left( -t^{1-\alpha}\right)\bar{z}_1[1] - \mu\eta^2\beta G \frac{(-1)^t}{t^{2\alpha}} + \eta G \frac{(-1)^t}{t^\alpha}
    \right] \\
    =&\lim_{t \to \infty} \eta G \frac{(-1)^t}{t^\alpha} \label{proof:lb_decreasing_step:z1_finalrate_first}
\end{align}
Where in the second passage we substituted the result for $\bar{z}_2[t]$ and in third passage we used \cref{lemma:lim_sum_alt_trans_matrix_gen} twice, with $n=2\alpha$ for the second term and $n=\alpha$ for the third term. 
So, for $0 <\alpha < 1$, assuming $\bar{z}_1[t] \sim c_1 (-1)^t/t^\epsilon$ leads to the conclusion that $\bar{z}_1[t] \to \eta G (-1)^t/t^{\alpha}$, so the assumption is valid for $\epsilon=\alpha$ and $c_1=\eta G$, and any substitution with $\epsilon \in (0,\alpha)$ is valid.

\paragraph{Convergence for $\alpha = 1$}
Similarly as before, starting from the assumption $\bar{z}_1[t] \sim (c_1 - c_2(-1)^t)/t^\epsilon$, from \cref{proof:lb_decreasing_step:z2_conv_base} we have that:
\begin{align}
    &\lim_{t \to \infty} \bar{z}_2[t] = \nonumber \\
    \overset{\text{\ref{corollary:lim_trans_matrix_2}}}{=}& \lim_{t \to \infty} 
    \left[
        \beta^t \bar{z}_2[1] + \mu\eta \sum_{s=2}^{t}\Psi_2(t,s,1)\frac{c_1+c_2(-1)^s}{s^{1+\epsilon}} - \eta G\sum_{s=2}^{t}\Psi_2(t,s,1) \frac{(-1)^s}{s}
    \right]  \\
    \overset{\text{\ref{lemma:lim_sum_alt_trans_matrix_2}}}{=}& \lim_{t \to \infty} 
    \left[
        \beta^t \bar{z}_2[1] + \mu\eta c_1\frac{1}{t^{1+\epsilon}} + \mu\eta c_2 \frac{(-1)^t}{t^{1+\epsilon}} 
        - \eta G\frac{(-1)^t}{t}
    \right] \\
    =& \lim_{t \to \infty} \eta G\frac{(-1)^{t+1}}{t} \label{proof:lb_decreasing_step:z2_finalrate_second}
\end{align}
Using the results obtained for $\bar{z}_2[t]$, proceeding from \cref{proof:lb_decreasing_step:z1_conv_base} we have that:
\begin{align}
    &\lim_{t \to \infty} \bar{z}_1[t] = \nonumber \\
    \overset{\text{\ref{corollary:lim_trans_matrix_1}}}{=}&\lim_{t \to \infty} 
    \left[
        \frac{1}{t^{\mu\eta}}\bar{z}_1[1] - \mu\eta^2\beta G\sum_{s=2}^{t}\Psi_1(t,s,1)\frac{(-1)^s}{s^{2}} + \eta G\sum_{s=2}^{t}\Psi_1(t,s,1) \frac{(-1)^s}{s}
    \right] \\
    \overset{\text{\ref{lemma:lim_sum_alt_trans_matrix_gen}}}{=}&\lim_{t \to \infty} 
    \left[
        \frac{1}{t^{\mu\eta}}\bar{z}_1[1] - \mu\eta^2\beta G \frac{(-1)^t}{t^{\mu\eta}} + \eta G \frac{(-1)^t}{t}
    \right] \\
    =&\lim_{t \to \infty} 
    \left\{
    \begin{alignedat}{2}
        &    
        \frac{1}{t^{\mu\eta}}\bar{z}_1[1] - \mu\eta^2\beta G \frac{(-1)^t}{t^{\mu\eta}} + \eta G \frac{(-1)^t}{t}
        &\quad& \text{if } \eta \in (0,1/\mu) \\
        &
        \frac{1}{t^{\mu\eta}}\bar{z}_1[1] - \mu\eta^2\beta G \frac{(-1)^t}{t^{\mu\eta}} + \eta G \frac{(-1)^t}{t^{\mu\eta}}
        &\quad& \text{if } \eta \in [1/\mu, 2/\mu) \\
    \end{alignedat} 
    \right.\\
    =&\lim_{t \to \infty} \frac{c_1 - c_2(-1)^t}{t^{\mu\eta}} \label{proof:lb_decreasing_step:z1_finalrate_second}
    \intertext{In particular:}
    &c_1=\bar{z}_1[1],\quad
    c_2=\left\{
    \begin{alignedat}{2}
        &    
        \mu\eta^2\beta G 
        &\quad& \text{if } \eta \in (0,1/\mu) \\
        &
        \mu\eta^2\beta G - \eta G
        &\quad& \text{if } \eta \in [1/\mu, 2/\mu) \\
    \end{alignedat} 
    \right.
\end{align}
Where in the second passage we used \cref{lemma:lim_sum_alt_trans_matrix_gen} twice, with $n=2$ for the second term and $n=1$ for the third term, and considered the constraint $\mu\eta<2$.
In conclusion, for $\alpha = 1$, assuming $\bar{z}_1[t] \sim (c_1 - c_2 (-1)^t)/t^\epsilon$ leads to the conclusion that $\bar{z}_1[t] \to (c_1 - c_2 (-1)^t)/t^{\mu\eta}$, so the assumption is valid for $\epsilon=\mu\eta$ and $c_1,c_2$ as above, and any substitution with $\epsilon \in (0,\mu\eta)$ is valid.

Let us notice that, while it is possible to make $c_2=0$ (\ie{} independent on $G$) by choosing $\beta=(\mu\eta)^{-1}$ and $\eta>1/\mu$ (otherwise resulting in the incompatible requirement $\beta=1$), this does not result in overcoming the dependence on $G$ for the original state $z_1[t]$. In fact, since $z_1[t]=\bar{z}_1[t]+\beta \bar{z}_2[t]$ (\cref{proof:lb_decreasing_step:original_system}), for $\eta>1/\mu$ $\bar{z}_2[t]$ in \cref{proof:lb_decreasing_step:z2_finalrate_second} dominates the rate.
\paragraph{Convergence for $\alpha > 1$}
Similarly as before, starting from the assumption $\bar{z}_1[t] \sim c_1$, from \cref{proof:lb_decreasing_step:z2_conv_base} we have that:
\begin{align}
    &\lim_{t \to \infty} \bar{z}_2[t] = \nonumber \\
    \overset{\text{\ref{corollary:lim_trans_matrix_2}}}{=}& \lim_{t \to \infty} 
    \left[
        \beta^t \bar{z}_2[1] + \mu\eta c_1\sum_{s=2}^{t}\Psi_2(t,s,\alpha)\frac{1}{s^\alpha} - \eta G\sum_{s=2}^{t}\Psi_2(t,s,\alpha) \frac{(-1)^s}{s^\alpha}
    \right]  \\
    \overset{\text{\ref{lemma:lim_sum_alt_trans_matrix_2}}}{=}& \lim_{t \to \infty} 
    \left[
        \beta^t \bar{z}_2[1] + \mu\eta c_1\frac{1}{t^\alpha} 
        - \eta G\frac{(-1)^t}{t^\alpha}
    \right] \\
    =& \lim_{t \to \infty} \frac{\mu\eta c_1 + \eta G(-1)^{t+1}}{t^\alpha} \label{proof:lb_decreasing_step:z2_finalrate_third}
\end{align}
Using the results obtained for $\bar{z}_2[t]$, proceeding from \cref{proof:lb_decreasing_step:z1_conv_base} we have that:
\begin{align}
    &\lim_{t \to \infty} \bar{z}_1[t] = \nonumber \\
    \overset{\text{\ref{corollary:lim_trans_matrix_1}}}{=}&\lim_{t \to \infty} 
    \left[
        c\bar{z}_1[1] - \mu\eta\beta \sum_{s=2}^{t}\Psi_1(t,s,\alpha)\frac{\mu\eta c_1 + \eta G(-1)^s}{s^{2\alpha}} + \eta G\sum_{s=2}^{t}\Psi_1(t,s,\alpha) \frac{(-1)^s}{s^\alpha}
    \right] \\
    \overset{\text{\ref{lemma:lim_sum_trans_matrix_1}+\ref{lemma:lim_sum_alt_trans_matrix_gen}}}{=}&\lim_{t \to \infty} 
    \left[
        c\bar{z}_1[1] - \mu\eta\beta c_1 g(2\alpha) - \mu\eta\beta f(2\alpha)\eta G + \eta G f(\alpha)
    \right]
\end{align}
where $c$ is a positive constant as in \cref{corollary:lim_trans_matrix_2} and $g(\alpha), f(\alpha)$ are functions in $\alpha$, constant in $t$, determining the proper value at convergence of $\bar{z}_1[t]$ (as bounded in \cref{lemma:lim_sum_trans_matrix_1,lemma:lim_sum_alt_trans_matrix_gen}).
Assuming initialization at optimum (\ie{} $\bar{z}_1[1]=0$), we solve for $c_1$:
\begin{align}
    \lim_{t \to \infty} \bar{z}_1[t]&=c_1=\eta G \frac{f(\alpha)-\mu\eta\beta f(2\alpha)}{1+\mu\eta\beta g(2\alpha)} \\
    &\overset{\eta \sim \mathcal{O}(1/\mu)}{=} \Theta\left(\frac{G}{\mu}\right) \label{proof:lb_decreasing_step:z1_finalrate_third}
\end{align}
\paragraph{Convergence of the original system $\mathbf{z}[t]$}
Recalling that $\mathbf{z}[t] = \mathbf{P}\bar{\mathbf{z}}[t]$, we have that:
\begin{align}
    \mathbf{z}[t] = \begin{pmatrix}
            1 & \beta \\
            1 & 1
        \end{pmatrix} \bar{\mathbf{z}}[t] 
        &=
        \begin{pmatrix}
            \bar{z}_1[t] + \beta\bar{z}_2[t] \\
            \bar{z}_1[t] + \bar{z}_2[t]
        \end{pmatrix}
        \label{proof:lb_decreasing_step:original_system}
    \end{align}
So, from \cref{proof:lb_decreasing_step:z2_finalrate_first,proof:lb_decreasing_step:z1_finalrate_first},\cref{proof:lb_decreasing_step:z2_finalrate_second,proof:lb_decreasing_step:z1_finalrate_second} and from \cref{proof:lb_decreasing_step:z2_finalrate_third,proof:lb_decreasing_step:z1_finalrate_third}, we have that:
\begin{align}
    \lim_{t \to \infty} \left|z_1[t] \right|&=
    \left\{
    \begin{alignedat}{3}
        & \Theta\left( \frac{G}{\mu t^\alpha}\right) &\qquad& \text{if } 0<\alpha<1 \\
        & \Theta\left( \frac{G}{\mu t^{\min(\mu\eta,1)}}\right) &\qquad& \text{if } \alpha=1 \\
        & \Theta\left( \frac{G}{\mu}\right) &\qquad& \text{if } \alpha>1 \\
    \end{alignedat}
    \right.
\end{align}
We finish the proof by noting that $f(\theta) =\frac{\mu}{2}\theta^2$, with minimum $f(\theta^*)=0$ at $\theta^*=0$.

\end{document}